\documentclass[twoside]{article}
\usepackage[accepted]{aistats2023}
\usepackage{lipsum,booktabs}
\usepackage{amsmath,mathrsfs,amssymb,amsfonts,bm,bbm,enumitem,amsthm,xfrac, xspace}
\usepackage{rotating,verbatim,subfig,floatrow}
\usepackage{pdflscape}
\usepackage{tcolorbox,xcolor,color}
\usepackage{hyperref,url,dsfont,nicefrac,natbib, minitoc}
\usepackage{algorithmic,algorithm}
\usepackage[thinlines]{easytable}
\usepackage[export]{adjustbox}
\floatsetup[table]{capposition=top}
\hypersetup{
    colorlinks,
    breaklinks,
    linkcolor = blue,
    citecolor = blue,
    urlcolor  = blue,
}

\allowdisplaybreaks

\renewcommand{\tilde}{\widetilde}
\renewcommand{\hat}{\widehat}
\def \C {\mathcal{C}}
\def \F {\mathcal{F}}
\def \N {\mathcal{N}}

\def \O {\mathcal{O}}
\def \Ot {\tilde{\O}}
\def \P {\mathbb{P}}
\def \E {\mathbb{E}}
\def \R {\mathbb{R}}

\def \T {\top}
\def \X {\mathcal{X}}
\def \Xt {\tilde{X}}

\def \x {\mathbf{x}}

\def \DReg {R}
\def \teq {\triangleq}
\def \thetah {\hat{\theta}}
\def \thetat {\tilde{\theta}}
\def \thetab {\bar{\theta}}
\def \Vt {\tilde{V}}
\def \diff {\mathrm{d}}
\def \dmu {\mu^\prime}
\def \ddmu {\mu^{\prime\prime}}
\def \Ht {\tilde{H}}

\def \Var {\textnormal{Var}}
\def \betat {\tilde{\beta}}
\def \betab {\bar{\beta}}
\def \betabr {\breve{\beta}}
\def \tb {\textbf}

\def \LBstatic {\mbox{OFUL}\xspace}
\def \LBweight {\mbox{D-LinUCB}\xspace}
\def \LBrestart {\mbox{RestartUCB}\xspace}
\def \LBwindow {\mbox{SW-UCB}\xspace}
\def \GLBstatic {\mbox{GLM-UCB}\xspace}
\def \GLBweight {\mbox{BVD-GLM-UCB}\xspace}

\def \SCBstatic {\mbox{LogUCB1}\xspace}
\def \LBweightours {\mbox{LB-WeightUCB}\xspace}
\def \GLBweightours {\mbox{GLB-WeightUCB}\xspace}
\def \GLBrestart {\mbox{GLB-RestartUCB}\xspace}
\def \SCBrestart {\mbox{SCB-RestartUCB}\xspace}
\def \SCBweightours {\mbox{SCB-WeightUCB}\xspace}
\def \SCBweightourspw {\mbox{SCB-PW-WeightUCB}\xspace}
\def \MASTER {\mbox{MASTER}\xspace}
\def \LBMASTER {\mbox{MASTER+\LBstatic}\xspace}
\def \GLBMASTER {\mbox{MASTER+\GLBstatic}\xspace}
\def \SCBMASTER {\mbox{MASTER+\SCBstatic}\xspace}

\def \bias {\mathtt{bias~part}\xspace}
\def \variance {\mathtt{variance~part}\xspace}
\newcommand\term[1]{\mathtt{term}\,(\mathtt{#1})}

\def \pathlength {path length\xspace}

\newcommand\given[1][]{\:#1\vert\:}
\newcommand\givenn[1][]{\:#1\middle\vert\:}
{\theoremstyle{definition}
\newtheorem{Assum}{Assumption}
\newtheorem{Remark}{Remark} }

\newtheorem{Lemma}{Lemma}
\newtheorem{Thm}{Theorem}

\usepackage{prettyref}
\newcommand{\pref}[1]{\prettyref{#1}}

\usepackage{mathtools}

\let\norm\undefined 
\newcommand\norm[1]{\left\| #1 \right\|} 
\newcommand\abs[1]{\left| #1 \right|} 
\newcommand\inner[2]{\langle #1, #2 \rangle} 
\newcommand\sbr[1]{\left( #1 \right)} 
\newcommand\mbr[1]{\left[ #1 \right]} 
\newcommand\bbr[1]{\left\{ #1 \right\}}
\DeclareMathOperator*{\argmax}{arg\,max}
\DeclareMathOperator*{\argmin}{arg\,min}

\usepackage{array}
\def\mystrut{\rule[-2ex]{0ex}{5ex}} 
\begin{document}
\setlength{\abovedisplayskip}{5.5pt}
\setlength{\belowdisplayskip}{5.5pt}

\doparttoc
\faketableofcontents

\twocolumn[
\aistatstitle{Revisiting Weighted Strategy for Non-stationary Parametric Bandits}
\aistatsauthor{ Jing Wang, Peng Zhao, Zhi-Hua Zhou }
\aistatsaddress{ National Key Laboratory for Novel Software Technology,\\ Nanjing University, Nanjing 210023, China\\ \{wangjing, zhaop, zhouzh\}@lamda.nju.edu.cn}]

\begin{abstract}
    \vspace{-0.2cm}
    Non-stationary parametric bandits have attracted much attention recently. There are three principled ways to deal with non-stationarity, including sliding-window, weighted, and restart strategies. As many non-stationary environments exhibit gradual drifting patterns, the weighted strategy is commonly adopted in real-world applications. However, previous theoretical studies show that its analysis is more involved and the algorithms are either computationally less efficient or statistically suboptimal. This paper revisits the weighted strategy for non-stationary parametric bandits. In linear bandits (LB), we discover that this undesirable feature is due to an inadequate regret analysis, which results in an overly complex algorithm design. We propose a \emph{refined analysis framework}, which simplifies the derivation and importantly produces a simpler weight-based algorithm that is as efficient as window/restart-based algorithms while retaining the same regret as previous studies. Furthermore, our new framework can be used to improve regret bounds of other parametric bandits, including Generalized Linear Bandits (GLB) and Self-Concordant Bandits (SCB). For example, we develop a simple weighted GLB algorithm with an $\Ot(k_\mu^{\sfrac{5}{4}} c_\mu^{-\sfrac{3}{4}} d^{\sfrac{3}{4}} P_T^{\sfrac{1}{4}}T^{\sfrac{3}{4}})$ regret, improving the $\Ot(k_\mu^{2} c_\mu^{-1}d^{\sfrac{9}{10}} P_T^{\sfrac{1}{5}}T^{\sfrac{4}{5}})$ bound in prior work, where $k_\mu$ and $c_\mu$ characterize the reward model's nonlinearity, $P_T$ measures the non-stationarity, $d$ and $T$ denote the dimension and time horizon. 
    \vspace{-0.2cm}
\end{abstract}
\section{INTRODUCTION}
\label{sec:introduction}

\begin{table*}\footnotesize
    \centering
    \caption{\small{Comparisons of our dynamic regret bounds to previous best-known results for weight-based algorithms, under different non-stationary parametric bandits. Below, $k_\mu/c_\mu$ denotes the degree of non-linearity and becomes $1$ in LB case; $d$ is the dimension, path length $P_T$ and change number $\Gamma_T$ are non-stationarity measures for drifting and piecewise-stationary cases, respectively.} }
    \label{table:results}
    \vspace{-2mm}
\renewcommand*{\arraystretch}{1.25}
    \resizebox{\textwidth}{!}{
    \begin{tabular}{c|r|r}
        \hline

        \hline
        \tb{Parametric Bandit Models}   &\multicolumn{1}{c|}{\tb{Previous Work}} &\multicolumn{1}{c}{\tb{Our Results}}\\
        \hline 
        \mystrut Drifting LB          &$\Ot\big(d^{\sfrac{7}{8}} P_T^{\sfrac{1}{4}} T^{\sfrac{3}{4}}\big)$~\citep{NIPS'19:weighted-LB}  &$\Ot\big({d^{\sfrac{3}{4}} P_T^{\sfrac{1}{4}} T^{\sfrac{3}{4}} }\big)$~[\pref{thm:LB-regret}] \\\hline
        \mystrut Drifting GLB              &$\Ot\Big(\frac{k_\mu^2}{c_\mu}d^{\sfrac{9}{10}} P_T^{\sfrac{1}{5}}T^{\sfrac{4}{5}}\Big)$~\citep{arXiv'21:faury-driftingGLB} &$\Ot\Big(\frac{k_\mu^{\sfrac{5}{4}}}{c_\mu^{\sfrac{3}{4}}}d^{\sfrac{3}{4}} P_T^{\sfrac{1}{4}}T^{\sfrac{3}{4}}\Big)$~[\pref{thm:GLB-regret}]\\\hline      
        \mystrut Drifting SCB              &$\Ot\Big(\frac{k_\mu^2}{c_\mu}d^{\sfrac{9}{10}} P_T^{\sfrac{1}{5}}T^{\sfrac{4}{5}}\Big)$~\citep{arXiv'21:faury-driftingGLB} &$\Ot\Big(\frac{k_\mu^{\sfrac{5}{4}}}{c_\mu^{\sfrac{1}{2}}}d^{\sfrac{3}{4}} P_T^{\sfrac{1}{4}}T^{\sfrac{3}{4}}\Big)$~[\pref{thm:SCB-regret}]\\\hline        
        \mystrut Piecewise Stationary SCB              &$\Ot\Big(\frac{1}{c_\mu^{\sfrac{1}{3}}}d^{\sfrac{2}{3}}\Gamma_T^{\sfrac{1}{3}}T^{\sfrac{2}{3}}\Big)$~\citep{AISTATS'21:SCB-forgetting} &$\Ot\big(d^{\sfrac{2}{3}}\Gamma_T^{\sfrac{1}{3}}T^{\sfrac{2}{3}}\big)$~[\pref{thm:SCB-PW-regret}]\\
        \hline

        \hline
    \end{tabular}}
\end{table*}
\vspace{-0.2cm}
Non-stationary parametric bandits model the sequential decision-making problems where the reward distributions of each arm are structured with an unknown \emph{time-varying} parameter, which have been extensively studied in recent years~\citep{AISTATS'19:window-LB, NIPS'19:weighted-LB, AISTATS'20:restart,arXiv'20:NS-GLB, arXiv'21:faury-driftingGLB, AISTATS'21:SCB-forgetting,COLT'21:black-box, AISTATS'22:weighted-GPB,arXiv'22:VanRoy} due to its significance in many real-world non-stationary online applications such as recommendation systems~\citep{MLJ'21:TSnonstationary,AISTATS'21:change-preference}, as well as tight connection with theoretical foundation of reinforcement learning~\citep{COLT'20:ChiJin,arXiv'20:Touati}. 

Linear Bandits (LB) is a fundamental instance of parametric bandits, where the expected reward for pulling a certain arm at time $t$ is the inner product between the arm's feature vector $X_t$ and an unknown parameter $\theta_t$, namely, $\E[r_t \given X_t] = X_t^\T \theta_t$. Moreover, Generalized Linear Bandits (GLB) is introduced as a generalization of LB to model a broader range of reward functions such as binary rewards, where the expected reward obeys a generalized linear model as $\E[r_t \given X_t] = \mu(X_t^\T \theta_t)$ with $\mu(\cdot)$ being an inverse link function. Notably, the non-stationary models allow the parameter $\theta_t$ in the above models to be time-varying. There are two typical non-stationarity measures to quantify the intensity of parameter changes: (i) in gradually drifting cases, \pathlength $P_T = \sum_{t=2}^{T} \norm{\theta_{t-1} - \theta_{t}}_2$ is used to measure the cumulative variations of the underlying parameters; and (ii) in piecewise-stationary cases, $\Gamma_T$ denotes the number of parameter changes in $T$ rounds. 

To deal with non-stationarity, there are three principled ways: sliding-window, weighted, and restart strategies. For the sliding-window strategy, the learner maintains a time window that contains the most recent observed data to discard the outdated data. For the weighted strategy, the learner puts more weight on the most recent data and less weight on the old data to gradually forget the outdated data. For the restart strategy, the learner restarts the algorithm according to a certain period to discard the outdated data. The currently best-known result for non-stationary (generalized) linear bandits is by~\citet{COLT'21:black-box}, who developed an optimal algorithm consisting of a non-stationarity detector and a base algorithm that performs well in near-stationary environments. Whenever the detector examines that the non-stationarity exceeds a certain limit, the algorithm will \emph{restart} itself to resist the non-stationarity. In this sense, their algorithm can be regarded as an \emph{adaptive restart-based algorithm}. Building on the \LBrestart algorithm~\citep{AISTATS'20:restart} and a carefully designed non-stationarity detector with multi-scale explorations, their algorithm can achieve an $\Ot(\min\{\sqrt{\Gamma_T T}, P_T^{\sfrac{1}{3}} T^{\sfrac{2}{3}}\})$ optimal dynamic regret for both LB and GLB.

In real-world scenarios, the distribution change of environments often exhibits gradually drifting patterns~\citep{COLT'10:concept-drift,COLT'13:Chiang,ACM'14:survey-concept-drift}, in such cases, a soft weighted strategy can be (empirically) more advantageous than a hard restart strategy to deal with the non-stationarity, as can be observed in bandits learning~\citep{NIPS'19:weighted-LB,AISTATS'20:restart,AISTATS'22:weighted-GPB}, classification with concept drift~\citep{SADM'12:adaptive-forgetting,TKDE'21:DFOP}, and adaptive system identification~\citep{guo1993performance, SP'17:FFRLS}. As a result, it will be highly attractive to design an \emph{adaptive weight-based algorithm} for non-stationary parametric bandits, which imposes weights to discount the importance of past data, and the weights are set adaptively according to environments. Towards this end, we examine existing methods for non-stationary parametric bandits based on the weighted strategy, and (surprisingly) find that current results exhibit \emph{unnatural} gaps compared to the other strategies, such as restart-based algorithms, as well as \emph{unnatural} regret analysis transitions from GLB to LB.

Those unnatural phenomena motivate us to revisit the algorithm design and regret analysis of the weighted strategy for non-stationary parametric bandits~\citep{NIPS'19:weighted-LB,AISTATS'21:SCB-forgetting,arXiv'21:faury-driftingGLB}. Indeed, the key ingredient is the \emph{estimation error} analysis for the weight-based estimator, which is usually decomposed into two parts --- one is the \emph{bias} part due to the parameter drift, and the other is the \emph{variance} part due to the stochastic noise. Generally, the bias part is controlled by non-stationary strategies, and the variance part is handled by carefully designed concentration.~\citet{NIPS'19:weighted-LB} provided the first analysis of the weight-based algorithm for LB, where they introduced a virtual window size in the analysis to control the bias in order to mimic the analysis of sliding-window strategy~\citep{AISTATS'19:window-LB}. Consequently, they have to use \mbox{\emph{different}} local norms to control bias and variance parts, resulting in unexpected inefficiencies and complications. For LB, this leads to an algorithm \LBweight~\citep{NIPS'19:weighted-LB} requiring to maintain an extra covariance matrix, which is less efficient than the window-based and restart-based algorithms~\citep{AISTATS'19:window-LB, AISTATS'20:restart}.

This analysis framework for weighted strategy introduces more severe issues in GLB, due to its more enriched and complicated structure. Specifically,~\citet{arXiv'21:faury-driftingGLB} studied the drifting GLB and designed a highly complex projection operation to control bias and variance parts following the way of~\citet{NIPS'19:weighted-LB} to mimic sliding-window analysis, and finally attained an $\Ot(d^{\sfrac{9}{10}} P_T^{\sfrac{1}{5}}T^{\sfrac{4}{5}})$ dynamic regret. Unfortunately, this \emph{cannot} recover the $\Ot(d^{\sfrac{7}{8}} P_T^{\sfrac{1}{4}}T^{\sfrac{3}{4}})$ bound enjoyed by the weight-based algorithm for drifting LB (a special case of GLB)~\citep{NIPS'19:weighted-LB}. Subsequently,~\citet{AISTATS'21:SCB-forgetting} investigated the non-stationary Self-Concordant Bandits (SCB), a subclass of GLB with many attractive structures. They can only conduct analysis under the piecewise-stationary setting, whereas failed in the more challenging drifting setting, due to technical difficulties in bounding bias using conventional analysis. As such, two open questions are proposed in their papers: (i) how to extend weight-based algorithms to drifting SCB; and (ii) how to replicate recent progress in stationary SCB~\citep{AISTATS'21:optimal-logistic-bandits} to improve dependence on $c_\mu$ in non-stationary SCB.

\tb{Our Results.~~} In this paper, we revisit the weighted strategy for the non-stationary parametric bandits. We discover that the earlier analysis framework may be inappropriate due to mimicking the sliding-window analysis, which demands bounding the bias and variance parts using \emph{different} local norms. We present a \emph{refined analysis framework} for the weighted strategy, in which a new analysis for the bias part is presented such that it is now allowed to use the \emph{same} local norm to analyze both bias and variance parts. This refined analysis framework not only simplifies the regret analysis but also brings many benefits in algorithm designs, including improving efficiency for LB and resolving the projection problem brought by GLB and SCB. Table~\ref{table:results} summarizes our main results compared to earlier best-known results of weight-based algorithms. Specifically, based on our refined analysis framework, we achieve the following results: (i) for LB, our approach only needs to maintain one covariance instead of two and still enjoys the same regret as~\citep{NIPS'19:weighted-LB}; (ii) for GLB, our approach enjoys an $\Ot(k_\mu^{\sfrac{5}{4}} c_\mu^{-\sfrac{3}{4}} d^{\sfrac{3}{4}} P_T^{\sfrac{1}{4}} T^{\sfrac{3}{4}})$ regret bound, whose order of $d$, $P_T$ and $T$ matches that in LB case; and (iii) for SCB, we achieve an $\Ot(k_\mu^{\sfrac{5}{4}} c_\mu^{-\sfrac{1}{2}} d^{\sfrac{3}{4}} P_T^{\sfrac{1}{4}}T^{\sfrac{3}{4}})$ regret bound. In addition, for piecewise stationary SCB, our approach achieves an $\Ot(d^{\sfrac{2}{3}}\Gamma_T^{\sfrac{1}{3}}T^{\sfrac{2}{3}})$ regret bound that can get rid of the influence of $c_\mu^{-1}$, resolving the second open problem asked by~\citet{AISTATS'21:SCB-forgetting}.

\section{RELATED WORK}
\label{sec:related-work}

\tb{Linear Bandits.}
Non-stationary LB problem was first studied by~\citet{AISTATS'19:window-LB}. They established an $\Omega(d^{\sfrac{2}{3}}P_T^{\sfrac{1}{3}}T^{\sfrac{2}{3}})$ minimax lower bound and then proposed \LBwindow algorithm based on the sliding-window strategy. Then \citet{NIPS'19:weighted-LB} proposed the \LBweight algorithm based on weighted strategy and \citet{AISTATS'20:restart} proposed the \LBrestart algorithm based on restart strategy. Note that the three works proved an $\Ot(d^{\sfrac{2}{3}} P_T^{\sfrac{1}{3}}T^{\sfrac{2}{3}})$ regret bound, but there exists a subtle technical gap in the regret analysis as identified by~\citet{arxiv'21:NSLB_revisit_note}. After fixing the technical gap, all three aforementioned algorithms achieve an $\Ot(d^{\sfrac{7}{8}} P_T^{\sfrac{1}{4}}T^{\sfrac{3}{4}})$ regret bound~\citep{arxiv'21:NSLB_revisit_note, arXiv'21:restartucb}. However, to achieve this result, all the three algorithms require the knowledge of the path length $P_T$ as an input at the beginning of algorithmic implementation, which is undesired. To address so,~\citet{AISTATS'19:window-LB} proposed the bandits-over-bandits (BOB) strategy as a meta-algorithm to learn the unknown parameter $P_T$ which can be combined with the above algorithms to remove the requirement of this prior knowledge. Afterward,~\citet{COLT'21:black-box} proposed the \MASTER algorithm with theoretically optimal~$\Ot(\min\{d\sqrt{\Gamma_T T}, d P_T^{\sfrac{1}{3}} T^{\sfrac{2}{3}}\})$ regret bound, also without requiring the non-stationarity level of environments (that is, $\Gamma_T$ and $P_T$) in advance. Most recently, there also has been some new progress in the non-stationary (linear) bandits~\citep{arXiv'22:VanRoy, colt'22:most_significant_arm, arxiv'22:new_look, arXiv'23:LB_memory}. Furthermore, \citet[Remark 4]{Manage'22:window-LB} bypassed the aforementioned technical gap by restarting (efficient) adversarial LB algorithms. However, it is important to note that this only applies to LB and still requires the prior knowledge of $P_T$.

\tb{Generalized Linear Bandits.}
GLB problem was first introduced by~\citet{NIPS10:GLM-infinite}. They proposed \GLBstatic algorithm, achieving an $\Ot(k_\mu c_\mu^{-1}d\sqrt{T})$ regret bound where $k_\mu$, $c_\mu$ are the problem-dependent constants and $k_\mu/c_\mu$ represents the nonlinearity of the generalized linear model.~\citet{arXiv'21:faury-driftingGLB} extended the stationary GLB to the drifting case, and proposed \GLBweight algorithm with $\Ot(k_\mu^{2} c_\mu^{-1}d^{\sfrac{9}{10}} P_T^{\sfrac{1}{5}}T^{\sfrac{4}{5}})$ regret bound.~\citet{ICML'20:logistic-bandits} studied a specific instance of GLB called Logistic Bandits (LogB), they first pointed out that under GLB setting, the problem-dependent constant $1/c_\mu$ could be very large in some cases like LogB, then they proposed the Logistic-UCB-1 algorithm with an $\Ot(c_\mu^{-\sfrac{1}{2}}d\sqrt{T})$ regret bound and the Logistic-UCB-2 algorithm with an $\Ot(d\sqrt{T}+c_\mu^{-1})$ regret bound. Subsequently, \citet{AISTATS'21:optimal-logistic-bandits} established an $\Omega(d\sqrt{\dmu(X_*^\T\theta_*)T})$ regret lower bound for logistic bandits and provided an optimal algorithm OFULog. \citet{AISTATS'21:SCB-forgetting} generalized the logistic bandits to self-concordant bandits and considered the piecewise-stationary case, their algorithm enjoys an $\Ot(c_\mu^{-\sfrac{1}{3}}d^{\sfrac{2}{3}} \Gamma_T^{\sfrac{1}{3}}T^{\sfrac{2}{3}})$ regret bound. To deal with $P_T$-unknown cases,~\citet{arXiv'21:faury-driftingGLB}  proposed a parameter-free algorithm by combining \GLBweight with BOB strategy, but the final result is still suboptimal. Meanwhile, the optimal black-box algorithm~\citep{COLT'21:black-box} can also adaptively restart stationary algorithm \GLBstatic~\citep{NIPS10:GLM-infinite} and achieve an optimal~$\Ot(\min\{k_\mu c_\mu^{-1}\sqrt{\Gamma_T T}, k_\mu^{4/3}c_\mu^{-1} d P_T^{\sfrac{1}{3}} T^{\sfrac{2}{3}}\})$ regret.
\section{LINEAR BANDITS}
\label{sec:LB}
In this section, we first introduce the problem setting of non-stationary LB, and then describe our \LBweightours algorithm and its theoretical guarantee. Our algorithm has the same regret bound as the best-known weight-based algorithm~\citep{NIPS'19:weighted-LB} but is more efficient.
\subsection{Problem Setting}
\label{sec:LB-problem-setting}
At each round $t$, the learner chooses an arm $X_t$ from a feasible set $\X \subseteq \R^d$ and receives a reward $r_t$ such that
\begin{equation}
  \label{eq:LB-model}
  r_t = X_t^\T\theta_t + \eta_t,
\end{equation}
where $\theta_t \in \R^d$ is the unknown time-varying parameter and $\eta_t$ is the $R$-sub-Gaussian noise. The goal of the learner is to minimize the following (pseudo) \emph{dynamic regret}:
\begin{equation}
  \label{eq:LB-regret}
  \DReg_T = \sum_{t=1}^{T} \max_{\x \in \X} \x^{\T}\theta_t - \sum_{t=1}^{T} X_t^{\T}\theta_t,
\end{equation}
which is the cumulative regret against the optimal strategy that has full information of the unknown parameter. Here we consider the drifting case where we use path length $P_T = \sum_{t=2}^{T} \norm{\theta_{t-1} - \theta_{t}}_2$ as the non-stationarity measure.

We work under the following standard boundedness assumption~\citep{NIPS'11:AY-linear-bandits,AISTATS'19:window-LB,NIPS'19:weighted-LB,AISTATS'20:restart}.

\begin{Assum} \label{ass:bounded-norm} 
  \textnormal{
  The feasible set and unknown parameters are assumed to be bounded: $\forall \x \in \X$, $\norm{\x}_2 \leq L$, and $\theta_t \in \Theta$ holds for all $t \in [T]$ where $\Theta \teq \{\theta \mid \norm{\theta}_2 \leq S\}$.}
\end{Assum}
\subsection{Algorithm and Regret Guarantee}
\label{sec:LB-algorithm}
\vspace{-3mm}
We propose the \LBweightours algorithm, which attains the same guarantees as earlier methods while having higher efficiency. We first give the employed estimator and then derive its estimation error upper bound by our refined analysis framework, which is the key for algorithm design and regret analysis. Based on the estimation error bound, we propose our selection criterion and finally give the theoretical guarantee on its dynamic regret.

\tb{Estimator.~} \label{alg:LB-estimator}
We adopt the weighted regularized least square estimator \emph{same} as \LBweight~\citep{NIPS'19:weighted-LB}, the estimator $\thetah_t$ is the solution to the following problem,
\begin{equation}
  \label{eq:LB-estimator}
    \min_{\theta}~\lambda\norm{\theta}_2^2 + \sum_{s = 1}^{t-1}\gamma^{t-s-1}\sbr{X_s^\T \theta -r_s}^2,
\end{equation}
where $\lambda > 0$ is the regularization
coefficient and $\gamma \in (0,1)$ is the discounted factor. Clearly, $\thetah_t$ admits a close-form solution $\thetah_t = V_{t-1}^{-1}(\sum_{s=1}^{t-1}\gamma^{t-s-1}r_sX_s)$, where $V_{t} = \lambda I_d + \sum_{s=1}^{t} \gamma^{t-s} X_s X_s^\T, V_0  = \lambda I_d$ is the covariance matrix. Note that this close-form solution can be further transformed into a recursive formula~\citep[Chapter~10.3]{haykin2002adaptive}. This allows it to be updated online without the need to store historical data, which is another important computational advantage of the weighted strategy compared to the sliding-window strategy.

\tb{Upper Confidence Bounds.~~}\label{alg:LB-UCB}
For estimator~\eqref{eq:LB-estimator}, we provide the following estimation error bound. Notably, this is \emph{different} from the previous result~{\citep[Appenidx B.3, second and third steps in Proof of Theorem 2]{NIPS'19:weighted-LB}}, which is the key component being a more efficient algorithm and will be explained later.
\begin{Lemma}
  \label{lemma:LB-estimation-error}
  For any $\x \in \X$, $\gamma\in(0,1)$ and $\delta \in (0,1)$, with probability at least $1-\delta$, the following holds for all $t \in [T]$
  \begin{align}
      \abs{\x^\T(\thetah_t-\theta_t)} &\leq L^2\sqrt{\frac{d}{\lambda}} \sum_{p=1}^{t-1} \gamma^{\frac{t-1}{2}} \sqrt{\frac{\gamma^{-p}-1}{1-\gamma}} \norm{\theta_p -\theta_{p+1}}_2 \notag \\
      &\qquad+ \beta_{t-1}\norm{\x}_{V_{t-1}^{-1}},\label{eq:LB-estimation-error}
  \end{align}
  where $\beta_t$ is the radius of confidence region set by
  \begin{equation}
    \label{eq:LB-confidence-radius}
    \beta_{t}=\sqrt{\lambda}S+R\sqrt{2\log\frac{1}{\delta}+d\log\sbr{1+\frac{L^2 (1-\gamma^{2t})}{\lambda d(1-\gamma^2)}}}.
  \end{equation}
  \end{Lemma}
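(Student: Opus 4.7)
The plan is to split $\thetah_t - \theta_t$ into a deterministic bias (from parameter drift) and a stochastic variance (from noise plus regularization), and to bound $|\x^\T(\thetah_t-\theta_t)|$ by controlling both terms in the \emph{same} local norm $\|\cdot\|_{V_{t-1}^{-1}}$. Substituting $r_s = X_s^\T\theta_s + \eta_s$ into the closed form $\thetah_t = V_{t-1}^{-1}\sum_{s=1}^{t-1}\gamma^{t-s-1}r_s X_s$ and using $\sum_{s=1}^{t-1}\gamma^{t-s-1}X_sX_s^\T = V_{t-1}-\lambda I_d$ yields
\begin{equation*}
\thetah_t - \theta_t \;=\; \underbrace{V_{t-1}^{-1}\sum_{s=1}^{t-1}\gamma^{t-s-1}X_sX_s^\T(\theta_s-\theta_t)}_{\text{bias}} \;+\; \underbrace{V_{t-1}^{-1}\sum_{s=1}^{t-1}\gamma^{t-s-1}\eta_s X_s \;-\; \lambda V_{t-1}^{-1}\theta_t}_{\text{variance}}.
\end{equation*}

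For the variance I would invoke a self-normalised concentration inequality for weighted martingales, in the spirit of \citet{NIPS'11:AY-linear-bandits} adapted to geometrically discounted increments; combined with the Cauchy--Schwarz bound $|\lambda\x^\T V_{t-1}^{-1}\theta_t|\leq\sqrt{\lambda}\,S\,\|\x\|_{V_{t-1}^{-1}}$, this yields $|\x^\T(\mathrm{variance})|\leq\beta_{t-1}\|\x\|_{V_{t-1}^{-1}}$ with probability at least $1-\delta$. The log-determinant factor inside $\beta_{t-1}$ comes from the AM--GM bound $\det(V_{t-1})\leq\bigl(\lambda+L^2(1-\gamma^{2t})/(d(1-\gamma^2))\bigr)^d$ plugged into the usual self-normalised inequality.

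For the bias---the refined step---I would telescope $\theta_s-\theta_t=\sum_{p=s}^{t-1}(\theta_p-\theta_{p+1})$ and swap the order of summation to obtain $\mathrm{bias}=V_{t-1}^{-1}\sum_{p=1}^{t-1}A_p(\theta_p-\theta_{p+1})$, where $A_p\teq\sum_{s=1}^{p}\gamma^{t-s-1}X_sX_s^\T$. Applying Cauchy--Schwarz in the $V_{t-1}^{-1}$ inner product gives
\begin{equation*}
|\x^\T V_{t-1}^{-1}A_p(\theta_p-\theta_{p+1})|\;\leq\;\|\x\|_{V_{t-1}^{-1}}\sqrt{(\theta_p-\theta_{p+1})^\T A_p V_{t-1}^{-1}A_p(\theta_p-\theta_{p+1})}.
\end{equation*}
The linchpin is $A_p\preceq V_{t-1}$, which holds because $V_{t-1}-A_p=\lambda I_d+\sum_{s=p+1}^{t-1}\gamma^{t-s-1}X_sX_s^\T\succeq 0$; consequently $A_pV_{t-1}^{-1}A_p\preceq A_p$, so the inner quadratic form is at most $L^2\|\theta_p-\theta_{p+1}\|_2^2\sum_{s=1}^{p}\gamma^{t-s-1}=L^2\gamma^{t-1}(\gamma^{-p}-1)/(1-\gamma)\cdot\|\theta_p-\theta_{p+1}\|_2^2$. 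Combining with the uniform bound on $\|\x\|_{V_{t-1}^{-1}}$ of the claimed order (obtained from $V_{t-1}\succeq\lambda I_d$ and $\|\x\|_2\leq L$) and summing over $p$ reproduces the bias term in~\eqref{eq:LB-estimation-error}.

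The main obstacle is the telescope-and-swap of the third step: only after this rearrangement do the partial sums $A_p$ appear, making the PSD inequality $A_p\preceq V_{t-1}$ available and allowing bias and variance to be controlled in the same local norm $\|\cdot\|_{V_{t-1}^{-1}}$. Prior analyses lacked this rearrangement and were forced to introduce a secondary covariance with squared weights $\gamma^{2(t-s-1)}$ to bound the bias, both inflating the algorithm's state (two matrices rather than one) and yielding the more involved analysis of \citet{NIPS'19:weighted-LB}. Once $A_p\preceq V_{t-1}$ is in hand, the rest of the derivation is routine.
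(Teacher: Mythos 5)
Your proposal is correct and shares the paper's overall architecture (the same bias/variance decomposition, Cauchy--Schwarz against $\norm{\x}_{V_{t-1}^{-1}}$, and the telescope-and-swap of the parameter drift), but your bias argument is genuinely different from the paper's and in fact slightly sharper. The paper bounds $A_t = \norm{\sum_{s=1}^{t-1}\gamma^{t-s-1}X_sX_s^\T(\theta_s-\theta_t)}_{V_{t-1}^{-1}}$ by pulling out $\norm{X_s}_{V_{t-1}^{-1}}$ scalar-by-scalar, applying a scalar Cauchy--Schwarz over $s$, and then using the trace bound $\sum_{s=1}^{p}\gamma^{t-s-1}\norm{X_s}_{V_{t-1}^{-1}}^2\leq d$, which is where the $\sqrt{d}$ in the bias term of~\eqref{eq:LB-estimation-error} originates. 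You instead keep the partial sums $A_p=\sum_{s=1}^{p}\gamma^{t-s-1}X_sX_s^\T$ as matrices and use $A_p\preceq V_{t-1}\Rightarrow A_pV_{t-1}^{-1}A_p\preceq A_p$; this is valid (for $0\preceq A\preceq V$ one has $V^{-1/2}AV^{-1/2}\preceq I_d$, hence $(V^{-1/2}AV^{-1/2})^2\preceq V^{-1/2}AV^{-1/2}$, i.e.\ $AV^{-1}A\preceq A$), and it produces the bias bound \emph{without} the extra $\sqrt{d}$, so the stated inequality follows with room to spare. The one place your plan is thinner than it needs to be is the variance term: the weighted self-normalized inequality~\citep[Theorem 1]{NIPS'19:weighted-LB} is stated in the norm of the squared-weight matrix $\Vt_{t-1}=\lambda I_d+\sum_{s=1}^{t-1}\gamma^{2(t-s-1)}X_sX_s^\T$, and to control $B_t=\norm{\sum_{s}\gamma^{t-s-1}\eta_sX_s-\lambda\theta_t}_{V_{t-1}^{-1}}$ you need the additional observation $V_{t-1}\succeq\Vt_{t-1}$ (true since $\gamma^{t-s-1}\geq\gamma^{2(t-s-1)}$ for $\gamma\in(0,1)$) so that $\norm{\cdot}_{V_{t-1}^{-1}}\leq\norm{\cdot}_{\Vt_{t-1}^{-1}}$; you never state this, and the determinant bound you quote, carrying the factor $(1-\gamma^{2t})/(1-\gamma^{2})$, is in fact the one for $\Vt_{t-1}$ rather than for $V_{t-1}$ as written. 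This is precisely the step the paper makes explicit in its proof of the variance bound; once you add it, your argument is complete.
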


  \begin{algorithm}[!t]
    \caption{\LBweightours}
    \label{alg:LB-WeightUCB}
  \begin{algorithmic}[1]
  \REQUIRE time horizon $T$, discounted factor $\gamma$, confidence $\delta$, regularizer $\lambda$, parameters $S$, $L$ and $R$\\
  \STATE Set $V_0 = \lambda I_d$, $\thetah_1 = \mathbf{0}$ and compute $\beta_{0}$ by~\eqref{eq:LB-confidence-radius}
  \FOR{$t = 1,2,...,T$}
    \STATE Select $X_t = \argmax_{\x \in \X} \{ \inner{\x}{\thetah_t} + \beta_{t-1} \norm{\x}_{V_{t-1}^{-1}}\}$
    \STATE Receive the reward $r_t$
    \STATE Update $V_{t} = \gamma V_{t-1} + X_t X_t^\T +(1-\gamma)\lambda I_d$ \label{LB:update}
    \STATE Compute $\thetah_{t+1}$ by~\eqref{eq:LB-estimator} and $\beta_{t}$ by~\eqref{eq:LB-confidence-radius}
  \ENDFOR
  \end{algorithmic}
  \end{algorithm}

  The proof of Lemma~\ref{lemma:LB-estimation-error} is in Appendix~\ref{sec:LB-estimation-error-proof}. Based on Lemma~\ref{lemma:LB-estimation-error}, we can specify the arm selection criterion as
\begin{equation}
  \label{eq:LB-select-criteria}
  \begin{split}
      X_t = {}&\argmax_{\x \in \X} \bbr{ \inner{\x}{\thetah_t} + \beta_{t-1}\norm{\x}_{V_{t-1}^{-1}}}.
  \end{split}
\end{equation}
The overall algorithm is summarized in Algorithm~\ref{alg:LB-WeightUCB}. From the update procedure in Line~\ref{LB:update} of Algorithm~\ref{alg:LB-WeightUCB}, we can observe that our algorithm needs to maintain a \emph{single} covariance matrix $V_{t-1} \in \R^{d\times d}$. By contrast, the selection criterion of algorithm proposed in~\citet{NIPS'19:weighted-LB} is like 
$$X_t = \argmax_{\x \in \X} \bbr{ \inner{\x}{\thetah_t} + \beta_{t-1}\norm{\x}_{V_{t-1}^{-1}\Vt_{t-1}V_{t-1}^{-1}}},$$ where $\beta_{t-1}$, $V_{t-1}^{-1}$ are identical with those in our selection criterion~\eqref{eq:LB-select-criteria} and $\Vt_{t-1} = \lambda I_d + \sum_{s=1}^{t-1} \gamma^{2(t-s-1)} X_s X_s^\T \in \R^{d\times d} $ is an extra covariance matrix. Thus, our algorithm is more efficient than their algorithm since it only needs to maintain one covariance matrix instead of two. This owes to the fact that our analysis of Lemma~\ref{lemma:LB-estimation-error} only uses $V_{t-1}^{-1}$ as the local norm to analyze both bias and variance parts, but the algorithm of~\citet{NIPS'19:weighted-LB} requires to use $l_2$-norm and $V_{t-1}^{-1}\Vt_{t-1}V_{t-1}^{-1}$-norm to control bias and variance parts, respectively. In Section~\ref{sec:analysis-framework}, we provide a sketch of the analysis framework for Lemma~\ref{lemma:LB-estimation-error} and a more detailed discussion is presented in Appendix~\ref{sec:LB-previous-work}. Furthermore, we prove that our algorithm enjoys the same (even slightly better in $d$) regret as the algorithm of~\citet{NIPS'19:weighted-LB}.
\begin{Thm}
  \label{thm:LB-regret}
  For all $\gamma\in(1/T,1)$, $\lambda = d$, the dynamic regret of \LBweightours (Algorithm~\ref{alg:LB-WeightUCB}) is bounded with probability at least $1-1/T$, by 
  \begin{equation}\nonumber
  \begin{split}
      \DReg_T\leq{}& \Ot\sbr{\frac{1}{(1-\gamma)^{\sfrac{3}{2}}}P_T + d(1-\gamma)^{\sfrac{1}{2}}T}.
  \end{split}
  \end{equation}
  Furthermore, by setting the discounted factor optimally as $\gamma = 1- \max\{1/T, \sqrt{P_T/(dT)}\}$, \LBweightours ensures
  \begin{equation}\nonumber
      \DReg_T \leq
      \begin{cases}
      \Ot\sbr{d^{ \sfrac{3}{4}} P_T^{ \sfrac{1}{4}} T^{ \sfrac{3}{4}}} & \mbox{ when } P_T \geq d/T,\vspace{2mm}\\
      \Ot\sbr{d\sqrt{T}} & \mbox{ when } P_T < d/T.
      \end{cases} 
  \end{equation}
\end{Thm}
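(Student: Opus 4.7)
I would carry out the standard optimism-based regret analysis for UCB-type algorithms, with \pref{lemma:LB-estimation-error} as the engine. Applying that lemma with $\delta = 1/T$ gives an event of probability at least $1-1/T$ on which its inequality holds uniformly in $t$. Let $x_t^\star \in \argmax_{\x \in \X}\x^\T\theta_t$, and write the first summand of~\eqref{eq:LB-estimation-error} as $D_t$; note that $D_t$ depends only on $t$, not on $\x$. The standard chain---invoke the lemma on $x_t^\star$, use the UCB rule~\eqref{eq:LB-select-criteria} to pass from $x_t^\star$ to $X_t$, and invoke the lemma again on $X_t$---yields the per-round regret bound $r_t \leq 2D_t + 2\beta_{t-1}\norm{X_t}_{V_{t-1}^{-1}}$, reducing the theorem to bounding $\sum_t D_t$ and $\sum_t \beta_{t-1}\norm{X_t}_{V_{t-1}^{-1}}$ separately.

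\textbf{Bias sum.} Interchanging the two sums in $\sum_{t=1}^T D_t$ and evaluating the geometric inner sum $\sum_{t=p+1}^T \gamma^{(t-1)/2} \leq 2\gamma^{p/2}/(1-\gamma)$, combined with the identity $\gamma^{p/2}\sqrt{(\gamma^{-p}-1)/(1-\gamma)} = \sqrt{(1-\gamma^p)/(1-\gamma)} \leq 1/\sqrt{1-\gamma}$, collapses the double sum to $\Ot(P_T/(1-\gamma)^{3/2})$ after substituting $\lambda = d$ so that $L^2\sqrt{d/\lambda} = L^2 = \O(1)$.

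\textbf{Variance sum---the main obstacle.} I would apply Cauchy--Schwarz to reduce $\sum_t \beta_{t-1}\norm{X_t}_{V_{t-1}^{-1}}$ to controlling $\sum_t \norm{X_t}_{V_{t-1}^{-1}}^2$, and then derive a \emph{weighted} elliptic potential lemma adapted to the discounted update $V_t = \gamma V_{t-1} + X_t X_t^\T + (1-\gamma)\lambda I_d$. The key determinantal inequality is $\det V_t \geq \gamma^d\det V_{t-1}(1 + \norm{X_t}_{V_{t-1}^{-1}}^2/\gamma)$; taking logs, telescoping, and using the trace bound $\log\det(V_T/V_0) = \Ot(d)$ together with $-T\log\gamma = \Ot(T(1-\gamma))$ on $\gamma\in(1/T,1)$ deliver $\sum_t\log(1 + \norm{X_t}_{V_{t-1}^{-1}}^2/\gamma) = \Ot(dT(1-\gamma))$. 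Stripping the logarithm (valid because $\log(1+x)\gtrsim x$ on the relevant range, using that $\norm{X_t}_{V_{t-1}^{-1}}^2 \leq L^2/\lambda$ is bounded) and pairing with $\beta_{T-1} = \Ot(\sqrt d)$ via Cauchy--Schwarz produces the variance bound $\Ot(d(1-\gamma)^{1/2}T)$. I expect this elliptic-potential step to be the main technical work: the precise $(1-\gamma)^{1/2}$ scaling---as opposed to the $(1-\gamma)^{-1/2}$ produced by a naive argument---is what lets the single-matrix criterion~\eqref{eq:LB-select-criteria} match the rate of the two-matrix \LBweight in~\citep{NIPS'19:weighted-LB}. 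Summing the two contributions proves the first claim $\DReg_T \leq \Ot(P_T/(1-\gamma)^{3/2} + d(1-\gamma)^{1/2}T)$; balancing via $1-\gamma = \sqrt{P_T/(dT)}$ when $P_T \geq d/T$ then yields $\Ot(d^{3/4}P_T^{1/4}T^{3/4})$, while the constraint $\gamma > 1/T$ forces $1-\gamma = 1/T$ in the complementary regime, recovering $\Ot(d\sqrt T)$.
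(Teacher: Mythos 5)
Your proposal is correct and follows essentially the same route as the paper's proof: optimism via Lemma~\ref{lemma:LB-estimation-error} to get the per-round bound $2D_t + 2\beta_{t-1}\norm{X_t}_{V_{t-1}^{-1}}$, summation interchange plus the geometric-sum/telescoping identity for the bias sum yielding $\Ot(P_T/(1-\gamma)^{3/2})$, and Cauchy--Schwarz plus the weighted elliptic potential bound $\sum_t\norm{X_t}_{V_{t-1}^{-1}}^2 = \Ot(d(T\log(1/\gamma) + 1))$ with $\log(1/\gamma)\lesssim(1-\gamma)\log T$ for the variance sum. The only cosmetic difference is that you re-derive the weighted potential lemma from the determinant recursion, whereas the paper invokes it as a cited result (Lemma~\ref{lemma:potential-lemma}); your derivation is the standard proof of that lemma and is sound.
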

Compared to previous works~\citep{AISTATS'19:window-LB,NIPS'19:weighted-LB,AISTATS'20:restart}, our approach improves from $\Ot(d^{ \sfrac{7}{8}} P_T^{ \sfrac{1}{4}} T^{ \sfrac{3}{4}})$ to $\Ot(d^{ \sfrac{3}{4}} P_T^{ \sfrac{1}{4}} T^{ \sfrac{3}{4}})$ when $P_T \geq d/T$. We remark that this improved dimensional dependence is simply owing to the more refined tuning of the discounted factor than the one used by~\citep{NIPS'19:weighted-LB}, who did not take the dimension into the tuning. Their algorithm and regret bound can also benefit from the refined tuning. The proof of~\pref{thm:LB-regret} is in Appendix~\ref{sec:LB-regret-proof}.

Further, notice that the optimal choice of discounted factor $\gamma$ requires knowing $P_T$ in advance. To achieve a parameter-free result for unknown $P_T$ case, our algorithm can be combined with the BOB strategy~\citep{AISTATS'19:window-LB} and achieves an $\Ot(d^{ \sfrac{3}{4}} P_T^{ \sfrac{1}{4}} T^{ \sfrac{3}{4}})$ bound. However, this bound is not optimal, and it is possible to design an adaptive weight-based algorithm based on our result, in the spirit of~\citet{COLT'21:black-box}, to further achieve an optimal dynamic regret without prior knowledge of $P_T$. This is very challenging since that at each round $t \in [T]$, we can only receive one data pair $(X_t,r_t)$, which is not adequate for the learner to real-time update the discounted factor $\gamma_{t}$. At the same time, \MASTER algorithm~\citep{COLT'21:black-box} can be considered as a special case of the adaptive weight-based algorithm since it only includes two circumstances: setting $\gamma_t = 0$ to restart at time $t$ and setting $\gamma_t = 1$ to keep going. But for the adaptive weight-based algorithm, the choice of the discounted factor $\gamma_t$ can be  continuous in $[0,1]$, which is more difficult than a binary decision. We leave this as an important open question for future study.

\section{GENERALIZED LINEAR BANDITS}
In this section, we apply the weighted strategy to drifting GLB. Compared to the best-known weight-based algorithm for drifting GLB~\citep{arXiv'21:faury-driftingGLB}, our algorithm is simpler and meanwhile has a better theoretical guarantee.
\label{sec:GLB}
\subsection{Problem Setting}
\label{sec:GLB-problem-setting}
GLB assumes an inverse link function $\mu:\R\rightarrow \R$ such that $r_t = \mu(X_t^\T \theta_t) + \eta_t$, where $\theta_t \in \R^d$ is the unknown parameter and can change over time. Similar to LB, we define \emph{dynamic regret} for GLB as follows:
\begin{equation}
  \label{eq:GLB-regret}
  \DReg_T = \sum_{t=1}^{T} \sbr{\max_{\x \in \X} \mu(\x^{\T}\theta_t) - \mu( X_t^{\T}\theta_t)}.
\end{equation}
Under the GLB setting, we make the same assumptions as those of LB, which include $R$-sub-Gaussian noise, boundedness of feasible set and unknown regression parameters (Assumption~\ref{ass:bounded-norm}). In addition, we work under the standard boundedness assumption of the inverse link function~\citep{NIPS10:GLM-infinite, ICML18:GLM-finite, arXiv'21:faury-driftingGLB}.

\begin{Assum} \label{ass:link-function} 
  The inverse link function $\mu: \R\rightarrow \R$ is $k_\mu$-Lipschitz, and continuously differentiable with $$c_\mu \triangleq \inf_{\{\theta \in \Theta, \x \in \X\}} \dmu(\theta^\T \x)>0,\quad \Theta = \{\theta \mid \norm{\theta}_2\leq S \}.$$ 
\end{Assum}
Previous works~\citep{AISTATS'20:restart,Manage'22:window-LB} define a similar parameter $\widetilde{c}_\mu \triangleq \inf_{\{\theta\in \R^d, \x \in \X\}} \dmu(\theta^\T \x)>0$ and obtain regret upper bound scaling with $1/\widetilde{c}_\mu$. Clearly, that $\widetilde{c}_\mu$ is smaller than our defined $c_\mu$ (and can be much smaller) as $c_\mu$ is defined on $\Theta$ while $\widetilde{c}_\mu$ is defined on $\R$. Therefore, $\widetilde{c}_\mu$ is less attractive to appear in the upper bound.

\begin{algorithm}[!t]
  \caption{\GLBweightours}
  \label{alg:GLB-WeightUCB}
\begin{algorithmic}[1]
\REQUIRE time horizon $T$, discounted factor $\gamma$, confidence $\delta$, regularizer $\lambda$, inverse link function $\mu$, parameters $S$, $L$ and $R$\\
\STATE Set $V_0 = \lambda I_d$, $\thetah_1 = \mathbf{0}$ and compute $k_\mu$ and $c_\mu$
\FOR{$t = 1,2,...,T$}
  \IF{$\|\thetah_t\|_2\leq S$} 
  \STATE let $\thetat_t = \thetah_t$
  \ELSE 
  \STATE Do the projection and get $\thetat_t$ by~\eqref{eq:GLB-projection}
  \ENDIF
  \STATE Compute $\betab_{t-1}$ by~\eqref{eq:GLB-confidence-radius}
  \STATE Select $X_t$ by~\eqref{eq:GLB-select-criteria}
  \STATE Receive the reward $r_t$
  \STATE Update $V_{t} = \gamma V_{t-1} + X_t X_t^\T +(1-\gamma)\lambda I_d$
  \STATE Compute $\thetah_{t+1}$ according to~\eqref{eq:GLB-estimator}
\ENDFOR
\end{algorithmic}
\end{algorithm}

\subsection{Algorithm and Regret Guarantee}
\label{sec:GLB-algorithm}
We propose \GLBweightours, which is a simpler algorithm with better theoretical guarantee compared to previous weight-based algorithm~\citep{arXiv'21:faury-driftingGLB}. The key improvement is owing to our refined analysis framework, which is compatible with a simple projection step.

\tb{Estimator.} At iteration $t$, we first adopt the quasi-maximum likelihood estimator (QMLE) without considering the projection onto the feasible domain. Specifically, the estimator $\thetah_t$ is the solution of the following weighted regularized estimation equation:
\begin{equation}
  \label{eq:GLB-estimator}
    \lambda c_\mu \theta +\sum_{s=1}^{t-1}\gamma^{t-s-1}\sbr{\mu(X_s^\T \theta) - r_s}X_s = 0.
\end{equation}
Given that $\thetah_t$ may not belong to the feasible set $\Theta$ and $c_\mu$ is defined over the parameter $\theta \in \Theta$, we need to perform the following projection step
\begin{equation}
  \label{eq:GLB-projection}
    \thetat_t = \argmin_{\theta \in \Theta}\|g_t(\thetah_t) - g_t(\theta)\|_{V_{t-1}^{-1}},
\end{equation}
where $V_{t} = \lambda I_d + \sum_{s=1}^{t} \gamma^{t-s} X_s X_s^\T$ and $g_t(\theta)$ is
\begin{equation}
  \begin{split}
      \label{eq:GLB-gt}
      g_t(\theta) \teq \lambda c_\mu \theta + \sum_{s=1}^{t-1}\gamma^{t-s-1}\mu(X_s^\T \theta)X_s.
  \end{split}
\end{equation}

However, previous work~\citep{arXiv'21:faury-driftingGLB} cannot conduct the same simple projection in the drifting case as stationary GLB or piecewise-stationary GLB, since they use different local norms to measure the bias and variance parts separately for estimation error analysis. Consequently, they have to design a complicated projection to ensure that the bias and variance parts could be measured by different local norms (see \citep[Section 4.1]{arXiv'21:faury-driftingGLB}, and  our restatements in Appendix~\ref{sec:GLB-previous-work}).

Our refined analysis framework is compatible with this projection operation, thanks to our analysis framework utilizing the same local norm for the bias and variance parts.

\tb{Upper Confidence Bounds.~~} For estimator~\eqref{eq:GLB-estimator} with projection~\eqref{eq:GLB-projection}, we construct following estimation error bound.
\begin{Lemma}
  \label{lemma:GLB-estimation-error}
  For any $\x \in \X$, $\gamma\in(0,1)$ and $\delta \in (0,1)$, with probability at least $1-\delta$, the following holds for all $t \in [T]$
  \begin{equation*}
    \begin{split}
      &\abs{\mu(\x^\T\thetat_t) - \mu(\x^\T\theta_t)} \\
      \leq& \frac{2k_\mu}{c_\mu}\sbr{\sum_{p=1}^{t-1}C(p)\norm{\theta_p - \theta_{p+1}}_2 + \betab_{t-1}\norm{\x}_{V_{t-1}^{-1}}},
    \end{split}
  \end{equation*}
where $C(p) = k_\mu L^2\sqrt{\frac{d}{\lambda}}\gamma^{\frac{t-1}{2}} \sqrt{\frac{\gamma^{-p}-1}{1-\gamma}}$, and $\betab_t$ is the radius of confidence region set by
\begin{equation}
  \label{eq:GLB-confidence-radius}
  \betab_t = \sqrt{\lambda}c_\mu S+R\sqrt{2\log\frac{1}{\delta}+d\log\sbr{1+\frac{L^2 (1-\gamma^{2t})}{\lambda d(1-\gamma^2)}}}.
\end{equation}
\end{Lemma}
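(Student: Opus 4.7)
My plan is to reduce the reward-space error to a parameter-space error, control the latter through $g_t$, and then decompose $g_t(\thetah_t)-g_t(\theta_t)$ into bias, noise, and regularization pieces using the same template that produced Lemma~\ref{lemma:LB-estimation-error}. First, I would apply the $k_\mu$-Lipschitzness of $\mu$ to obtain $|\mu(\x^\T\thetat_t)-\mu(\x^\T\theta_t)|\le k_\mu|\x^\T(\thetat_t-\theta_t)|$. Then, by the fundamental theorem of calculus, $g_t(\thetat_t)-g_t(\theta_t)=G_t(\thetat_t-\theta_t)$ with $G_t=\int_0^1\nabla g_t(v\thetat_t+(1-v)\theta_t)\,\diff v$. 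Since $\thetat_t\in\Theta$ by the projection step~\eqref{eq:GLB-projection}, $\theta_t\in\Theta$ by Assumption~\ref{ass:bounded-norm}, and $\Theta$ is convex, the lower bound $\dmu\ge c_\mu$ from Assumption~\ref{ass:link-function} holds along the entire segment, giving $G_t\succeq\lambda c_\mu I+c_\mu\sum_{s=1}^{t-1}\gamma^{t-s-1}X_sX_s^\T=c_\mu V_{t-1}$, i.e. $G_t^{-1}\preceq V_{t-1}^{-1}/c_\mu$. A Cauchy-Schwarz in the $G_t^{-1}$ inner product now yields
\[|\x^\T(\thetat_t-\theta_t)|\le\|\x\|_{G_t^{-1}}\|g_t(\thetat_t)-g_t(\theta_t)\|_{G_t^{-1}}\le\frac{1}{c_\mu}\|\x\|_{V_{t-1}^{-1}}\,\|g_t(\thetat_t)-g_t(\theta_t)\|_{V_{t-1}^{-1}}.\]
The projection optimality, together with $\theta_t\in\Theta$ being a feasible comparator in~\eqref{eq:GLB-projection}, gives $\|g_t(\thetah_t)-g_t(\thetat_t)\|_{V_{t-1}^{-1}}\le\|g_t(\thetah_t)-g_t(\theta_t)\|_{V_{t-1}^{-1}}$, which combined with the triangle inequality reduces the task to bounding $\|g_t(\thetah_t)-g_t(\theta_t)\|_{V_{t-1}^{-1}}$ up to a factor of $2$.

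\textbf{Decomposition and three sub-bounds.} Using $g_t(\thetah_t)=\sum_{s=1}^{t-1}\gamma^{t-s-1}r_sX_s$ from the estimation equation~\eqref{eq:GLB-estimator} and substituting $r_s=\mu(X_s^\T\theta_s)+\eta_s$, I would split
\[g_t(\thetah_t)-g_t(\theta_t)=\sum_{s=1}^{t-1}\gamma^{t-s-1}\bigl(\mu(X_s^\T\theta_s)-\mu(X_s^\T\theta_t)\bigr)X_s+\sum_{s=1}^{t-1}\gamma^{t-s-1}\eta_sX_s-\lambda c_\mu\theta_t\]
into a bias, a noise, and a regularization piece. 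For the bias piece, triangle inequality, $k_\mu$-Lipschitzness ($|\mu(X_s^\T\theta_s)-\mu(X_s^\T\theta_t)|\le k_\mu L\|\theta_s-\theta_t\|_2$), the telescoping $\|\theta_s-\theta_t\|_2\le\sum_{p=s}^{t-1}\|\theta_p-\theta_{p+1}\|_2$, a swap of the sums over $s$ and $p$, and then Cauchy-Schwarz on $\sum_{s=1}^{p}\gamma^{t-s-1}\|X_s\|_{V_{t-1}^{-1}}$ with the trace identity $\sum_{s=1}^{t-1}\gamma^{t-s-1}\|X_s\|_{V_{t-1}^{-1}}^2=\mathrm{tr}(V_{t-1}^{-1}(V_{t-1}-\lambda I))\le d$ give a bias bound of the form $k_\mu L\sqrt{d}\sum_p\gamma^{(t-1)/2}\sqrt{(\gamma^{-p}-1)/(1-\gamma)}\|\theta_p-\theta_{p+1}\|_2$. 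The regularization piece is controlled by $\|\lambda c_\mu\theta_t\|_{V_{t-1}^{-1}}\le\sqrt{\lambda}\,c_\mu S$, matching the first summand of $\betab_{t-1}$. For the noise piece I would invoke a self-normalized martingale inequality applied in the norm $\Vt_{t-1}^{-1}$ with $\Vt_{t-1}=\lambda I+\sum_{s=1}^{t-1}\gamma^{2(t-s-1)}X_sX_s^\T$, which produces exactly the square-root quantity in~\eqref{eq:GLB-confidence-radius}; the monotonicity $V_{t-1}\succeq\Vt_{t-1}$ (hence $V_{t-1}^{-1}\preceq\Vt_{t-1}^{-1}$) then transfers this bound to the $V_{t-1}^{-1}$ norm without loss, uniformly in $t$ with probability at least $1-\delta$.

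\textbf{Assembly and main obstacle.} Summing the three sub-bounds gives $\|g_t(\thetah_t)-g_t(\theta_t)\|_{V_{t-1}^{-1}}\le k_\mu L\sqrt{d}\sum_p\gamma^{(t-1)/2}\sqrt{(\gamma^{-p}-1)/(1-\gamma)}\|\theta_p-\theta_{p+1}\|_2+\betab_{t-1}$. Plugging back through the reduction of the first paragraph and applying $k_\mu$-Lipschitzness of $\mu$ once more produces the prefactor $2k_\mu/c_\mu$ multiplying $\|\x\|_{V_{t-1}^{-1}}$ times the above. For the bias summand I would absorb $\|\x\|_{V_{t-1}^{-1}}\le L/\sqrt{\lambda}$ (from Assumption~\ref{ass:bounded-norm} and $V_{t-1}\succeq\lambda I$) into each $p$-th term, which turns $\|\x\|_{V_{t-1}^{-1}}\cdot k_\mu L\sqrt{d}\,\gamma^{(t-1)/2}\sqrt{(\gamma^{-p}-1)/(1-\gamma)}$ into exactly $C(p)=k_\mu L^2\sqrt{d/\lambda}\,\gamma^{(t-1)/2}\sqrt{(\gamma^{-p}-1)/(1-\gamma)}$, while the variance summand retains $\|\x\|_{V_{t-1}^{-1}}$ and produces $\betab_{t-1}\|\x\|_{V_{t-1}^{-1}}$. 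The main obstacle is twofold: one, establishing $G_t\succeq c_\mu V_{t-1}$ cleanly, which is precisely why the projection~\eqref{eq:GLB-projection} onto $\Theta$ is needed so that the entire segment between $\thetat_t$ and $\theta_t$ lies in the region where $\dmu\ge c_\mu$; and two, keeping both bias and noise within the \emph{same} $V_{t-1}^{-1}$ local norm, which is the crux of the refined framework this paper advertises and avoids the intricate two-norm bookkeeping of~\citet{arXiv'21:faury-driftingGLB}.
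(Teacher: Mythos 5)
Your proposal is correct and follows essentially the same route as the paper's proof: the $k_\mu$-Lipschitz reduction, the mean-value-theorem identity $g_t(\thetat_t)-g_t(\theta_t)=G_t(\thetat_t-\theta_t)$ with $G_t\succeq c_\mu V_{t-1}$ on $\Theta$, the factor-$2$ from projection optimality plus the triangle inequality, the bias/noise/regularization split of $g_t(\thetah_t)-g_t(\theta_t)$ in the single $V_{t-1}^{-1}$ norm, the telescoping-and-trace bound for the bias, and the transfer of the self-normalized bound from $\Vt_{t-1}^{-1}$ to $V_{t-1}^{-1}$ via $V_{t-1}\succeq\Vt_{t-1}$. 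The only cosmetic difference is that you bound the per-step bias via Lipschitzness before telescoping while the paper telescopes first and then applies the mean value theorem termwise; both yield the identical expression for $C(p)$.
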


The proof of Lemma~\ref{lemma:GLB-estimation-error} is in Appendix~\ref{sec:GLB-estimation-error-proof}. Then, based on Lemma~\ref{lemma:GLB-estimation-error}, we can specify the arm selection criterion as
\begin{equation}
\label{eq:GLB-select-criteria}
\begin{split}
    X_t ={}&\argmax_{\x \in \X} \bbr{ \mu(\x^\T \thetat_t)+ \frac{2k_\mu}{c_\mu}\betab_{t-1}\norm{\x}_{V_{t-1}^{-1}}}.
\end{split}
\end{equation}
The overall algorithm is summarized in Algorithm~\ref{alg:GLB-WeightUCB}. 

Notice that the estimation equation~\eqref{eq:GLB-estimator} and the confidence radius~\eqref{eq:GLB-confidence-radius} are the same as those used in Algorithm 1 of~\citet{arXiv'21:faury-driftingGLB}. But importantly, the final (projected) estimators of the two approaches are significantly different. With a simpler projection operation and our refined analysis framework, we can immediately attain an improved regret guarantee for weight-based algorithm.

\begin{Thm}
  \label{thm:GLB-regret}
  For all $\gamma\in(1/T,1)$, $\lambda = d/c_\mu^2$, the regret of \GLBweightours (Algorithm~\ref{alg:GLB-WeightUCB}) is bounded with probability at least $1-1/T$, by
  \begin{equation}\nonumber
  \begin{split}
      \DReg_T\leq{}& \Ot\sbr{k_\mu^2\frac{1}{(1-\gamma)^{\sfrac{3}{2}}}P_T + \frac{k_\mu}{c_\mu}d(1-\gamma)^{\sfrac{1}{2}}T }.
  \end{split}
  \end{equation}
  By optimally setting $\gamma = 1 - \max\{1/T, \sqrt{k_\mu c_\mu P_T/(dT)}\}$, \GLBweightours achieves the following dynamic regret,
  \begin{equation}\nonumber
      \DReg_T \leq
      \begin{cases}
      \Ot\sbr{\frac{k_\mu^{5/4}}{c_\mu^{\sfrac{3}{4}}}d^{\sfrac{3}{4}} P_T^{\sfrac{1}{4}}T^{\sfrac{3}{4}}}  \mbox{ when } P_T \geq \frac{d}{k_\mu c_\mu T},&\vspace{2mm}\\
      \Ot\sbr{\frac{k_\mu}{c_\mu}d\sqrt{T}}  \quad\quad\quad \mbox{ when } 0\leq P_T < \frac{d}{k_\mu c_\mu T}.&
      \end{cases} 
  \end{equation}
\end{Thm}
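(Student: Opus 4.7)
The plan is to follow the standard optimism-based dynamic regret decomposition, with the bias/variance split provided by Lemma~\ref{lemma:GLB-estimation-error} and a discounted elliptical potential lemma to close the variance sum. Let $x_t^\star = \argmax_{\x \in \X} \mu(\x^\T \theta_t)$. On the high-probability event of Lemma~\ref{lemma:GLB-estimation-error} I apply the estimation bound once at $\x = x_t^\star$ and once at $\x = X_t$, and chain them through the arm-selection optimism~\eqref{eq:GLB-select-criteria}
\[
\mu((x_t^\star)^\T\thetat_t) + \tfrac{2k_\mu}{c_\mu}\betab_{t-1}\norm{x_t^\star}_{V_{t-1}^{-1}} \le \mu(X_t^\T\thetat_t) + \tfrac{2k_\mu}{c_\mu}\betab_{t-1}\norm{X_t}_{V_{t-1}^{-1}}
\]
to obtain the per-round bound $\mu((x_t^\star)^\T\theta_t) - \mu(X_t^\T\theta_t) \le \tfrac{4k_\mu}{c_\mu}(B_t + \betab_{t-1}\norm{X_t}_{V_{t-1}^{-1}})$, where $B_t = \sum_{p=1}^{t-1} C_t(p)\norm{\theta_p - \theta_{p+1}}_2$ with $C_t(p) = k_\mu L^2\sqrt{d/\lambda}\,\gamma^{(t-1)/2}\sqrt{(\gamma^{-p}-1)/(1-\gamma)}$.

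For the \emph{bias} sum, I swap the order of summation:
\[
\sum_{t=1}^T B_t = k_\mu L^2\sqrt{\tfrac{d}{\lambda}}\sum_{p=1}^{T-1}\norm{\theta_p-\theta_{p+1}}_2\sqrt{\tfrac{\gamma^{-p}-1}{1-\gamma}}\sum_{t=p+1}^T \gamma^{(t-1)/2},
\]
use $\sum_{t=p+1}^T \gamma^{(t-1)/2} \le 2\gamma^{p/2}/(1-\gamma)$, and exploit the algebraic collapse $\gamma^{p/2}\sqrt{(\gamma^{-p}-1)/(1-\gamma)} = \sqrt{(1-\gamma^p)/(1-\gamma)} \le (1-\gamma)^{-1/2}$ to get $\sum_t B_t \le 2k_\mu L^2\sqrt{d/\lambda}\,P_T/(1-\gamma)^{3/2}$. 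The choice $\lambda = d/c_\mu^2$ makes $\sqrt{d/\lambda} = c_\mu$, which cancels against the $\tfrac{4k_\mu}{c_\mu}$ prefactor to yield total bias contribution $\Ot(k_\mu^2 P_T/(1-\gamma)^{3/2})$.

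For the \emph{variance} sum, the same $\lambda$ choice makes $\sqrt{\lambda}c_\mu S = \sqrt{d}S$, so $\betab_t = \Ot(\sqrt{d})$. Cauchy--Schwarz together with a discounted elliptical potential lemma of the form $\sum_{t=1}^T \norm{X_t}_{V_{t-1}^{-1}}^2 = \Ot(d(1-\gamma)T)$---standard for the recursive update $V_t = \gamma V_{t-1} + X_t X_t^\T + (1-\gamma)\lambda I_d$, see e.g.\ \citep{NIPS'19:weighted-LB}---yields
\[
\tfrac{4k_\mu}{c_\mu}\sum_{t=1}^T \betab_{t-1}\norm{X_t}_{V_{t-1}^{-1}} \le \tfrac{k_\mu}{c_\mu}\Ot(\sqrt{d})\sqrt{T\cdot\Ot(d(1-\gamma)T)} = \Ot\bigl(\tfrac{k_\mu}{c_\mu}dT\sqrt{1-\gamma}\bigr).
\]
Summing with the bias contribution gives the first displayed $\gamma$-dependent bound. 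Balancing $k_\mu^2 P_T/(1-\gamma)^{3/2}$ against $\tfrac{k_\mu}{c_\mu}dT\sqrt{1-\gamma}$ sets $(1-\gamma)^2 = k_\mu c_\mu P_T/(dT)$, producing the $\Ot((k_\mu^{5/4}/c_\mu^{3/4})d^{3/4}P_T^{1/4}T^{3/4})$ rate when $P_T \ge d/(k_\mu c_\mu T)$; otherwise the corner $1-\gamma = 1/T$ kicks in and yields $\Ot((k_\mu/c_\mu)d\sqrt{T})$.

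The main obstacle I anticipate is the bias manipulation: the swap of sums together with the cancellation $\gamma^{p/2}\sqrt{(\gamma^{-p}-1)/(1-\gamma)} = \sqrt{(1-\gamma^p)/(1-\gamma)}$ is what produces the sharp $(1-\gamma)^{-3/2}$ rate; any coarser bound (e.g., naively controlling $\gamma^{-p}-1$ by $\gamma^{-T}$) would lose a power of $1-\gamma$ and degrade the final $T$-exponent back toward the $T^{4/5}$ rate of~\citep{arXiv'21:faury-driftingGLB}. The rest is a routine adaptation of the stationary GLB optimism proof together with the discounted potential bound, plus a union bound choosing $\delta = 1/T$ in Lemma~\ref{lemma:GLB-estimation-error} for the $1-1/T$ confidence. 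The simpler projection~\eqref{eq:GLB-projection} plays no role beyond ensuring $\thetat_t \in \Theta$ so that Lemma~\ref{lemma:GLB-estimation-error}, which is stated on $\Theta$, is applicable at the estimator actually used in the selection rule.
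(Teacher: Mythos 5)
Your proposal is correct and follows essentially the same route as the paper's proof in Appendix~\ref{sec:GLB-regret-proof}: the same optimism-based per-round decomposition via Lemma~\ref{lemma:GLB-estimation-error} and the selection rule~\eqref{eq:GLB-select-criteria}, the same summation swap and $\gamma^{p/2}\sqrt{\gamma^{-p}-1}\le 1$ collapse for the bias (yielding $(1-\gamma)^{-3/2}P_T$), the same Cauchy--Schwarz plus weighted potential lemma (Lemma~\ref{lemma:potential-lemma}, with $\log(1/\gamma)=\Ot(1-\gamma)$ for $\gamma\ge 1/T$) for the variance, and the same cancellations $\sqrt{d/\lambda}=c_\mu$ and $\sqrt{\lambda}c_\mu=\sqrt{d}$ from $\lambda=d/c_\mu^2$ before balancing the two terms. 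The only (immaterial) deviation is the confidence calibration: the paper takes $\delta=1/(2T^2)$ and union-bounds the two applications of the lemma, whereas you take $\delta=1/T$, but this only affects logarithmic factors.
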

Compared to \GLBweight (the best-known weight-based algorithm for drifting GLB)~\citep{arXiv'21:faury-driftingGLB}, focusing on the dependence on $d$, $P_T$, and $T$, we can see that our approach improves the regret from $\Ot(d^{\sfrac{9}{10}} P_T^{\sfrac{1}{5}}T^{\sfrac{4}{5}})$ to $\Ot(d^{\sfrac{3}{4}} P_T^{\sfrac{1}{4}}T^{\sfrac{3}{4}})$. Furthermore, our result also improves their result upon the $c_\mu$ dependence from $c_\mu^{-1}$ to $c_\mu^{-\sfrac{3}{4}}$.
\section{SELF-CONCORDANT BANDITS}
\label{sec:SCB}
This section studies Self-Concordant Bandits (SCB), an important subclass of GLB with many attractive structures.

\subsection{Problem Setting}
\label{sec:SCB-problem-setting}
For SCB, the reward's distribution belongs to a canonical exponential family: ${\P_\theta\mbr{r \givenn \x}} = \exp(r\x^\T\theta - b(\x^\T\theta) +c(r))$ where $b(\cdot)$ is a twice continuously differentiable function and $c(\cdot)$ is a real-valued function. Owing to the benign properties of exponential families, we have $\E\mbr{r\given \x} = b^{\prime}(\x^\T\theta)$ and $\Var\mbr{r\given \x} = b^{\prime\prime}(\x^\T\theta)$ where $b^\prime$ denotes the first derivative of the function $b$, and $b^{\prime\prime}$ denotes its second derivative. Then, we can introduce the (inverse) link function $\mu(\cdot) \teq b^{\prime}(\cdot)$ such that at $t \in [T]$ the following holds
\begin{equation}
  \label{eq:SCB-model-assume}
       \E\mbr{r_t\given X_t} = \mu(X_t^\T \theta_t), \Var\mbr{r_t\given X_t} = \dmu(X_t^\T \theta_t).
\end{equation}
SCB requires the link function satisfy $|\ddmu| \leq \dmu$, usually referred to general self-concordant property. We further introduce the notation $\eta_t = r_t - \mu(X_t^\T \theta_t)$ to denote the noise. SCB successfully models many important real-world applications and captures the reward structure. For example, choosing $\mu(x) = (1+e^{-x})^{-1}$ yields the Logistic Bandits (LogB), which is often adopted to model the binary-feedback reward in recommendation system~\citep{ICML'16:ZhangYJXZ16, NIPS'17:online-GLB, COLT'19:TS_LogB}.

We make several standard assumptions same as LB and GLB, including boundedness of feasible set and unknown regression parameters (Assumption~\ref{ass:bounded-norm}), and non-linearity measure on link function (Assumption~\ref{ass:link-function}). In addition, similar to~\citet{AISTATS'21:SCB-forgetting}, we need assumptions on boundedness of reward, and for the convenience of analysis we let $L = 1$ which means $\|\x\|_2 \leq 1$ for all $\x \in \X$.
\begin{Assum} \label{ass:bounded-rewards}
  The reward received at each round satisfies $0\leq r_t \leq m$ for all $t\in [T]$ and some constant $m>0$.
\end{Assum}
\subsection{Algorithm and Regret Guarantee}
\label{sec:SCB-algorithm}
We propose the \SCBweightours algorithm. Compared to GLB, we use a new local norm for projection and regret analysis which is the key to improving the order of $c_\mu^{-1}$.

\tb{Estimator.~~} At iteration $t$, we first adopt the same maximum likelihood estimator as GLB which is defined in~\eqref{eq:GLB-estimator}. Different from GLB, here we use a new local norm to perform the projection onto the feasible set $\Theta$,
\begin{equation}
  \label{eq:SCB-projection}
    \thetat_t = \argmin_{\theta \in \Theta}\norm{g_t(\thetah_t) - g_t(\theta)}_{H_{t}^{-1}(\theta)},
\end{equation}
where $g_t(\theta)$ is the same as~\eqref{eq:GLB-gt} while $H_t(\theta)$ is defined as
\begin{equation}
  \begin{split}
      \label{eq:SCB-Ht}
      H_t(\theta) \teq \lambda c_\mu I_d + \sum_{s=1}^{t-1}\gamma^{t-s-1}\dmu(X_s^\T \theta)X_sX_s^\T.
  \end{split}
\end{equation}
Notably, compared to $V_t$, $H_t(\theta)$ depends on the function curvature along the dynamics and thus can capture more \emph{local} information. Combining this projection step with the weighted version self-normalized concentration as restated in~\pref{thm:self-normalized-weight-SCB} will remove a constant $c_\mu^{-\sfrac{1}{2}}$ in regret bound.

\tb{Upper Confidence Bound.~~}
For estimator~\eqref{eq:GLB-estimator} with projection~\eqref{eq:SCB-projection}, we construct following estimation error bound.
\begin{Lemma}
  \label{lemma:SCB-estimation-error}
  For any $\x \in \X$, $\gamma\in(0,1)$ and $\delta \in (0,1)$, with probability at least $1-\delta$, the following holds for all $t \in [T]$
  \begin{equation}\nonumber
    \begin{split}
      {}&\abs{\mu(\x^\T\thetat_t) - \mu(\x^\T\theta_t)}\\
      \leq {}&\frac{\sqrt{4+8S} k_\mu}{\sqrt{c_\mu}}\bigg(\sum_{p=1}^{t-1}C(p)\norm{\theta_p - \theta_{p+1}}_2 + \betat_{t-1}\|\x\|_{V_{t-1}^{-1}}\bigg),
    \end{split}
  \end{equation}
  where $C(p) = L^2\sqrt{\frac{d}{\lambda}} \frac{k_\mu}{\sqrt{c_\mu}}\gamma^{\frac{t-1}{2}}\sqrt{\frac{\gamma^{-p}-1}{1-\gamma}} $, and $\betat_t$ is the radius of confidence region set by
\begin{equation}
  \label{eq:SCB-confidence-radius}
  \begin{split}
    \betat_t &= \frac{\sqrt{\lambda c_\mu}}{2 m }+\frac{2 m }{\sqrt{\lambda c_\mu}}\sbr{\log\frac{1}{\delta}+d \log 2}\\
    &+\frac{d m }{\sqrt{\lambda c_\mu}} \log \sbr{1+\frac{ L^2k_\mu(1-\gamma^{2t})}{\lambda c_\mu d(1-\gamma^{2})}}+\sqrt{\lambda c_\mu} S.
  \end{split}
\end{equation}
\end{Lemma}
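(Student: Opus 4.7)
The plan is to upgrade the proof of Lemma~\ref{lemma:GLB-estimation-error} by measuring errors in the curvature-aware local metric $H_t^{-1}(\cdot)$ dictated by the new projection~\eqref{eq:SCB-projection}, exploiting the self-concordance $|\ddmu|\le\dmu$ to avoid the $1/c_\mu$ penalty incurred by the crude Lipschitz argument used for GLB. As a first step I would invoke a self-concordance-type inequality (the standard Abeille--Faury--Calauzenes-style bound) to control $|\mu(\x^\T\thetat_t)-\mu(\x^\T\theta_t)|$ by a $\sqrt{\dmu(\x^\T\theta_t)}$-weighted quantity with constant $\sqrt{4+8S}$, the $(4+8S)$ coming from the uniform boundedness $|\x^\T(\thetat_t-\theta_t)|\leq 2LS=2S$ combined with self-concordance. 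A Cauchy--Schwarz step in the $H_t(\theta_t)$-metric, together with $\nabla g_t(\theta)=H_t(\theta)$ and a mean-value argument (with self-concordance used to replace the averaged Hessian by $H_t(\theta_t)$ up to a constant), reduces the task to bounding $\|g_t(\thetat_t)-g_t(\theta_t)\|_{H_t^{-1}(\theta_t)}$ and multiplying by $\sqrt{\dmu(\x^\T\theta_t)}\,\|\x\|_{H_t^{-1}(\theta_t)}$.

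Next I would use the projection step to replace $\thetat_t$ by $\thetah_t$. The optimality of $\thetat_t$ in~\eqref{eq:SCB-projection}, together with $\theta_t\in\Theta$, yields
$$\|g_t(\thetat_t)-g_t(\thetah_t)\|_{H_t^{-1}(\thetat_t)}\leq\|g_t(\theta_t)-g_t(\thetah_t)\|_{H_t^{-1}(\theta_t)};$$
a triangle inequality, combined with a self-concordance-based interchange of $H_t^{-1}(\thetat_t)$ with $H_t^{-1}(\theta_t)$ at constant cost, then gives $\|g_t(\thetat_t)-g_t(\theta_t)\|_{H_t^{-1}(\theta_t)}\leq 2\|g_t(\thetah_t)-g_t(\theta_t)\|_{H_t^{-1}(\theta_t)}$. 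Plugging in the estimation equation~\eqref{eq:GLB-estimator} decomposes the right-hand side into a drift part $\sum_{s<t}\gamma^{t-s-1}(\mu(X_s^\T\theta_s)-\mu(X_s^\T\theta_t))X_s$, a stochastic part $\sum_{s<t}\gamma^{t-s-1}\eta_s X_s$, and a regularization part $-\lambda c_\mu\theta_t$. The stochastic part is controlled in $H_t^{-1}(\theta_t)$-norm by the weighted self-normalized concentration restated in~\pref{thm:self-normalized-weight-SCB}, producing the radius $\betat_{t-1}$ (which absorbs the $\sqrt{\lambda c_\mu}S$ regularization contribution). The drift part is handled by triangle inequality, the $k_\mu$-Lipschitzness of $\mu$ applied termwise, the relaxation $\|X_s\|_{H_t^{-1}(\theta_t)}\leq c_\mu^{-1/2}\|X_s\|_{V_{t-1}^{-1}}$ coming from $H_t(\theta_t)\succeq c_\mu V_{t-1}$, the telescoping $\theta_s-\theta_t=\sum_{p=s}^{t-1}(\theta_p-\theta_{p+1})$, and swapping the order of summation with geometric estimates in $\gamma^{t-s-1}$, producing exactly the $\sum_p C(p)\|\theta_p-\theta_{p+1}\|_2$ expression with $C(p)$ as stated. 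Finally, converting $\sqrt{\dmu(\x^\T\theta_t)}\,\|\x\|_{H_t^{-1}(\theta_t)}$ into a multiple of $\|\x\|_{V_{t-1}^{-1}}$ via $\dmu\leq k_\mu$ and $H_t(\theta_t)\succeq c_\mu V_{t-1}$, and collecting all constants, yields the prefactor $\sqrt{4+8S}\,k_\mu/\sqrt{c_\mu}$ as claimed.

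The main obstacle is the $\theta$-dependent projection norm $H_t^{-1}(\theta)$: the optimality in~\eqref{eq:SCB-projection} controls the left-hand side in $H_t^{-1}(\thetat_t)$ while the right-hand side is measured in $H_t^{-1}(\theta_t)$, so a careful invocation of self-concordance, rather than mere Lipschitzness, is required to transfer between these two metrics at only multiplicative cost and avoid the exponential-in-$S$ blow-up that a naive bound would incur. This delicate transfer is precisely what improves the $c_\mu$-dependence of the variance term from $c_\mu^{-1}$ in Lemma~\ref{lemma:GLB-estimation-error} to $c_\mu^{-1/2}$ here, and it is where the self-concordant regularity $|\ddmu|\le\dmu$ plays its essential role.
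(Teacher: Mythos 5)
Your overall skeleton --- mean-value theorem on $g_t$, Cauchy--Schwarz in a curvature-aware metric, projection optimality plus a triangle inequality to replace $\thetat_t$ by $\thetah_t$, decomposition into drift/noise/regularization, the weighted self-normalized bound (\pref{thm:self-normalized-weight-SCB}) for the noise, and $H_t(\theta)\succeq c_\mu V_{t-1}$ plus telescoping and geometric sums for the drift --- is the paper's. But the step you yourself flag as the main obstacle is resolved incorrectly. You propose: first apply projection optimality to get $\norm{g_t(\thetat_t)-g_t(\thetah_t)}_{H_t^{-1}(\thetat_t)}\le\norm{g_t(\theta_t)-g_t(\thetah_t)}_{H_t^{-1}(\theta_t)}$, then a triangle inequality in the $H_t^{-1}(\theta_t)$-metric, then ``a self-concordance-based interchange of $H_t^{-1}(\thetat_t)$ with $H_t^{-1}(\theta_t)$ at constant cost.'' No polynomial-cost interchange of that kind exists: the property $|\ddmu|\le\dmu$ only yields $\dmu(X_s^\T\thetat_t)\le e^{|X_s^\T(\thetat_t-\theta_t)|}\dmu(X_s^\T\theta_t)$, hence $H_t(\thetat_t)\preceq e^{2S}H_t(\theta_t)$, so the direct transfer costs $e^{S}$ --- exactly the exponential-in-$S$ blow-up you say you intend to avoid --- and it would destroy the claimed $\sqrt{4+8S}$ prefactor. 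The paper never performs this interchange. It carries out Cauchy--Schwarz and the triangle inequality in the $G_t^{-1}(\theta_t,\thetat_t)$-metric, then invokes Lemma~\ref{lemma:SCB-G-H}, which compares $G_t(\theta_1,\theta_2)$ to \emph{each} of $H_t(\theta_1)$ and $H_t(\theta_2)$ from the favorable (polynomial) direction, $G_t(\theta_1,\theta_2)\succeq(1+2S)^{-1}H_t(\theta_i)$. Each of the two triangle-inequality pieces is thereby pushed into its own preferred $H$-metric at cost $\sqrt{1+2S}$, the projection optimality is applied only to the piece measured in $H_t^{-1}(\thetat_t)$, and $\norm{\x}_{G_t^{-1}(\theta_t,\thetat_t)}\le c_\mu^{-1/2}\norm{\x}_{V_{t-1}^{-1}}$ closes the argument with total constant $2\sqrt{1+2S}=\sqrt{4+8S}$.

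A second, related issue is your opening step, which invokes an Abeille--Faury-style inequality to control $|\mu(\x^\T\thetat_t)-\mu(\x^\T\theta_t)|$ by a $\sqrt{\dmu(\x^\T\theta_t)}$-weighted quantity with a $\sqrt{4+8S}$ constant. The polynomial-in-$S$ comparison between the integrated slope and $\dmu$ again holds only in the lower-bound direction (Lemma~\ref{lemma:self-concordant-properties}); the upper bound carries the factor $(e^{|z_1-z_2|}-1)/|z_1-z_2|$, exponential in $S$. The paper simply uses $k_\mu$-Lipschitzness of $\mu$ at this point, which is all the stated lemma requires, since its prefactor is $k_\mu$ and its arm norm is $\norm{\x}_{V_{t-1}^{-1}}$ rather than a $\dmu$-weighted norm. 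With these two steps replaced by the paper's versions --- plain Lipschitzness up front, and the comparison routed through $G_t(\theta_t,\thetat_t)$ via Lemma~\ref{lemma:SCB-G-H} rather than a direct $H_t(\thetat_t)\leftrightarrow H_t(\theta_t)$ transfer --- the remainder of your outline goes through as described.
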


The proof of Lemma~\ref{lemma:SCB-estimation-error} is in Appendix~\ref{sec:SCB-estimation-error-proof}. Based on Lemma~\ref{lemma:SCB-estimation-error}, we can specify the arm selection criterion as
\begin{equation}
\label{eq:SCB-select-criteria}
    X_t = \argmax_{\x \in \X} \bbr{ \mu(\x^\T \thetat_t)+ 2\sqrt{1+2S}\frac{k_\mu}{\sqrt{c_\mu}}\betat_{t-1}\|\x\|_{V_{t-1}^{-1}}}.
\end{equation}

Our algorithm for SCB (named \SCBweightours) follows the same procedure of Algorithm~\ref{alg:GLB-WeightUCB}, and the difference is that $\thetat_t$ is computed by~\eqref{eq:SCB-projection}, $\betat_{t-1}$ is computed by~\eqref{eq:SCB-confidence-radius} and $X_t$ is computed by~\eqref{eq:SCB-select-criteria}. Further, we have the following guarantee for \SCBweightours algorithm.
\begin{Thm}
  \label{thm:SCB-regret}
  For all $\gamma\in(1/T,1)$, $\lambda = d\log(T)/c_\mu$, the dynamic regret of \SCBweightours is bounded with probability at least $1-1/T$, by
  \begin{equation}\nonumber
  \begin{split}
    \DReg_T\leq\Ot\sbr{\frac{k_\mu^2}{\sqrt{c_\mu}}\frac{1}{(1-\gamma)^{\sfrac{3}{2}}}P_T + \frac{k_\mu}{\sqrt{c_\mu}}d(1-\gamma)^{\sfrac{1}{2}}T}.
  \end{split}
  \end{equation}
  By setting $\gamma = 1-\max\{1/T, \sqrt{k_\mu P_T/(dT)}\}$, we achieve
  \begin{equation}\nonumber
      \DReg_T \leq
      \begin{cases}
      \Ot\sbr{\frac{k_\mu^{\sfrac{5}{4}}}{c_\mu^{\sfrac{1}{2}}}d^{\sfrac{3}{4}} P_T^{\sfrac{1}{4}}T^{\sfrac{3}{4}}} & \mbox{ when } P_T \geq \frac{d}{k_\mu T},\vspace{2mm}\\
      \Ot\sbr{\frac{k_\mu}{c_\mu^{\sfrac{1}{2}}}d\sqrt{T}} & \mbox{ when } 0\leq P_T < \frac{d}{k_\mu T}.
      \end{cases} 
  \end{equation}
\end{Thm}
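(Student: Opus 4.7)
The plan is to follow the standard UCB template---decompose the instantaneous regret via optimism and Lemma~\ref{lemma:SCB-estimation-error}, split the accumulation into a bias sum and a variance sum, bound each separately, and tune $\gamma$. The critical feature enabling a clean argument is that Lemma~\ref{lemma:SCB-estimation-error} controls bias and variance in the \emph{same} local norm $V_{t-1}^{-1}$, which removes the need for an auxiliary covariance.

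\textbf{Step 1 (regret decomposition).} Let $X_t^\star \in \argmax_{\x\in\X}\mu(\x^\T\theta_t)$ and write $b_t \teq \sum_{p<t} C(p)\|\theta_p-\theta_{p+1}\|_2$ for the bias budget appearing in Lemma~\ref{lemma:SCB-estimation-error}. A key structural observation is that $\sqrt{4+8S}=2\sqrt{1+2S}$, so the variance coefficient in Lemma~\ref{lemma:SCB-estimation-error} matches the UCB multiplier in~\eqref{eq:SCB-select-criteria} exactly. Applying Lemma~\ref{lemma:SCB-estimation-error} at $\x=X_t^\star$, then using the argmax property of $X_t$ to pass to the UCB score at $X_t$, and finally applying Lemma~\ref{lemma:SCB-estimation-error} a second time at $\x=X_t$ to convert $\mu(X_t^\T\thetat_t)$ back to $\mu(X_t^\T\theta_t)$, one obtains
$$r_t \leq C_0\,\frac{k_\mu}{\sqrt{c_\mu}}\bigl(b_t + \betat_{t-1}\|X_t\|_{V_{t-1}^{-1}}\bigr)$$
for an absolute constant $C_0$. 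Summing over $t\in[T]$ splits $\DReg_T$ into a bias sum $\sum_t b_t$ and a variance sum $\sum_t\betat_{t-1}\|X_t\|_{V_{t-1}^{-1}}$.

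\textbf{Step 2 (bias sum).} Using the elementary bound $\gamma^{(t-1)/2}\sqrt{(\gamma^{-p}-1)/(1-\gamma)} \leq \gamma^{(t-1-p)/2}/\sqrt{1-\gamma}$ and swapping the order of summation, the inner geometric series $\sum_{t>p}\gamma^{(t-1-p)/2}\leq 2/(1-\gamma)$ gives $\sum_{t=1}^T b_t \lesssim (k_\mu/\sqrt{c_\mu})\sqrt{d/\lambda}\,P_T/(1-\gamma)^{3/2}$. With $\lambda = d\log(T)/c_\mu$ one has $\sqrt{d/\lambda}=\Ot(\sqrt{c_\mu})$, so after multiplying by the outer $k_\mu/\sqrt{c_\mu}$ the bias contributes $\Ot((k_\mu^2/\sqrt{c_\mu})P_T/(1-\gamma)^{3/2})$, the first term of the theorem.

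\textbf{Step 3 (variance sum and tuning).} By Cauchy--Schwarz, $\sum_t\betat_{t-1}\|X_t\|_{V_{t-1}^{-1}} \leq \betat_T\sqrt{T\sum_t\|X_t\|_{V_{t-1}^{-1}}^2}$, and~\eqref{eq:SCB-confidence-radius} with $\lambda c_\mu = d\log T$ gives $\betat_T=\Ot(\sqrt{d})$. For the weighted elliptical potential, combine the recursion $V_t = \gamma V_{t-1} + X_t X_t^\T + (1-\gamma)\lambda I_d$ with $V_{t-1}\succeq \lambda I$ to obtain $\det V_t \geq \gamma^d\det V_{t-1}(1+\|X_t\|_{V_{t-1}^{-1}}^2)$; telescoping, bounding $\log\det V_T$ by AM--GM, and using $\log(1+x)\geq x/2$ for $x\leq L^2/\lambda\leq 1$ (valid since $\lambda\geq 1$ under our choice) yields
$$\sum_{t=1}^T\|X_t\|_{V_{t-1}^{-1}}^2 \lesssim Td(1-\gamma) + d\log\frac{1}{1-\gamma},$$
so the variance contribution is $\Ot((k_\mu/\sqrt{c_\mu})dT\sqrt{1-\gamma})$, the second term. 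Setting $\delta = 1/T$ delivers the $1-1/T$ probability, and choosing $1-\gamma = \max\{1/T,\sqrt{k_\mu P_T/(dT)}\}$ balances the two terms; separating the regimes $P_T\geq d/(k_\mu T)$ and $P_T < d/(k_\mu T)$ yields the two cases.

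\textbf{Main obstacle.} The sharpness of Step~3 is what I expect to be delicate: obtaining the $Td(1-\gamma)$ scaling on $\sum_t\|X_t\|_{V_{t-1}^{-1}}^2$ while using only the single covariance $V_{t-1}^{-1}$---rather than invoking an auxiliary $\Vt_t = \lambda I + \sum_s\gamma^{2(t-s)}X_s X_s^\T$ as in~\citet{NIPS'19:weighted-LB}, or settling for a cruder $d/(1-\gamma)$ bound that would distort the final exponents. This is precisely what the refined framework buys: because Lemma~\ref{lemma:SCB-estimation-error} already measures bias and variance in a matched $V_{t-1}^{-1}$ norm, the potential analysis needs to handle only a single matrix, and the $Hessian$-aware projection~\eqref{eq:SCB-projection} is what further removes the extra $c_\mu^{-1/2}$ factor relative to the GLB case.
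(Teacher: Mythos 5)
Your proposal is correct and follows essentially the same route as the paper's proof: optimism via Lemma~\ref{lemma:SCB-estimation-error} with the matched $V_{t-1}^{-1}$ norm, a bias/variance split of the summed regret, the geometric-series bound yielding $P_T/(1-\gamma)^{3/2}$ for the bias, Cauchy--Schwarz plus the weighted elliptical potential bound for the variance, and the same tuning of $\gamma$. The only cosmetic difference is that you re-derive the weighted potential bound by determinant telescoping, whereas the paper invokes it as a cited lemma (Lemma~\ref{lemma:potential-lemma}); your derivation and the resulting $Td(1-\gamma)$ scaling are correct.
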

Compared to GLB, we improve the order of $c_\mu$ from $c_\mu^{-1}$ to $c_\mu^{-\sfrac{1}{2}}$  by exploiting the self-concordant properties. At the same time, in near-stationary environments ($P_T$ is small enough), our result can recover to the performance of \SCBstatic algorithm~\citep{ICML'20:logistic-bandits}. The proof of~\pref{thm:SCB-regret} is presented in Appendix~\ref{sec:SCB-regret-proof}.

In addition, for the piecewise-stationary SCB, we propose \SCBweightourspw algorithm that gets rid of influence of $c_\mu$ and thus directly improves upon~\citep{AISTATS'21:SCB-forgetting}.
\begin{Thm}
  \label{thm:SCB-PW-regret}       
  For all $\gamma\in(1/2,1)$, $D = \log(T)/\log(1/\gamma)$ and $\lambda = d\log(T)/c_\mu$, the regret of SCB-PW-WeightUCB is bounded with probability at least $1-1/T$, by
  \begin{equation}\nonumber
  \begin{split}
    \DReg_T\leq{}& \Ot\sbr{\frac{1}{1-\gamma}\Gamma_T + \frac{1}{\sqrt{1-\gamma}} + d\sqrt{(1-\gamma)}T}.
  \end{split}
  \end{equation}
  By setting $\gamma = 1-\max\{1/T, \sbr{\Gamma_T/(dT)}^{\sfrac{2}{3}}\}$, we achieve
  \begin{equation}\nonumber
      \DReg_T \leq
      \begin{cases}
      \Ot\sbr{d^{\sfrac{2}{3}}\Gamma_T^{\sfrac{1}{3}}T^{\sfrac{2}{3}}} & \mbox{ when } \Gamma_T \geq d/\sqrt{T},\vspace{2mm}\\
      \Ot\sbr{d\sqrt{T}} & \mbox{ when } 0\leq \Gamma_T < d/\sqrt{T}.
      \end{cases} 
  \end{equation}
\end{Thm}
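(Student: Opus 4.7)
The plan is to mirror the structure of the drifting-SCB proof (Theorem~\ref{thm:SCB-regret}) while replacing the $P_T$-based bias analysis by one tailored to the piecewise-stationary setting. First I would establish an analogue of Lemma~\ref{lemma:SCB-estimation-error} for the piecewise-stationary case: the estimation error at round $t$ decomposes into a bias term $\sum_{p=1}^{t-1} C(p)\norm{\theta_p - \theta_{p+1}}_2$ with $C(p)$ decaying exponentially in the age $t-p$, plus a variance term proportional to $\betat_{t-1}\norm{X_t}_{V_{t-1}^{-1}}$. The arm-selection rule~\eqref{eq:SCB-select-criteria}, combined with the $H_t(\theta)$-based projection~\eqref{eq:SCB-projection}, then yields an instantaneous regret bounded by twice the sum of these two pieces (up to the $\sqrt{1+2S}\,k_\mu/\sqrt{c_\mu}$ prefactor, which matches the $c_\mu^{-\sfrac{1}{2}}$ scaling seen in Theorem~\ref{thm:SCB-regret}).

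Next I would separate the $T$ rounds into ``good'' and ``bad'' via the effective memory window $D=\log(T)/\log(1/\gamma)\asymp(1-\gamma)^{-1}\log T$. In the piecewise-stationary setting, $\norm{\theta_p - \theta_{p+1}}_2=0$ except at the (at most $\Gamma_T$) change instants, where it is bounded by $2S$. For rounds $t$ whose window $[t-D,t-1]$ contains no change instant, every surviving term in the bias sum is weighted by $C(p)\lesssim \gamma^{D/2}=\Ot(1/T)$, so the per-round bias contribution is negligible. For the remaining ``bad'' rounds---of which there are at most $\Gamma_T D$---I would simply bound the instantaneous regret by a constant using boundedness of rewards (Assumption~\ref{ass:bounded-rewards}) and of parameters (Assumption~\ref{ass:bounded-norm}). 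This yields a bad-round contribution of $\Ot(\Gamma_T/(1-\gamma))$, matching the first term in the stated bound.

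For the good-round contribution I would aggregate the variance terms $\betat_{t-1}\norm{X_t}_{V_{t-1}^{-1}}$ over $t$ using Cauchy--Schwarz together with a weighted elliptic-potential lemma (the SCB analogue of the potential estimate already deployed in Appendix~\ref{sec:SCB-regret-proof}), which gives $\sum_t\min(1,\norm{X_t}^2_{V_{t-1}^{-1}}) = \Ot(d/(1-\gamma))$. Combined with $\betat_{t-1} = \Ot(\sqrt{d/c_\mu})$, this sums to the $\Ot(d\sqrt{(1-\gamma)T})$ term; the additive $\Ot(1/\sqrt{1-\gamma})$ piece arises from the standard truncation step that peels off the ``large norm'' early rounds before the telescoping portion of the elliptic potential. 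Finally, optimizing over $\gamma$ with $1-\gamma = \max\{1/T,(\Gamma_T/(dT))^{\sfrac{2}{3}}\}$ balances the bad-round contribution $\Gamma_T/(1-\gamma) = \Gamma_T^{\sfrac{1}{3}} d^{\sfrac{2}{3}} T^{\sfrac{2}{3}}$ against the variance contribution $d\sqrt{(1-\gamma)T} = d^{\sfrac{2}{3}}\Gamma_T^{\sfrac{1}{3}}T^{\sfrac{2}{3}}$, producing the claimed $\Ot(d^{\sfrac{2}{3}}\Gamma_T^{\sfrac{1}{3}}T^{\sfrac{2}{3}})$ bound, and the near-stationary fallback $\Ot(d\sqrt T)$ when $\Gamma_T < d/\sqrt T$.

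The main technical obstacle is the clean separation between bad and good rounds in the bias estimate: I need the exponentially-decaying weights $C(p)$ to absorb the $\Gamma_T$ changepoints that lie outside the window, while simultaneously the self-concordant structure (via the $H_t(\theta)$-based projection) must remove the extra $c_\mu^{-\sfrac{1}{2}}$ factor that would otherwise persist from the straight GLB analysis. In the piecewise-stationary case the cancellation is sharper than in the drifting case because the bias no longer has a $P_T$-style residual: once the window $D$ is chosen as $\log T/\log(1/\gamma)$, the bias outside bad windows collapses to $\Ot(1/T)$ and hence sums to a negligible $\Ot(1)$ term. Locking these two pieces together, the optimization over $\gamma$ is routine and gives the stated regret.
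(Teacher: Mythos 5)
Your high-level skeleton matches the paper's: split the horizon into the at most $\Gamma_T D$ rounds within $D$ of a change point (contributing $\Ot(\Gamma_T/(1-\gamma))$ since $D=\log T/\log(1/\gamma)\leq 2\gamma\log T/(1-\gamma)$), kill the pre-window bias via $\gamma^{D}=1/T$, and control the remaining rounds with a weighted elliptical-potential argument before balancing $\gamma$. However, there is a genuine gap at the heart of the argument: you propose to reuse the bonus-based selection rule \eqref{eq:SCB-select-criteria} and the drifting-SCB estimation-error lemma, and you assert without a mechanism that the $H_t(\theta)$-based projection ``removes the extra $c_\mu^{-\sfrac{1}{2}}$ factor.'' It does not. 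In the drifting analysis the factor $c_\mu^{-\sfrac{1}{2}}$ arises precisely from converting $\norm{\x}_{G_t^{-1}(\theta_t,\thetat_t)}$ into $\norm{\x}_{V_{t-1}^{-1}}$ via $G_t\succeq (1+2S)^{-1}c_\mu V_{t-1}$, a conversion that is forced on you the moment the algorithm's bonus is expressed in the $V_{t-1}^{-1}$ norm. Carrying your plan through therefore yields $\Ot\big(\Gamma_T/(1-\gamma)+ k_\mu c_\mu^{-\sfrac{1}{2}}d\sqrt{1-\gamma}\,T\big)$ and, after tuning, $\Ot(c_\mu^{-\sfrac{1}{3}}d^{\sfrac{2}{3}}\Gamma_T^{\sfrac{1}{3}}T^{\sfrac{2}{3}})$ --- exactly the prior bound of \citet{AISTATS'21:SCB-forgetting} that \pref{thm:SCB-PW-regret} is meant to improve upon; the claimed bound has no $c_\mu$ dependence at all.

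The missing idea is that SCB-PW-WeightUCB is not the bonus-based algorithm: it selects $(X_t,\thetat_t)=\argmax_{\x\in\X,\,\theta\in\C_t(\delta)}\mu(\x^\T\theta)$ over a confidence set $\C_t(\delta)$ defined in the $H_t^{-1}(\theta)$ norm (\pref{lemma:SCB-PW-confidence-set}), with the pre-window bias and pre-window noise absorbed into the radius $\rho_t$ through terms of order $\gamma^{D}/((1-\gamma)\sqrt{\lambda c_\mu})$ --- which is also where the additive $\Ot(1/\sqrt{1-\gamma})$ term actually comes from, not from truncating early rounds of the potential. With parameter-based optimism the instantaneous regret is bounded by $\alpha(X_t,\thetat_t,\thetah_t)\norm{X_t}_{G_t^{-1}(\thetat_t,\thetah_t)}\rho_t$ plus the analogous term for $\theta_t$, and the curvature $\alpha$ stays \emph{inside} the local norm: Cauchy--Schwarz with $\sum_t\alpha(X_t,\cdot,\cdot)\leq k_\mu T$ and \pref{lemma:potential-lemma} applied to $\Xt_t=\sqrt{\alpha}\,X_t$ in the $G_t^{-1}$ geometry gives $\Ot(k_\mu d\sqrt{1-\gamma}\,T)$ with $c_\mu$ entering only through $\lambda c_\mu=d\log T$, which the choice of $\lambda$ makes $c_\mu$-free. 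Without switching to this selection rule and this decomposition, the stated $c_\mu$-independent bound cannot be reached.
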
   
The overall algorithm and analysis are in Appendix~\ref{sec:SCB-PW}.
\section{REFINED ANALYSIS FRAMEWORK}
\label{sec:analysis-framework}
This section presents a proof sketch for Lemma~\ref{lemma:LB-estimation-error} (estimation error analysis for weighted linear bandits), which also serves as a description of our proposed analysis framework.

\begin{proof}[Proof Sketch]
From the model assumption~\eqref{eq:LB-model} and the estimator~\eqref{eq:LB-estimator}, the estimation error can be split into two parts,
\begin{equation*}
  \begin{split}
  \thetah_t - \theta_t  &= \underbrace{V_{t-1}^{-1}\sbr{\sum_{s=1}^{t-1}\gamma^{t-s-1}  X_s X_s^\T \sbr{\theta_s -\theta_t}}}_{\bias} \\
  &\qquad + \underbrace{V_{t-1}^{-1}\sbr{ \sum_{s=1}^{t-1}\gamma^{t-s-1}\eta_sX_s -\lambda\theta_t}}_{\variance},
  \end{split}
\end{equation*}
where the \emph{bias} part is caused by the parameter drifting, and the \emph{variance} part is due to the stochastic noise. Then, by the Cauchy-Schwarz inequality, for any $\x\in\X$,
\begin{equation}
\begin{split}
  \label{eq:LB-decompose}
    |\x^\T (\thetah_t - \theta_t)| \leq \norm{\x}_{V_{t-1}^{-1}}(A_t+B_t),\\
\end{split}
\end{equation}
where $A_t = \|\sum_{s=1}^{t-1}\gamma^{t-s-1}  X_s X_s^\T \sbr{\theta_s -\theta_t}\|_{V_{t-1}^{-1}}$ and $B_t = \|\sum_{s=1}^{t-1}\gamma^{t-s-1}\eta_sX_s -\lambda\theta_t\|_{V_{t-1}^{-1}}$.

Choosing an appropriate local norm for~\eqref{eq:LB-decompose} is the key to simplify and improve the estimation error analysis. We note that the previous analysis~\citep{NIPS'19:weighted-LB} had to use \emph{different} local norms: using $l_2$-norm in the bias part, and $V_{t-1}^{-1}\Vt_{t-1}V_{t-1}^{-1}$-norm in the variance part, namely,
\begin{equation}
  \begin{split}
    \label{eq:previous-decompose}
      |\x^\T (\thetah_t - \theta_t)| \leq \norm{\x}_{2}A_t'+\norm{\x}_{V_{t-1}^{-1}\Vt_{t-1}V_{t-1}^{-1}}B_t',\\
  \end{split}
\end{equation}
where $A_t' = \|V_{t-1}^{-1}\sum_{s=1}^{t-1}\gamma^{t-s-1}  X_s X_s^\T \sbr{\theta_s -\theta_t}\|_{2}$ and $B_t' = \|\sum_{s=1}^{t-1}\gamma^{t-s-1}\eta_sX_s -\lambda\theta_t\|_{\Vt_{t-1}^{-1}}$ and $\Vt_{t} = \lambda I_d + \sum_{s=1}^{t} \gamma^{2(t-s)} X_s X_s^\T$. Since the need for using sliding-window analysis to analyze the bias part, they have to use $l_2$-norm to get the format of $A_t'$. For the variance part, to use weighted version of self-normalized concentration~(\pref{thm:self-normalized-weight-LB}), they use $V_{t-1}^{-1}\Vt_{t-1}V_{t-1}^{-1}$-norm to control $\x$ term so that $B_t'$ term can be normed by $\Vt_{t-1}^{-1}$. As an improvement, we can directly use the \emph{same} $V_{t-1}^{-1}$-norm to control both parts, which benefits from our new analysis for the bias part and modified analysis for the variance part. 

\tb{Bias Part Analysis.}~~ The first step is to extract the variations of underlying parameters as follows,
\begin{align*}
  A_t \leq L \sum_{p=1}^{t-1} \sum_{s=1}^{p}\gamma^{t-s-1}\norm{X_s}_{V_{t-1}^{-1}}\|\theta_p -\theta_{p+1}\|_2.
\end{align*}
Term $\sum_{s=1}^{p}\gamma^{t-s-1}\norm{X_s}_{V_{t-1}^{-1}}$ should be able to further derive an expression about discounted factor $\gamma$, which can control the variation item. After some derivation, we get
  \begin{equation}\nonumber
    \begin{split}
      A_t \leq L\sqrt{d} \sum_{p=1}^{t-1} \gamma^{\frac{t-1}{2}} \sqrt{\frac{\gamma^{-p}-1}{1-\gamma}}\norm{\theta_p -\theta_{p+1}}_2,
    \end{split}
  \end{equation}
  where the variation item is only controlled by the discounted factor $\gamma$ instead of a virtual window size.
  
  \tb{Variance Part Analysis.}~~ Based on the definition of $V_{t}$ and $\Vt_{t}$, we can find that $V_{t} \succeq \Vt_t$, so we have $B_t \leq B_t' \leq \beta_{t-1}$ where $\beta_{t-1}$ is the confidence radius~\eqref{eq:LB-confidence-radius} and the second inequality is by~\pref{thm:self-normalized-weight-LB}. So we keep the same confidence bound $\beta_{t-1}$ while only need to compute $V_{t-1}^{-1}$ instead of $V_{t-1}^{-1}\Vt_{t-1}V_{t-1}^{-1}$ when doing the arm selection.

  Combining the analysis for bias and variance parts, we can finish the proof of Lemma~\ref{lemma:LB-estimation-error}.
\end{proof}
\begin{Remark}
  The key step~\eqref{eq:LB-decompose} in our analysis framework also resolves the projection issue in GLB. Specifically, after the projection step, the bias-variance decomposition can only be performed in $V_{t-1}^{-1}$-norm. To accommodate previous analysis~\eqref{eq:previous-decompose},~\citet{arXiv'21:faury-driftingGLB} have to inject a highly complex projection operation in the algorithm, whereas our framework already satisfies this condition owing to the usage of the same $V_{t-1}^{-1}$-norm for the bias and variance parts.
\end{Remark}

\section{EXPERIMENTS}
\label{sec:experiments}
In this section, we further empirically examine the performance of our proposed algorithms. We present two synthetic experiments on drifting LB and GLB, respectively. For each experiment, we set the dimension of the feature space to $d=2$, the number of rounds to $T=6000$, and the number of arms to $n=50$. The features of each arm are sampled from the normal distribution $\mathcal{N}(0,1)$ and subsequently rescaled to satisfy $L=1$. We initialize the time-varying parameter $\theta_t$ to $[1,0]$ and rotate it uniformly counterclockwise around the unit circle, completing one full revolution from $0$ to $2\pi$ over the course of $T$ rounds and returning to the starting point $[1,0]$.

\begin{figure}[!t]
    \centering
    \subfloat[LB: Cumulative regret]{ \label{figure:LB-regret} 
        \includegraphics[width=0.49\textwidth, valign=t]{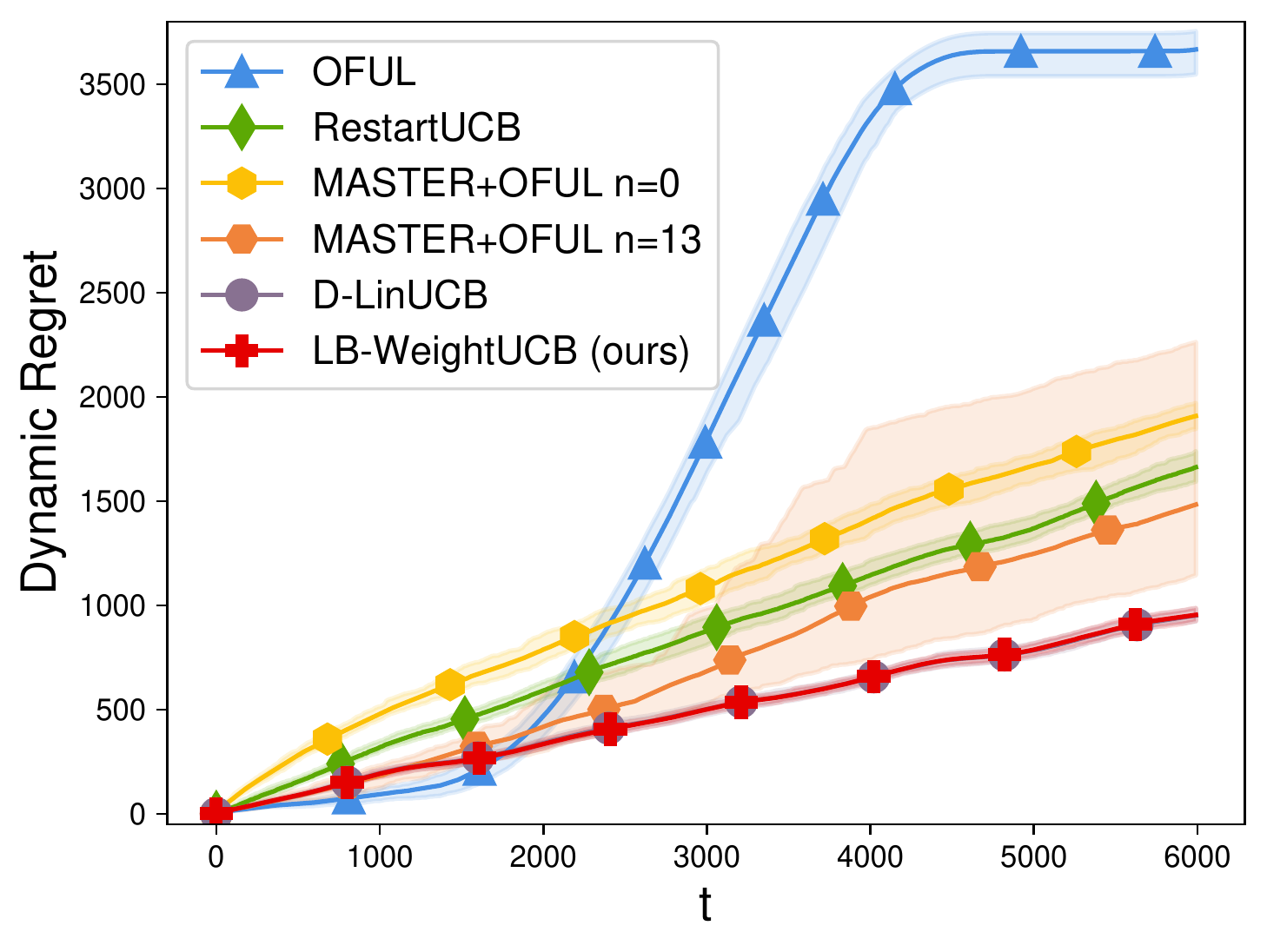}
        \vphantom{\includegraphics[width=0.49\textwidth,valign=t]{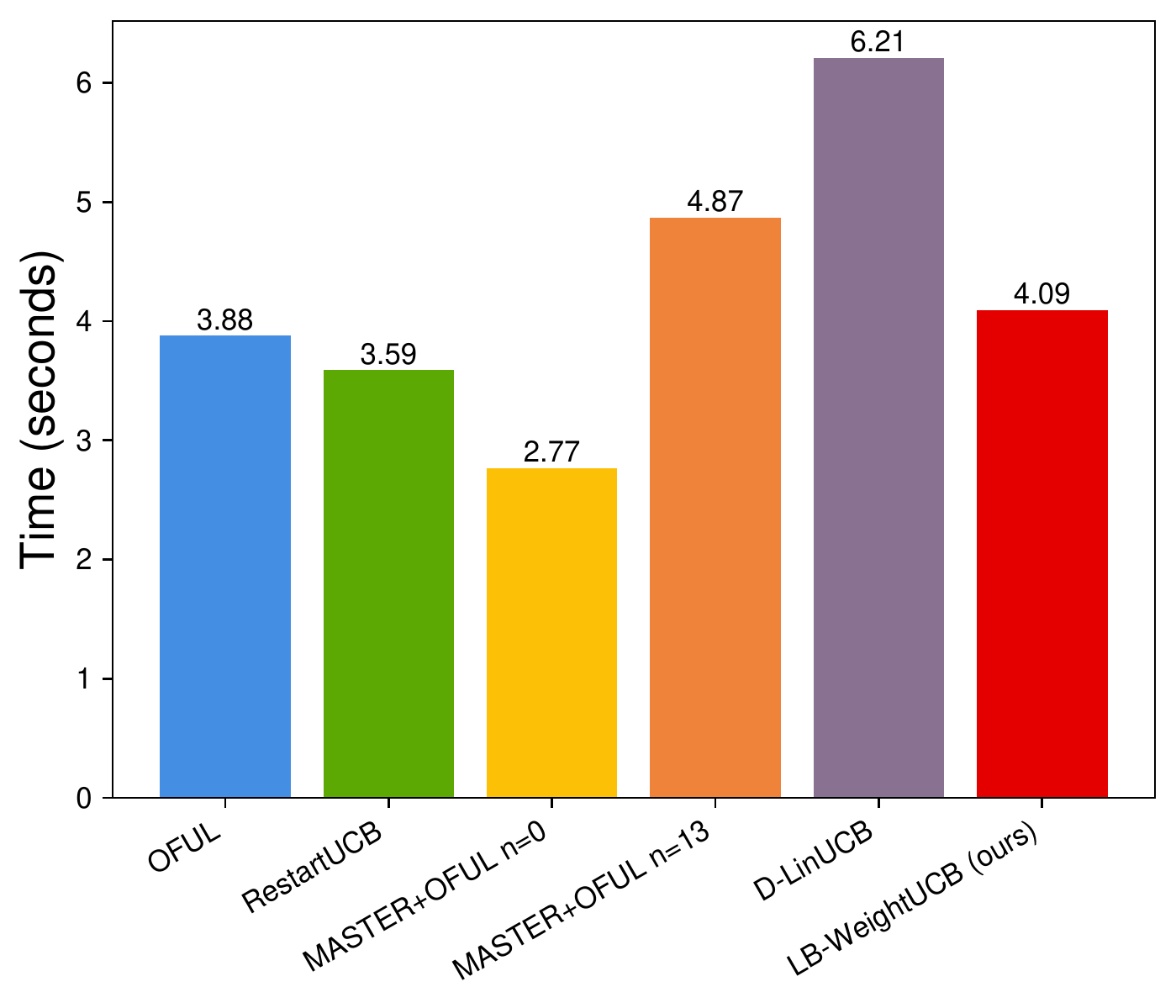}}}
    \subfloat[LB: Average running time]{ \label{figure:LB-running-time}
        \includegraphics[width=0.49\textwidth, valign=t]{figure/lb_running_time.pdf}}
    \caption{Experiments of linear bandits.}
    \label{figure:LB}
\end{figure}

\subsection{Linear Bandits}
\tb{Setting.}
We consider the linear model $r_t = X_t^\T\theta_t +\eta_t$ where the random noise $\eta_t$ is drawn from the normal distribution $\N(0,1)$ at each time $t$ independently. We compare the performance of our proposed \LBweightours algorithm to: (a) the static algorithm \LBstatic~\citep{NIPS'11:AY-linear-bandits}; (b) the restart-based algorithm \LBrestart~\citep{AISTATS'20:restart}; (c) the weight-based algorithm \LBweight~\citep{NIPS'19:weighted-LB}; and (d) the adaptive restart algorithm \LBMASTER~\citep{COLT'21:black-box}. Since $P_T$ is computable, we set the discounted factor $\gamma = 1- \max\{1/T, \sqrt{P_T/(dT)}\}$ for \LBweightours and \LBweight, and set the window size $w$ and restarting period $H$ as $w = H = d^{1/4}\sqrt{T/(1+P_T)}$. For \MASTER, there is a parameter $n$ representing the initial value of a multi-scale exploration parameter (see the input of Procedure 1 in~\citep{COLT'21:black-box}) and the origin \MASTER algorithm lets it start from $0$ (i.e., $n = 0,1,...$). However, a small initial value of $n$ will lead to high-frequent restart and thus achieve poor performance. To address this issue, we experiment with a larger initial value of $n = 13$, which leads to greatly improved performance in our case.

\tb{Results.}
The experimental results are averaged over 20 independent trials. Figure~\ref{figure:LB-regret} shows the cumulative dynamic regret performance, where the shaded area denotes the variance of the 20 independent trials of experimental results. Figure~\ref{figure:LB-running-time} reports the average time per run, with each run containing 6000 rounds. Our \LBweightours algorithm performs as well as \LBweight but significantly more efficient, with over 1.5 times speedup. Figure~\ref{figure:LB-regret} also shows that when equipped with a fine-tuned $n$, \LBMASTER ($n=13$) performs better than \LBrestart, whereas a vanilla \LBMASTER ($n=0$) performs worse due to overly active restarts at the beginning. However, a larger initial value of $n$ results in greater time overhead, since at each restart, \LBMASTER needs to do Procedure~1 once, resulting in an $\mathcal{O}(n2^n)$ time complexity. More importantly, neither adaptive restart (\LBMASTER) nor periodical restart (\LBrestart) outperforms our weighted strategy in slowly-evolving environments.

\begin{figure}[!t]
    \centering
    \subfloat[GLB Algorithms ($S=1$)]{ \label{figure:GLB1} 
        \includegraphics[width=0.49\textwidth, valign=t]{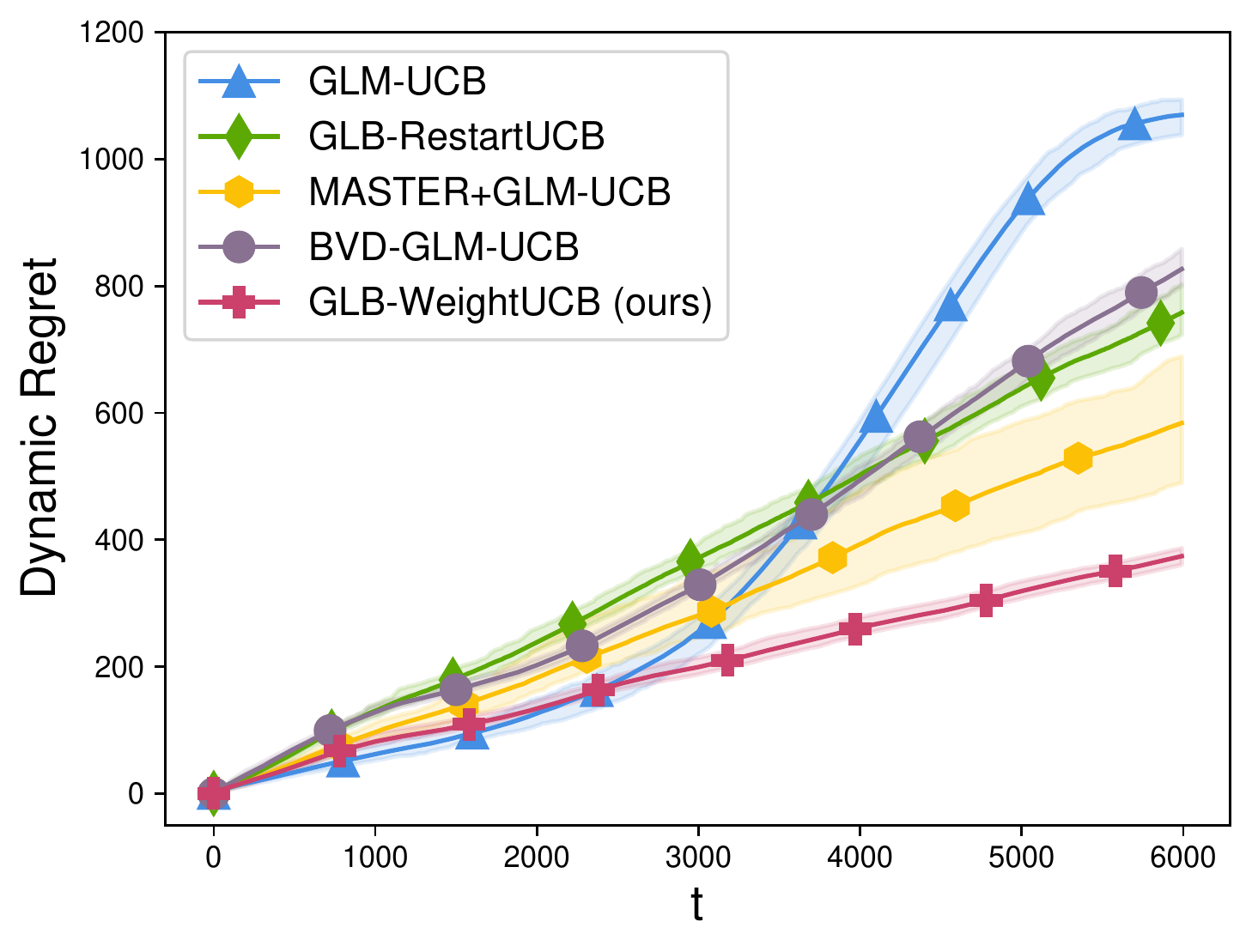}}
    \subfloat[SCB Algorithms ($S=1$)]{ \label{figure:SCB1}
        \includegraphics[width=0.49\textwidth, valign=t]{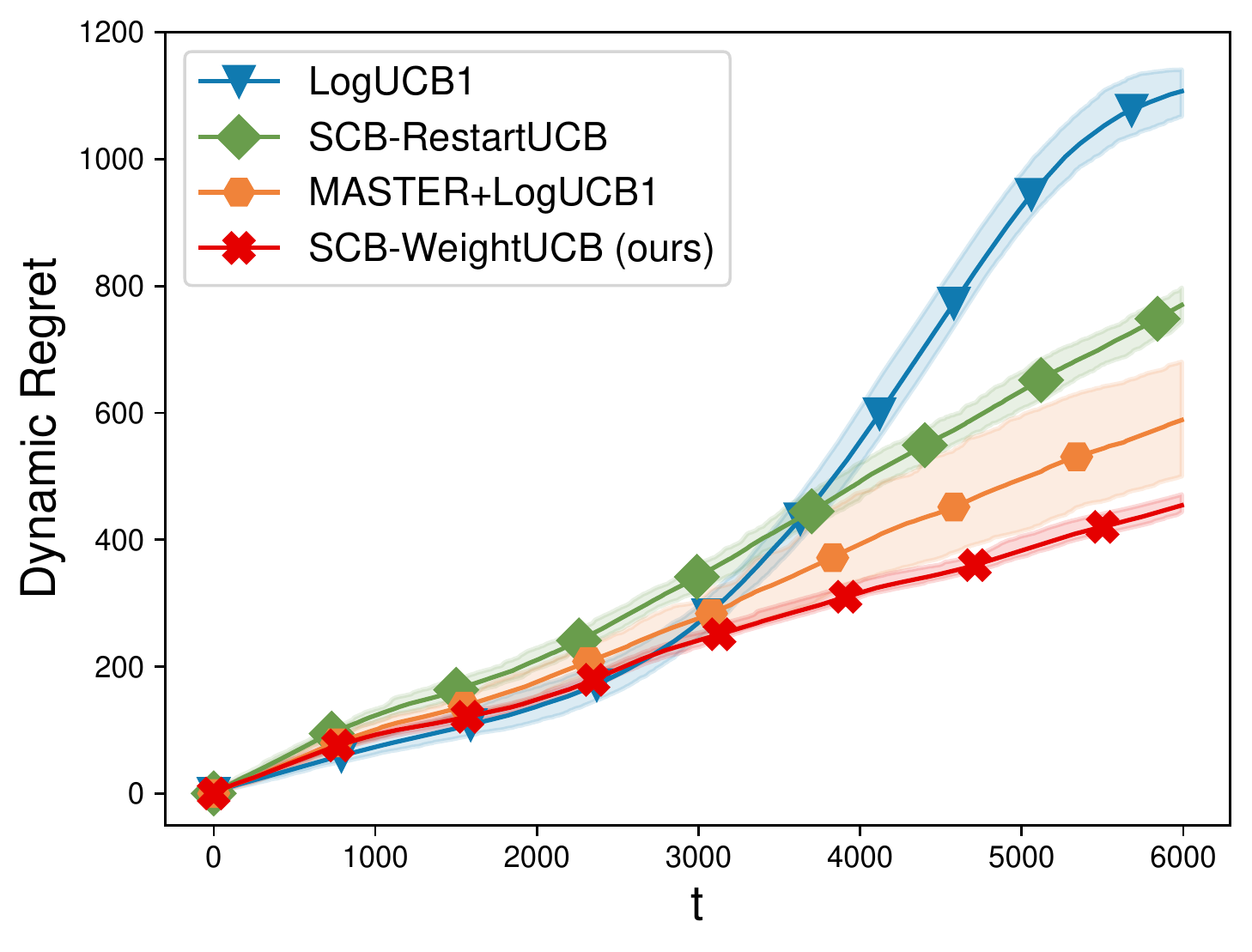}}\\
        \vspace{-2mm}
    \subfloat[GLB Algorithms ($S=5$)]{ \label{figure:GLB5} 
        \includegraphics[width=0.49\textwidth]{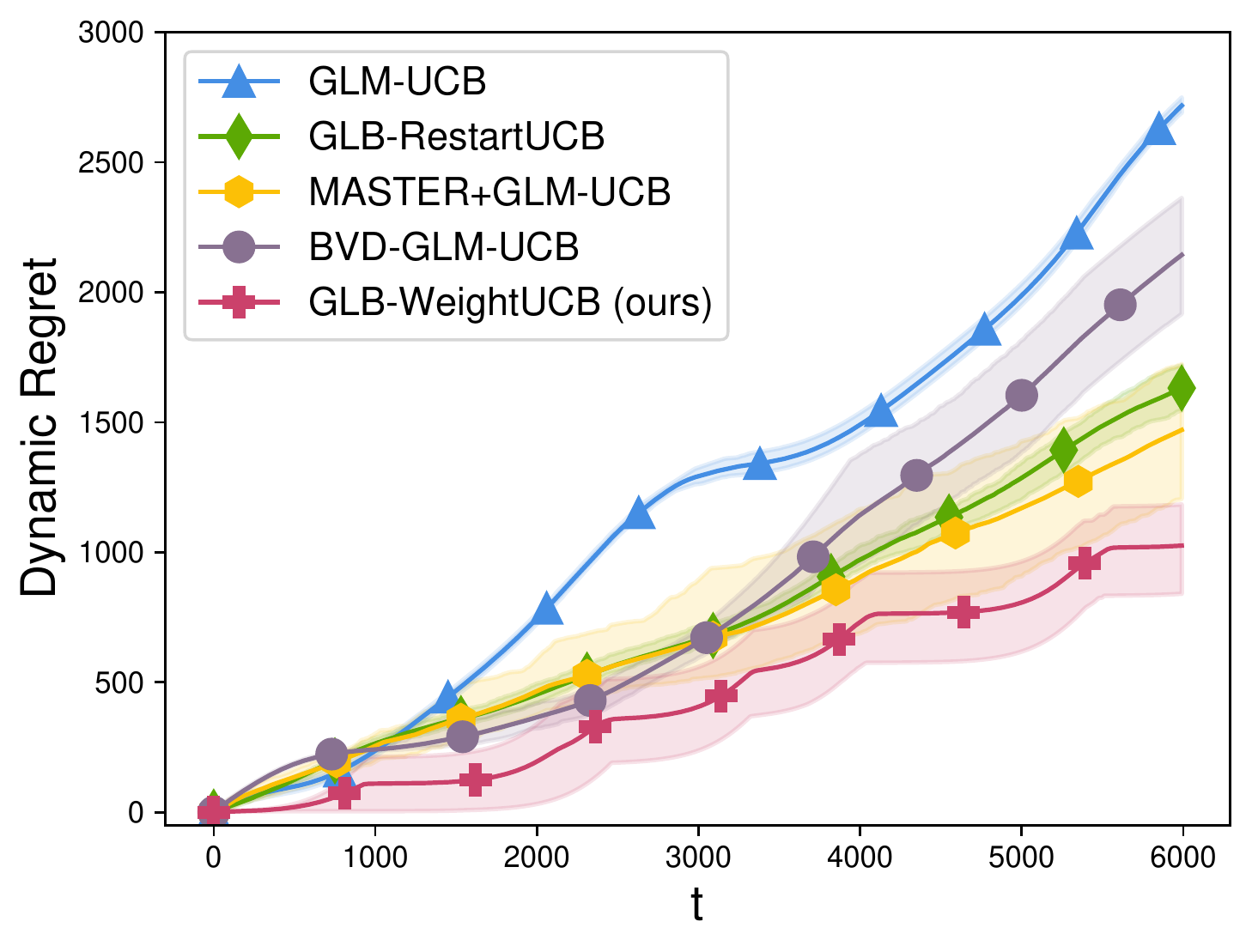}}
    \subfloat[SCB Algorithms ($S=5$)]{ \label{figure:SCB5}
        \includegraphics[width=0.49\textwidth]{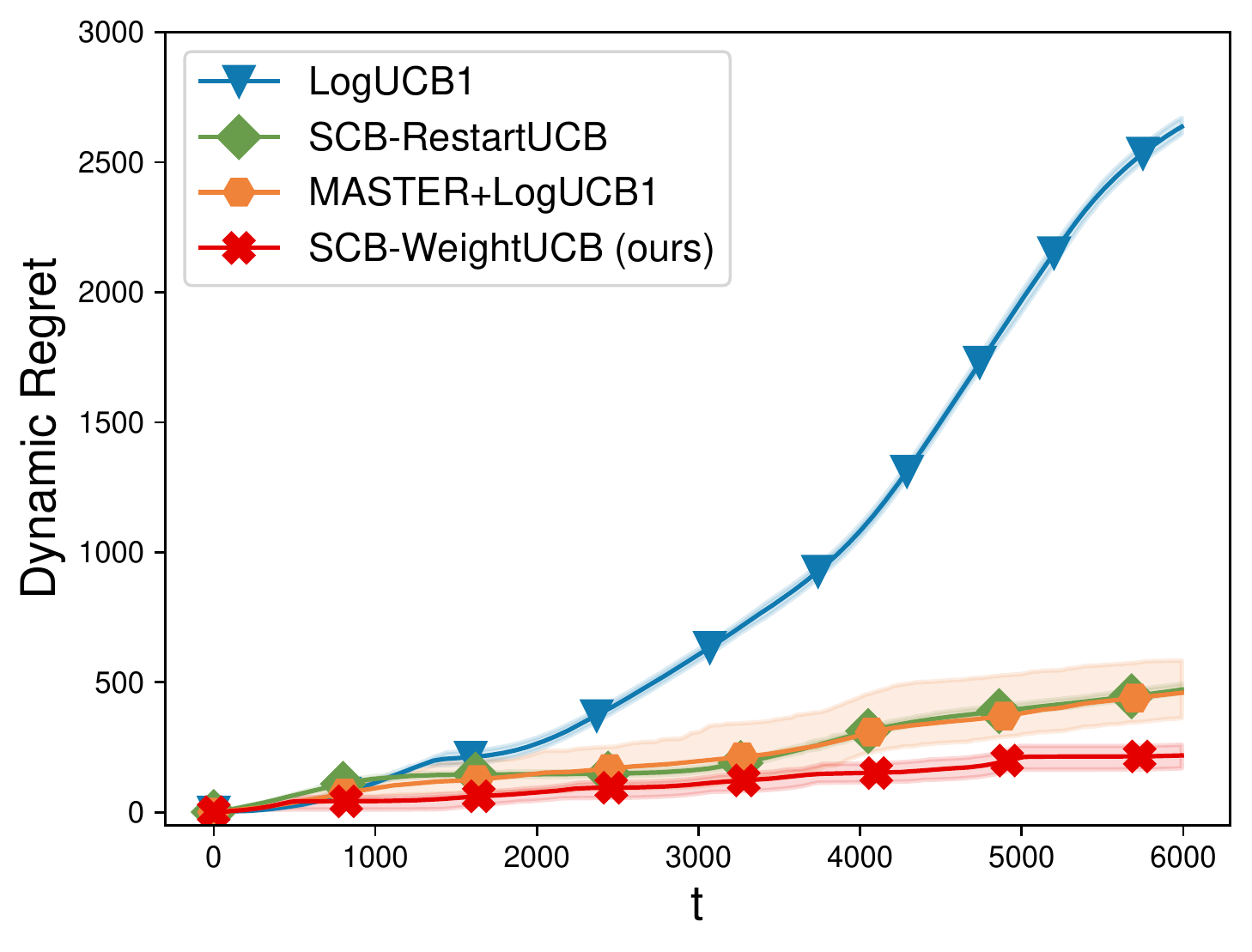}} 
    \caption{Experiments of generalized linear bandits.}
    \label{figure:GLB}
\end{figure}
\subsection{Generalized Linear Bandits}
\tb{Setting.}
We employ the logistic model in GLB experiment, i.e., the reward satisfies $r_t \sim \text{Bernoulli}(\mu(X_t^\T\theta_t))$ with logistic function $\mu(x) = (1+e^{-x})^{-1}$. We consider two cases of $S=1$ and $S=5$, respectively. We compare the performance of our proposed \GLBweightours and \SCBweightours algorithm to: (a) \GLBstatic, static algorithm for GLB~\citep{NIPS10:GLM-infinite}; (b) \SCBstatic, static algorithm for LogB~\citep{ICML'20:logistic-bandits}; (c) \GLBweight, weight-based algorithm for GLB~\citep{arXiv'21:faury-driftingGLB}; (d) \GLBrestart, restart algorithm for GLB~\citep{AISTATS'20:restart}; (e) \SCBrestart, restart algorihtm for SCB~\citep{AISTATS'20:restart}; (f) \GLBMASTER, adaptive restart algorithm for GLB~\citep{COLT'21:black-box}; and (g) \SCBMASTER, adaptive restart algorithm for LogB~\citep{COLT'21:black-box}. We set discounted factor $\gamma = 1- \max\{1/T, \sqrt{c_\mu P_T/(dT)}\}$ for \GLBweightours, $\gamma = 1- (P_T/(\sqrt{d}T))^{\sfrac{2}{5}}$ for \GLBweight and $\gamma = 1- \max\{1/T, \sqrt{P_T/(dT)}\}$ for \SCBweightours. We set restarting period $H = d^{1/4}\sqrt{T/(1+P_T)}$ for both \GLBrestart and \SCBrestart. We set regularizer $\lambda = d$ for \GLBstatic, \GLBweight, \GLBrestart and \GLBMASTER, $\lambda = d/c_\mu^2$ for \GLBweightours and $\lambda = d\log T/c_\mu$ for \SCBstatic, \SCBrestart, \SCBMASTER and \SCBweightours. Note that for LogB, $k_\mu = 1/4 < 1$, so we don't need to control the order of $k_\mu$. For the two \MASTER algorithms, we set $n = 13$.

\tb{Results.}
We present the average cumulative dynamic regret results of our experiments on $20$ independent trials in Figures~\ref{figure:GLB}. When $S$ is small ($S=1, c_\mu^{-1} \approx 5$), all of the weight-based algorithms outperform the static algorithms, and our \GLBweightours and \SCBweightours are better than \GLBweight. When $S$ is large ($S=5, c_\mu^{-1} \approx 152$), \SCBweightours significantly outperforms \GLBweightours, demonstrating the importance of considering self-concordant property (recall that LogB is an instance of SCB). In contrast, the performance of \GLBweight drops dramatically, as it does not take the $c_\mu^{-1}$ issue into account. Similar to LB, the experimental results of GLB also demonstrate the empirical advantage of the weighted strategy over (adaptive) restart strategy in slowly-evolving environments. Specifically, we observe that \GLBweightours consistently outperforms \GLBMASTER, and \SCBweightours consistently outperforms \SCBMASTER.

\section{CONCLUSION}
\label{sec:conclusion}
This paper revisits the weight-based algorithms for three non-stationary parametric bandit models (LB, GLB, SCB). We identify that the  inadequacies of the previous work are due to the inadequate analysis of the estimation error. We thus propose a refined analysis framework that enables the usage of the same local norm for both the bias and variance part in estimation error analysis. Our framework ensures more efficient algorithms for all three bandit models and improves the regret bounds for GLB and SCB settings. 

The importance of our work lies in the fact that we have now made the weight-based algorithms for non-stationary LB/GLB/SCB as competitive as the restart-based algorithms, in terms of both computational efficiency and regret guarantee. Given that the weighted strategy is particularly appealing in gradually drifting scenarios that are commonly seen in real-world applications, it is essential to further design \mbox{\emph{adaptive}} weight-based algorithms for non-stationary parametric bandits with optimal dynamic regret without requiring the knowledge of environmental non-stationarity, in the spirit of the currently best-known result achieved by adaptive restart strategy~\citep{COLT'21:black-box}. 

In this work, we employ $P_T = \sum_{t=2}^{T} \norm{\theta_{t-1} - \theta_{t}}_2$ as a measure to capture the gradually changing environment. However, this metric may not be precise enough in capturing only the gradual changes in the environment, as it can also include other types of variations, such as abrupt changes. This might be able to serve as an explanation why weight-based algorithms do not exhibit a significant theoretical advantage, yet perform remarkably well in experiments on gradually changing environments compared to restart-based algorithms. To overcome this limitation, future research could explore more refined characterizations of gradual changes, drawing inspiration from the ideas behind Sobolev or Holder classes~\citep{NIPS'19:TV_bound} or other information-theoretic tools~\citep{arXiv'22:VanRoy}.

\subsubsection*{Acknowledgements}
This research was supported by NSFC~(61921006), \mbox{JiangsuSF}~(BK20220776) and Collaborative Innovation Center of Novel Software Technology and Industrialization. The authors greatly appreciate helpful comments and suggestions from anonymous reviewers and meta-reviewer. The authors also thank Yu-Hu Yan and Yu-Jie Zhang for many helpful discussions. 

\bibliography{online_learning}
\bibliographystyle{plainnat}

\onecolumn
\appendix
\thispagestyle{empty}

\section{Analysis of \LBweightours}
\label{sec:LB-regret}
In this section, we provide the analysis for \LBweightours algorithm. In Appendix~\ref{sec:LB-review}, we review the \LBweight algorithm proposed by~\citet{NIPS'19:weighted-LB} and restate their estimation error analysis. In Appendix~\ref{sec:LB-estimation-error-proof}, we present our own estimation error analysis for the proposed \LBweightours algorithm, which is captured in Lemma~\ref{lemma:LB-estimation-error}. Finally, in Appendix~\ref{sec:LB-regret-proof}, we provide a proof for our dynamic regret bound, as stated in Theorem~\ref{thm:LB-regret}.

\subsection{Review Estimation Error Analysis of \LBweight Algorithm}
\label{sec:LB-review}
In this part, we review the previous estimation error analysis of the \LBweight algorithm~\citep{NIPS'19:weighted-LB} who has the same estimator as ours~\eqref{eq:LB-estimator}. The first step is to divide the estimation error into the bias and variance parts, where the bias part represents the error caused by parameter drift and the variance part represents the error caused by stochastic noise. Based on the reward model assumption and the estimator (same as eq~\eqref{eq:LB-model} and eq~\eqref{eq:LB-estimator}), the estimation error of \LBweight algorithm can be decomposed as
  \begin{align}
    \thetah_t - \theta_t = {}& V_{t-1}^{-1}\sbr{\sum_{s=1}^{t-1}\gamma^{t-s-1}r_sX_s}- \theta_t\nonumber\\
    = {}& V_{t-1}^{-1}\sbr{\sum_{s=1}^{t-1}\gamma^{t-s-1}\sbr{ X_s^\T \theta_s + \eta_s}X_s}- V_{t-1}^{-1}\sbr{\lambda I_d + \sum_{s=1}^{t-1} \gamma^{t-s-1} X_s X_s^\T}\theta_t\nonumber\\
    = {}& V_{t-1}^{-1}\sbr{\sum_{s=1}^{t-1}\gamma^{t-s-1}  X_s X_s^\T \theta_s + \sum_{s=1}^{t-1}\gamma^{t-s-1}\eta_sX_s }- V_{t-1}^{-1}\sbr{\lambda I_d + \sum_{s=1}^{t-1} \gamma^{t-s-1} X_s X_s^\T}\theta_t\nonumber\\
    = {}& \label{eq:estimation_error_decomposition}\underbrace{V_{t-1}^{-1}\sbr{\sum_{s=1}^{t-1}\gamma^{t-s-1}  X_s X_s^\T \sbr{\theta_s -\theta_t}}}_{\bias} + \underbrace{V_{t-1}^{-1}\sbr{ \sum_{s=1}^{t-1}\gamma^{t-s-1}\eta_sX_s -\lambda\theta_t}}_{\variance}.
  \end{align}

  Afterward,~\citet{NIPS'19:weighted-LB} use different local norms (we will explain the reason of using different local norms later) for the bias and variance parts as follows, 
  \begin{equation}
    \begin{split}
      \label{Russac:decompose}
        |\x^\T (\thetah_t - \theta_t)| \leq \norm{\x}_{2}A_t'+\norm{\x}_{V_{t-1}^{-1}\Vt_{t-1}V_{t-1}^{-1}}B_t',\\
    \end{split}
  \end{equation}  
  where $\Vt_{t} = \lambda I_d + \sum_{s=1}^{t} \gamma^{2(t-s)} X_s X_s^\T$ and 
  \begin{equation}\nonumber
      \begin{split}
        A_t' = \norm{V_{t-1}^{-1}\sum_{s=1}^{t-1}\gamma^{t-s-1}  X_s X_s^\T \sbr{\theta_s -\theta_t}}_{2}, \quad B_t' = \norm{\sum_{s=1}^{t-1}\gamma^{t-s-1}\eta_sX_s -\lambda\theta_t}_{\Vt_{t-1}^{-1}}.
      \end{split}
  \end{equation}

For the bias part,~\citet{NIPS'19:weighted-LB} divide it into two parts on the timeline by introducing a virtual window size $D$,
\begin{align*}
  A_t' \leq \underbrace{\norm{\sum_{s=t-D}^{t-1}V_{t-1}^{-1}\gamma^{t-s-1}  X_s X_s^\T \sbr{\theta_s -\theta_t}}_{2}}_{\mathtt{virtual~window}} + \underbrace{\norm{\sum_{s=1}^{t-D-1}V_{t-1}^{-1}\gamma^{t-s-1}  X_s X_s^\T \sbr{\theta_s -\theta_t}}_{2}}_{\mathtt{small~term}},
\end{align*}
The first term can be considered as a virtual window containing the most recent data obtained after time $t-D$, and can be directly analyzed by the analysis of \LBwindow~\citep{AISTATS'19:window-LB} since it corresponds to the bias part of the estimation error of window strategy and this is why they use $l_2$-norm for bias part. The second term reflects the influence formed by the outdated data obtained before time $t-D$. Since $\gamma^{t-s-1}$ will be very small when $s \leq t-D-1$, this small term is dominated by the first virtual window term which means the bias part is actually controlled by the virtual window size $D$. 

For the variance part,~{\citet{NIPS'19:weighted-LB}} extend the previous self-normalized concentration~{\citep[Theorem 1]{NIPS'11:AY-linear-bandits}} to the weighted version which is restated in Theorem~\ref{thm:self-normalized-weight-LB}. This concentration requires to use $\Vt_t$ as the local norm. To this end, ~\citet{NIPS'19:weighted-LB} split the variance part as
\begin{equation}\nonumber
  \begin{split}
      \abs{\x^\T V_{t-1}^{-1}\sbr{ \sum_{s=1}^{t-1}\gamma^{t-s-1}\eta_sX_s -\lambda\theta_t}} \leq \norm{\x}_{V_{t-1}^{-1}\Vt_{t-1}V_{t-1}^{-1}}\norm{V_{t-1}^{-1}\sbr{ \sum_{s=1}^{t-1}\gamma^{t-s-1}\eta_sX_s -\lambda\theta_t}}_{V_{t-1}\Vt_{t-1}^{-1}V_{t-1}},
  \end{split}
\end{equation} 
where
\begin{equation}\nonumber
  \begin{split}
    \norm{V_{t-1}^{-1}\sbr{ \sum_{s=1}^{t-1}\gamma^{t-s-1}\eta_sX_s -\lambda\theta_t}}_{V_{t-1}\Vt_{t-1}^{-1}V_{t-1}} = \norm{\sum_{s=1}^{t-1}\gamma^{t-s-1}\eta_sX_s -\lambda\theta_t}_{\Vt_{t-1}^{-1}} \leq \norm{\sum_{s=1}^{t-1}\gamma^{t-s-1}\eta_sX_s}_{\Vt_{t-1}^{-1}} +\sqrt{\lambda} S.
  \end{split}
\end{equation} 
Then term $\|\sum_{s=1}^{t-1}\gamma^{t-s-1}\eta_sX_s\|_{\Vt_{t-1}^{-1}}$ can be bounded by Theorem~\ref{thm:self-normalized-weight-LB}.
Finally, based on this analysis, \LBweight needs to use the following action selection criterion which only depends on the variance part since the bias part doesn't contain $\x$, 
\begin{equation}\nonumber
  \begin{split}
    X_t = \argmax_{\x \in \X} \bbr{ \inner{\x}{\thetah_t} + \beta_{t-1}\norm{\x}_{V_{t-1}^{-1}\Vt_{t-1}V_{t-1}^{-1}}},
  \end{split}
\end{equation}
where $\beta_{t-1}$ is the upper bound of $B_t'$ which is the same as~\eqref{eq:LB-confidence-radius}. From this selection criterion, it can be seen that \LBweight needs to maintain two covariance matrices, namely, $V_t$ and $\Vt_t$ at round $t$ during the algorithm running.

In the next section, we present our proof for the estimation error upper bound. The difference between our analysis and \LBweight's analysis mainly starts at step~\eqref{Russac:decompose}, which is the key step of the analysis and our new analysis framework allows us to employ the \emph{same} local norm for both bias and variance parts.

\label{sec:LB-previous-work}
\subsection{Proof of Lemma~\ref{lemma:LB-estimation-error}}
\label{sec:LB-estimation-error-proof}

\begin{proof}
  Using the same derivation in~\eqref{eq:estimation_error_decomposition}, the estimation error of \LBweightours algorithm can also be decomposed as
  \begin{align*}
    \thetah_t - \theta_t = \underbrace{V_{t-1}^{-1}\sbr{\sum_{s=1}^{t-1}\gamma^{t-s-1}  X_s X_s^\T \sbr{\theta_s -\theta_t}}}_{\bias} + \underbrace{V_{t-1}^{-1}\sbr{ \sum_{s=1}^{t-1}\gamma^{t-s-1}\eta_sX_s -\lambda\theta_t}}_{\variance}.
  \end{align*}
  Therefore, by the Cauchy-Schwarz inequality, we know that for any $\x\in\X$,
  \begin{equation}
  \begin{split}
      \label{eq:LB-bound-cauchy}
      \abs{\x^\T\sbr{\thetah_t-\theta_t}} \leq \norm{\x}_{V_{t-1}^{-1}}(A_t+B_t),\\
  \end{split}
  \end{equation}
  where
  \begin{equation}\nonumber
      \begin{split}
      A_t = \norm{\sum_{s=1}^{t-1}\gamma^{t-s-1}  X_s X_s^\T \sbr{\theta_s -\theta_t}}_{V_{t-1}^{-1}}, \quad B_t= \norm{ \sum_{s=1}^{t-1}\gamma^{t-s-1}\eta_sX_s -\lambda\theta_t}_{V_{t-1}^{-1}}.
      \end{split}
  \end{equation}
  The above two terms can be bounded separately, as summarized in the following two lemmas,
  \begin{Lemma}
    \label{lemma:LB-A_t-bound}
    For any $t \in [T]$, we have 
    \begin{equation}\nonumber
        \label{eq:LB-A_t-bound}
        \norm{\sum_{s=1}^{t-1}\gamma^{t-s-1}  X_s X_s^\T \sbr{\theta_s -\theta_t}}_{V_{t-1}^{-1}} \leq L\sqrt{d} \sum_{p=1}^{t-1} \gamma^{\frac{t-1}{2}} \sqrt{\frac{\gamma^{-p}-1}{1-\gamma}}\norm{\theta_p -\theta_{p+1}}_2.
    \end{equation}
  \end{Lemma}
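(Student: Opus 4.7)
The plan is to control the matrix--weighted sum $\sum_{s=1}^{t-1}\gamma^{t-s-1} X_sX_s^\T(\theta_s-\theta_t)$ by first unfolding $\theta_s-\theta_t$ as a telescoping sum over the single--step variations $\theta_p-\theta_{p+1}$, and then handling the remaining weighted sum of rank--one matrices via a Cauchy--Schwarz step that is tailored to the $V_{t-1}^{-1}$ local norm. Concretely, writing $\theta_s-\theta_t=\sum_{p=s}^{t-1}(\theta_p-\theta_{p+1})$ and swapping the order of summation gives
\begin{equation*}
\sum_{s=1}^{t-1}\gamma^{t-s-1}X_sX_s^\T(\theta_s-\theta_t)=\sum_{p=1}^{t-1}\sum_{s=1}^{p}\gamma^{t-s-1}X_sX_s^\T(\theta_p-\theta_{p+1}).
\end{equation*}
Taking the $V_{t-1}^{-1}$-norm, triangle inequality pulls the outer sum outside, and inside I bound each rank--one application by $\|X_sX_s^\T(\theta_p-\theta_{p+1})\|_{V_{t-1}^{-1}}\le \|X_s\|_{V_{t-1}^{-1}}|X_s^\T(\theta_p-\theta_{p+1})|\le L\|X_s\|_{V_{t-1}^{-1}}\|\theta_p-\theta_{p+1}\|_2$ using the feasible--set bound $\|X_s\|_2\le L$. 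This reduces the problem to bounding $\sum_{s=1}^{p}\gamma^{t-s-1}\|X_s\|_{V_{t-1}^{-1}}$.

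For the last sum, I split $\gamma^{t-s-1}=\gamma^{(t-s-1)/2}\cdot\gamma^{(t-s-1)/2}$ and apply Cauchy--Schwarz:
\begin{equation*}
\sum_{s=1}^{p}\gamma^{t-s-1}\|X_s\|_{V_{t-1}^{-1}}\le \sqrt{\sum_{s=1}^{p}\gamma^{t-s-1}}\cdot\sqrt{\sum_{s=1}^{p}\gamma^{t-s-1}\|X_s\|_{V_{t-1}^{-1}}^2}.
\end{equation*}
The first factor is an elementary geometric sum equal to $\gamma^{(t-1)/2}\sqrt{(\gamma^{-p}-1)/(1-\gamma)}$. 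The second factor equals $\sqrt{\operatorname{tr}(V_{t-1}^{-1}\sum_{s=1}^{p}\gamma^{t-s-1}X_sX_s^\T)}$, and because $V_{t-1}=\lambda I_d+\sum_{s=1}^{t-1}\gamma^{t-s-1}X_sX_s^\T\succeq \sum_{s=1}^{p}\gamma^{t-s-1}X_sX_s^\T$, a standard similarity/conjugation argument bounds this trace by $\operatorname{tr}(I_d)=d$, giving $\sqrt{d}$. Multiplying the two factors yields $\sqrt{d}\,\gamma^{(t-1)/2}\sqrt{(\gamma^{-p}-1)/(1-\gamma)}$, and combining with the earlier inequality produces exactly the claimed bound.

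I expect the routine Cauchy--Schwarz and geometric--series manipulations to be straightforward; the only step that deserves care is the matrix--dominance argument $V_{t-1}\succeq\sum_{s\le p}\gamma^{t-s-1}X_sX_s^\T$, which relies on the discount weights in $V_{t-1}$ matching those in the partial sum and on $\lambda I_d\succeq 0$. This is what makes using the \emph{same} local norm $V_{t-1}^{-1}$ for the bias part work cleanly, and is precisely the simplification that the refined analysis framework of the paper exploits.
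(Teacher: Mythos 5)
Your proposal is correct and follows essentially the same route as the paper's proof: the same telescoping of $\theta_s-\theta_t$ with the order of summation swapped, the same rank-one bound $\|X_sX_s^\T(\theta_p-\theta_{p+1})\|_{V_{t-1}^{-1}}\le L\|X_s\|_{V_{t-1}^{-1}}\|\theta_p-\theta_{p+1}\|_2$, the same Cauchy--Schwarz split of $\gamma^{t-s-1}$, and the same trace argument $\mathrm{Tr}\bigl(V_{t-1}^{-1}\sum_{s\le p}\gamma^{t-s-1}X_sX_s^\T\bigr)\le \mathrm{Tr}(I_d)=d$ exploiting $V_{t-1}\succeq\sum_{s\le p}\gamma^{t-s-1}X_sX_s^\T$. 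No gaps.
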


  \begin{Lemma}
    \label{lemma:LB-B_t-bound}
    For any $\delta \in (0,1)$, with probability at least $1-\delta$, the following holds for all $t \in [T]$,
    \begin{equation}\nonumber
        \label{eq:LB-B_t-bound}
        \norm{ \sum_{s=1}^{t-1}\gamma^{t-s-1}\eta_sX_s -\lambda\theta_t}_{V_{t-1}^{-1}} \leq   \sqrt{\lambda}S+R\sqrt{2\log\frac{1}{\delta}+d\log\sbr{1+\frac{L^2 (1-\gamma^{2t-2})}{\lambda d(1-\gamma^2)}}},
    \end{equation}
  \end{Lemma}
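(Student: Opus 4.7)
\textbf{Proof plan for Lemma~\ref{lemma:LB-B_t-bound}.} The plan is to peel off the deterministic regularization term by the triangle inequality, and then reduce the stochastic part to a setting where an off-the-shelf weighted self-normalized concentration applies. Concretely,
\begin{equation*}
\norm{\sum_{s=1}^{t-1}\gamma^{t-s-1}\eta_s X_s - \lambda\theta_t}_{V_{t-1}^{-1}} \;\le\; \norm{\sum_{s=1}^{t-1}\gamma^{t-s-1}\eta_s X_s}_{V_{t-1}^{-1}} + \lambda \norm{\theta_t}_{V_{t-1}^{-1}}.
\end{equation*}
The second summand is immediate: by construction $V_{t-1} \succeq \lambda I_d$, hence $V_{t-1}^{-1} \preceq \lambda^{-1} I_d$, and combining this with $\norm{\theta_t}_2 \le S$ (Assumption~\ref{ass:bounded-norm}) gives $\lambda \norm{\theta_t}_{V_{t-1}^{-1}} \le \sqrt{\lambda}\,S$, which is exactly the first term of the target confidence radius $\beta_{t-1}$.

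For the stochastic summand, the central observation is the PSD comparison $V_{t-1} \succeq \Vt_{t-1}$, which follows because $\gamma^{t-1-s} \ge \gamma^{2(t-1-s)}$ for all $\gamma \in (0,1)$ and all $s \le t-1$. This yields $\norm{\cdot}_{V_{t-1}^{-1}} \le \norm{\cdot}_{\Vt_{t-1}^{-1}}$, moving the analysis onto the ``squared-weight'' covariance for which the weighted self-normalized inequality (\pref{thm:self-normalized-weight-LB}) is designed. Invoking it with the martingale $M_t = \sum_{s=1}^{t-1}\gamma^{t-s-1}\eta_s X_s$ (noting that each $\gamma^{t-s-1} X_s$ is $\mathcal{F}_{s-1}$-measurable and $\eta_s$ is conditionally $R$-sub-Gaussian) gives, with probability at least $1-\delta$ uniformly over $t$,
\begin{equation*}
\norm{\sum_{s=1}^{t-1}\gamma^{t-s-1}\eta_s X_s}_{\Vt_{t-1}^{-1}} \;\le\; R\sqrt{\,2\log\tfrac{1}{\delta} + \log\tfrac{\det \Vt_{t-1}}{\det(\lambda I_d)}\,}.
\end{equation*}

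To match the stated form I still need to bound the log-determinant ratio. The standard AM--GM bound $\det(\Vt_{t-1}) \le (\mathrm{tr}(\Vt_{t-1})/d)^d$, combined with $\|X_s\|_2 \le L$ and the geometric sum $\sum_{s=0}^{t-2}\gamma^{2s} = (1-\gamma^{2(t-1)})/(1-\gamma^2)$, gives $\mathrm{tr}(\Vt_{t-1}) \le \lambda d + L^2(1-\gamma^{2t-2})/(1-\gamma^2)$, hence
\begin{equation*}
\log\tfrac{\det \Vt_{t-1}}{\det(\lambda I_d)} \;\le\; d\log\!\sbr{1 + \tfrac{L^2(1-\gamma^{2t-2})}{\lambda d(1-\gamma^2)}},
\end{equation*}
which is exactly the logarithmic factor appearing inside $\beta_{t-1}$. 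Summing the two bounds completes the proof.

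I do not expect any genuine obstacle here: the regularization part is a one-line PSD argument, and the stochastic part relies entirely on machinery already developed by~\citet{NIPS'11:AY-linear-bandits} and adapted to the discounted setting by~\citet{NIPS'19:weighted-LB}. The only mild subtlety — and what distinguishes this proof from the earlier analysis of~\citet{NIPS'19:weighted-LB} — is the PSD step $V_{t-1} \succeq \Vt_{t-1}$, which lets us state the bound in the single-covariance norm $\norm{\cdot}_{V_{t-1}^{-1}}$ without loosening any constant relative to the previous two-covariance confidence radius; this is precisely what enables our algorithm to maintain only $V_{t-1}$ rather than both $V_{t-1}$ and $\Vt_{t-1}$.
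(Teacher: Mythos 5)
Your proposal is correct and follows essentially the same route as the paper's proof: triangle inequality to peel off the $\sqrt{\lambda}S$ regularization term, the PSD comparison $V_{t-1}\succeq \Vt_{t-1}$ to pass to the squared-weight norm, and then the weighted self-normalized concentration (\pref{thm:self-normalized-weight-LB}) with $w_s=\gamma^{t-s-1}$ and $\mu_t=\lambda$. The only cosmetic difference is that you re-derive the log-determinant bound via the trace/AM--GM argument, whereas the cited theorem already delivers the bound in that form.
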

  Based on the inequality~\eqref{eq:LB-bound-cauchy}, Lemma~\ref{lemma:LB-A_t-bound}, Lemma~\ref{lemma:LB-B_t-bound}, and the boundedness assumption of the feasible set, for any $\x \in \X$, $\gamma\in (0,1)$ and $\delta \in (0,1)$, with probability at least $1-\delta$, the following holds for all $t \in [T]$,
  \begin{equation}\nonumber
      |\x^\T(\thetah_t-\theta_t)|\leq L^2\sqrt{\frac{d}{\lambda}} \sum_{p=1}^{t-1} \gamma^{\frac{t-1}{2}} \sqrt{\frac{\gamma^{-p}-1}{1-\gamma}}\norm{\theta_p -\theta_{p+1}}_2 + \beta_{t-1}\|\x\|_{V_{t-1}^{-1}},
  \end{equation}
  where $\beta_t \teq \sqrt{\lambda}S+R\sqrt{2\log\frac{1}{\delta}+d\log\sbr{1+\frac{L^2 (1-\gamma^{2t})}{\lambda d(1-\gamma^2)}}}$ is the confidence radius used in \LBweightours. Hence we complete the proof.
\end{proof}

\begin{proof}[{Proof of Lemma~\ref{lemma:LB-A_t-bound}}]
  \label{sec:LB-A_t-bound-proof}
  The first step is to extract the variations of the parameter $\theta_t$ as follows, 
  \begin{align*}
      \norm{\sum_{s=1}^{t-1}\gamma^{t-s-1}  X_s X_s^\T \sbr{\theta_s -\theta_t}}_{V_{t-1}^{-1}} = {} & \norm{\sum_{s=1}^{t-1}\gamma^{t-s-1}  X_s X_s^\T \sum_{p=s}^{t-1}\sbr{\theta_p -\theta_{p+1}}}_{V_{t-1}^{-1}} \\
      = {} & \norm{\sum_{p=1}^{t-1}\sum_{s=1}^{p}\gamma^{t-s-1}  X_s X_s^\T \sbr{\theta_p -\theta_{p+1}}}_{V_{t-1}^{-1}} \\
      \leq {} & \sum_{p=1}^{t-1} \norm{\sum_{s=1}^{p}\gamma^{t-s-1}  X_s \|X_s\|_2 \|\theta_p -\theta_{p+1}\|_2}_{V_{t-1}^{-1}} \\
      \leq {} & L \sum_{p=1}^{t-1} \sum_{s=1}^{p}\gamma^{t-s-1}\norm{X_s}_{V_{t-1}^{-1}}\|\theta_p -\theta_{p+1}\|_2,
  \end{align*}
  and term $\sum_{s=1}^{p}\gamma^{t-s-1}\norm{X_s}_{V_{t-1}^{-1}}$ can be able to further derive an expression about discounted factor $\gamma$ as follows,
  \begin{equation}
    \label{eq:LB-At-gammapart}
    \sum_{s=1}^{p}\gamma^{t-s-1} \norm{X_s}_{V_{t-1}^{-1}} 
    \leq \gamma^{\frac{t-1}{2}} \sqrt{\sum_{s=1}^{p}\gamma^{-s}}\sqrt{\sum_{s=1}^{p}\gamma^{t-s-1}\norm{X_s }_{V_{t-1}^{-1}}^2}  
    \leq \sqrt{d}\gamma^{\frac{t-1}{2}}\sqrt{\frac{\gamma^{-p}-1}{1-\gamma}}.
  \end{equation} 
  In above, we use the fact that for any $\x$, $\norm{\x}_{V_{t-1}^{-1}} \leq \norm{\x}_2/\sqrt{\lambda} $ since $V_{t-1} \succeq \lambda I_d$. The second last step holds by the Cauchy-Schwarz inequality. Besides, the last step follows the fact,
  \begin{equation}
  \begin{split}
    \label{eq:Tr-d}
    \forall p \in[t-1],\quad\sum_{s=1}^{p} \gamma^{t-s-1}\norm{X_s}_{V_{t-1}^{-1}}^2 \leq d,
  \end{split}
  \end{equation} 
  which can be proven by the following argument.
  \begin{align*}
     \sum_{s=1}^{p} \gamma^{t-s-1}\norm{X_s}_{V_{t-1}^{-1}}^2 = {} &\sum_{s=1}^{p} \gamma^{t-s-1}\mathrm{Tr}(X_s^\T V_{t-1}^{-1} X_s) 
    =  \mathrm{Tr}\sbr{V_{t-1}^{-1} \sum_{s=1}^{p} \gamma^{t-s-1}X_s X_s^\T} \\
    \leq {} & \mathrm{Tr}\sbr{V_{t-1}^{-1} \sum_{s=1}^{p}\gamma^{t-s-1} X_s X_s^{\T}} + \mathrm{Tr}\sbr{V_{t-1}^{-1} \sum_{s=p+1}^{t-1}\gamma^{t-s-1} X_s X_s^{\T}} + \mathrm{Tr}\sbr{V_{t-1}^{-1} \lambda \sum_{i=1}^{d} \mathbf{e}_i \mathbf{e}_i^{\T}} \\
    = {} & \mathrm{Tr}(I_d) = d.
  \end{align*} 
  Hence, we complete the proof.
\end{proof}

\begin{proof}[{Proof of Lemma~\ref{lemma:LB-B_t-bound}}]
  \label{sec:LB-B_t-bound-proof}
  Let $\Vt_t \teq \lambda I_d + \sum_{s=1}^{t} \gamma^{2(t-s)} X_s X_s^\T $,
  \begin{equation}\nonumber
  \begin{split}
    \norm{ \sum_{s=1}^{t-1}\gamma^{t-s-1}\eta_sX_s -\lambda\theta_t}_{V_{t-1}^{-1}} \leq  \norm{ \sum_{s=1}^{t-1}\gamma^{t-s-1}\eta_sX_s }_{V_{t-1}^{-1}}+\norm{\lambda\theta_t}_{V_{t-1}^{-1}}
    \leq  \norm{ \sum_{s=1}^{t-1}\gamma^{t-s-1}\eta_sX_s }_{\Vt_{t-1}^{-1}}+\sqrt{\lambda}S.
  \end{split}
  \end{equation}
  Recall that $V_{t}  =  \lambda  I_d +\sum_{s= 1}^{t} \gamma^{t-s}X_sX_s^\T$, so the last inequality comes from
  \begin{equation}\nonumber
  \begin{split}
    V_{t} = \lambda I_d + \sum_{s=1}^{t} \gamma^{t-s} X_s X_s^\T \succeq \lambda I_d + \sum_{s=1}^{t} \gamma^{2(t-s)} X_s X_s^\T = \Vt_t.
  \end{split}
  \end{equation}
  We emphasize that the $\Vt_t$ is introduced into analysis \emph{only}, which is actually \emph{not} required in our algorithmic implementation. From the weighted version maximal deviation inequality~\citep[Theorem 1]{NIPS'19:weighted-LB}, restated in~\pref{thm:self-normalized-weight-LB}, we can get the bound for the first term $\| \sum_{s=1}^{t-1}\gamma^{t-s-1}\eta_sX_s\|_{\Vt_{t-1}^{-1}}$ as below by just let $w_s = \gamma^{t-s-1}, \mu_t= \lambda$,
  \begin{equation}\nonumber
  \begin{split}
    \norm{ \sum_{s=1}^{t-1}\gamma^{t-s-1}\eta_sX_s}_{\Vt_{t-1}^{-1}} \leq  R \sqrt{2 \log \frac{1}{\delta}+d \log \sbr{1+\frac{L^{2} \sum_{s=1}^{t-1} \gamma^{2(t-s-1)}}{\lambda d }}}
    \leq R\sqrt{2\log\frac{1}{\delta}+d\log\sbr{1+\frac{L^2 (1-\gamma^{2t-2})}{\lambda d(1-\gamma^2)}}},
  \end{split}
  \end{equation}
  which completes the proof.
\end{proof}

\subsection{Proof of~\pref{thm:LB-regret}}
\label{sec:LB-regret-proof}

\begin{proof}
  Let $X_t^* \teq \argmax_{\x \in \X} \x^\T \theta_t$. Due to Lemma~\ref{lemma:LB-estimation-error} and the fact that $X_t^*,X_t\in \X$, each of the following holds with probability at least $1-\delta$,
  \begin{equation}\nonumber
    \begin{split}
        \forall t \in [T], X_t^{*\T}\theta_t \leq{}&  X_t^{*\T}\thetah_t +L^2\sqrt{\frac{d}{\lambda}} \sum_{p=1}^{t-1} \gamma^{\frac{t-1}{2}} \sqrt{\frac{\gamma^{-p}-1}{1-\gamma}}\norm{\theta_p -\theta_{p+1}}_2 + \beta_{t-1}\|X_t^*\|_{V_{t-1}^{-1}}\\
        \forall t \in [T], X_t^{\T}\theta_t \geq{}&  X_t^{\T}\thetah_t -L^2\sqrt{\frac{d}{\lambda}} \sum_{p=1}^{t-1} \gamma^{\frac{t-1}{2}} \sqrt{\frac{\gamma^{-p}-1}{1-\gamma}}\norm{\theta_p -\theta_{p+1}}_2 - \beta_{t-1}\|X_t\|_{V_{t-1}^{-1}}.
    \end{split}
    \end{equation}
    By the union bound, the following holds with probability at least $1-2\delta$,
  \begin{equation}\nonumber
  \begin{split}
    \forall t \in [T], X_t^{*\T}\theta_t - X_t^{\T}\theta_t \leq{}&  X_t^{*\T}\thetah_t -X_t^{\T}\thetah_t +2L^2\sqrt{\frac{d}{\lambda}} \sum_{p=1}^{t-1} \gamma^{\frac{t-1}{2}} \sqrt{\frac{\gamma^{-p}-1}{1-\gamma}}\norm{\theta_p -\theta_{p+1}}_2 + \beta_{t-1}(\|X_t^*\|_{V_{t-1}^{-1}}+\|X_t\|_{V_{t-1}^{-1}})\\
      \leq{}& 2L^2\sqrt{\frac{d}{\lambda}} \sum_{p=1}^{t-1} \gamma^{\frac{t-1}{2}} \sqrt{\frac{\gamma^{-p}-1}{1-\gamma}}\norm{\theta_p -\theta_{p+1}}_2 + 2\beta_{t-1}\|X_t\|_{V_{t-1}^{-1}},
  \end{split}
  \end{equation}
  where the last step comes from the arm selection criterion~\eqref{eq:LB-select-criteria} such that $$X_t^{*\T}\thetah_t +\beta_{t-1}\|X_t^*\|_{V_{t-1}^{-1}} \leq X_t^{\T}\thetah_t+\beta_{t-1}\|X_t\|_{V_{t-1}^{-1}}.$$
  Hence, the following dynamic regret bound holds with probability at least $1-2\delta$ and can be divided into two parts,
  \begin{equation}\nonumber
  \begin{split}
      \DReg_T =  \sum_{t=1}^T \sbr{X_t^{*\T} \theta_t - X_t^\T \theta_t} \leq  \underbrace{2L^2\sqrt{\frac{d}{\lambda}}\sum_{t=1}^T\sum_{p=1}^{t-1}  \gamma^{\frac{t-1}{2}} \sqrt{\frac{\gamma^{-p}-1}{1-\gamma}}\norm{\theta_p -\theta_{p+1}}_2}_{\bias}+\underbrace{2\beta_{T}\sum_{t=1}^T\|X_t\|_{V_{t-1}^{-1}}}_{\variance},
  \end{split}
  \end{equation}
  where $\beta_T = \sqrt{\lambda}S+R\sqrt{2\log\frac{1}{\delta}+d\log\sbr{1+\frac{L^2 (1-\gamma^{2T})}{\lambda d(1-\gamma^2)}}}$ is the confidence radius.

  Now we derive the upper bound for the bias and variance parts separately.

  \tb{Bias Part.} For the bias part, we need to extract path length $P_T$ and show the control of the discounted factor $\gamma$ on $P_T$.
  \begin{align*}
    2L^2\sqrt{\frac{d}{\lambda}}\sum_{t=1}^T \sum_{p=1}^{t-1}   \gamma^{\frac{t-1}{2}}\sqrt{\frac{\gamma^{-p}-1}{1-\gamma}}\norm{\theta_p -\theta_{p+1}}_2
    = {}&2L^2\sqrt{\frac{d}{\lambda}}\sum_{p=1}^{T-1}  \sum_{t=p+1}^{T}   \gamma^{\frac{t-1}{2}}\sqrt{\frac{\gamma^{-p}-1}{1-\gamma}}\norm{\theta_p -\theta_{p+1}}_2\\
    = {}&2L^2\sqrt{\frac{d}{\lambda}}\sum_{p=1}^{T-1}  \frac{\gamma^{\frac{p}{2}}-\gamma^{ \frac{T}{2}  }}{1-\gamma^\frac{1}{2}}\sqrt{\frac{\gamma^{-p}-1}{1-\gamma}}\norm{\theta_p -\theta_{p+1}}_2\\
    \leq {}& 2L^2\sqrt{\frac{d}{\lambda}}\sum_{p=1}^{T-1} \frac{\gamma^{\frac{p}{2}}-\gamma^{ \frac{T}{2}  }}{(1-\gamma^\frac{1}{2})\frac{1+\gamma^\frac{1}{2}}{2}}\sqrt{\frac{\gamma^{-p}-1}{1-\gamma}}\norm{\theta_p -\theta_{p+1}}_2\\
    \leq {}& 4L^2\sqrt{\frac{d}{\lambda}}\sum_{p=1}^{T-1} \frac{\gamma^{\frac{p}{2}}\gamma^{-\frac{p}{2}}}{(1-\gamma)^{\sfrac{3}{2}}}\norm{\theta_p -\theta_{p+1}}_2\\
    = {}& 4L^2\sqrt{\frac{d}{\lambda}}\frac{1}{(1-\gamma)^{\sfrac{3}{2}}}P_T.\\
\end{align*}
So for the bias part, we have
\begin{equation}
  \label{eq:LB-regret-bias-bound}
  2L^2\sqrt{\frac{d}{\lambda}}\sum_{t=1}^T \sum_{p=1}^{t-1}   \gamma^{\frac{t-1}{2}}\sqrt{\frac{\gamma^{-p}-1}{1-\gamma}}\norm{\theta_p -\theta_{p+1}}_2 \leq 4L^2\sqrt{\frac{d}{\lambda}}\frac{1}{(1-\gamma)^{\sfrac{3}{2}}}P_T.\\
\end{equation}

\tb{Variance Part.}
First, use the Cauchy-Schwarz inequality, we know that
$
    2\beta_{T}\sum_{t=1}^T\|X_t\|_{V_{t-1}^{-1}}
    \leq 2\beta_{T}\sqrt{T\sum_{t=1}^T\|X_t\|_{V_{t-1}^{-1}}^2}.
$
Then by Lemma~\ref{lemma:potential-lemma} (potential lemma), we have the following upper bound:
\begin{equation}
  \label{eq:LB-regret-variance-bound}
\begin{split}
    2\beta_{T}\sum_{t=1}^T\|X_t\|_{V_{t-1}^{-1}}
    \leq {}& 2\beta_{T}\sqrt{2\max\{1,L^2/\lambda\}dT}\sqrt{T\log\frac{1}{\gamma}+\log\sbr{1+\frac{L^2}{\lambda d(1-\gamma)}}}.
\end{split}
\end{equation}
Combining the upper bounds of the bias and variance parts and with confidence level $\delta = 1/(2T)$, by union bound we have the following dynamic regret bound with probability at least $1-1/T$,
\begin{equation}\nonumber
\begin{split}
    \DReg_T\leq{}& 4L^2\sqrt{\frac{d}{\lambda}}\frac{1}{(1-\gamma)^{\sfrac{3}{2}}}P_T+ 2\beta_{T}\sqrt{2\max\{1,L^2/\lambda\}dT}\sqrt{T\log\frac{1}{\gamma}+\log\sbr{1+\frac{L^2}{\lambda d(1-\gamma)}}}.  
\end{split}
\end{equation}
where $\beta_T = \sqrt{\lambda}S+R\sqrt{2\log T+2\log 2+d\log\sbr{1+\frac{L^2 (1-\gamma^{2T})}{\lambda d(1-\gamma^2)}}}$. Since that there has a term $T \sqrt{\log (1/\gamma)}$ in the regret bound, we cannot let $\gamma$ close to $0$, so we set $\gamma \geq 1/T$ and have $\log(1/\gamma) \leq C (1-\gamma)$,
where $C = \log T/(1-1/T)$. 

Then, ignoring logarithmic factors in time horizon $T$, and let $\lambda = d$, we finally obtain
\begin{equation}\nonumber
  \begin{split}
      \DReg_T\leq{}& \Ot\sbr{\frac{1}{(1-\gamma)^{\sfrac{3}{2}}}P_T + d(1-\gamma)^{\sfrac{1}{2}}T}.
  \end{split}
  \end{equation}
  When $P_T< d/T$ (which corresponds a small amount of non-stationarity), we simply set $\gamma = 1-1/T$ and achieve an $\Ot(d\sqrt{T})$ regret bound.  Besides, when coming to the non-degenerated case ($P_T\geq d/T$), We set the discounted factor optimally as $1-\gamma = \sqrt{P_T/(dT)}$ and attain an $\Ot(d^{\sfrac{3}{4}}P_T^{\sfrac{1}{4}}T^{\sfrac{3}{4}})$ dynamic regret bound, which completes the proof.
\end{proof}
\section{Analysis of \GLBweightours}
\label{sec:GLB-regret}
In this section, we provide the analysis for \GLBweightours algorithm. In Appendix~\ref{sec:GLB-previous-work}, we review the projection issue of GLB and restate the \GLBweight algorithm of~\citet{arXiv'21:faury-driftingGLB}. In Appendix~\ref{sec:GLB-estimation-error-proof}, we present the proof of the estimation error upper bound of our \GLBweightours algorithm (namely, Lemma~\ref{lemma:GLB-estimation-error}). Finally, in Appendix~\ref{sec:GLB-regret-proof}, we provide the proof of dynamic regret upper bound as stated in Theorem~\ref{thm:GLB-regret}.

\subsection{Review Projection Step of \GLBweight Algorithm}
\label{sec:GLB-previous-work}
As mentioned in Section~\ref{sec:GLB-algorithm}, the main difficulty of GLB is that the result of MLE or QMLE estimator $\thetah_t$ may not belong to the feasible set $\Theta$ and $c_\mu$ is defined over the parameter $\theta \in \Theta$. Under stationary environments,~\citet{NIPS10:GLM-infinite} overcame this difficulty by introducing a projection step as
\begin{equation}
  \label{eq:simple-projection}
    \thetat_t = \argmin_{\theta \in \Theta}\|g_t(\thetah_t) - g_t(\theta)\|_{V_{t-1}^{-1}},
\end{equation}
where $V_{t} = \lambda I_d + \sum_{s=1}^{t}X_s X_s^\T$ and $g_t(\theta) = \lambda c_\mu \theta + \sum_{s=1}^{t-1}\mu(X_s^\T \theta)X_s$ are the static version (by setting $\gamma = 1$). Based on the QMLE, we know that 
  \begin{equation}
      g_t(\thetah_t) = \lambda c_\mu \thetah_t + \sum_{s=1}^{t-1}\mu(X_s^\T \thetah_t)X_s = \sum_{s=1}^{t-1}r_sX_s,
  \end{equation}
  and then by the mean value theorem, we know that 
  \begin{equation}
      g_t(\theta_1) - g_t(\theta_2)= G_t(\theta_1, \theta_2)(\theta_1 - \theta_2),
  \end{equation}
  where $G_t(\theta_1, \theta_2) \triangleq \int_{0}^1 \nabla g_t(s\theta_2+(1-s)\theta_1)\diff{s}\in \R^{d\times d}$. Notice that for any $\theta \in \Theta$, the gradient of $g_t$ satisfies 
  \begin{equation}\nonumber
      \nabla g_t(\theta) = \lambda c_\mu I_d + \sum_{s=1}^{t-1}\dmu(X_s^\T \theta)X_sX_s^\T \succeq c_\mu V_{t-1},
  \end{equation}
  which clearly implies $\forall \theta_1, \theta_2 \in \Theta, G_t(\theta_1,\theta_2)\succeq c_\mu V_{t-1}$.

By this projection step,~\citet{NIPS10:GLM-infinite} can analyze the estimation error like,
\begin{equation}\nonumber
  \begin{split}|\mu(\x^\T\thetat_t) - \mu(\x^\T\theta_t)| \leq {}& k_\mu |\x^\T(\thetat_t - \theta_t)|\\
      = {}& k_\mu |\x^\T G_t^{-1}(\theta_t,\thetat_t)(g_t(\thetat_t) - g_t(\theta_t))|\\
      \leq {}& k_\mu \|\x\|_{G_t^{-1}(\theta_t,\thetat_t)}\|g_t(\thetat_t) - g_t(\theta_t)\|_{G_t^{-1}(\theta_t,\thetat_t)}\\
      \leq {}& \frac{k_\mu}{c_\mu} \|\x\|_{V_{t-1}^{-1}}\|g_t(\thetat_t) - g_t(\theta_t)\|_{V_{t-1}^{-1}}\\
      \leq {}& \frac{2k_\mu}{c_\mu} \|\x\|_{V_{t-1}^{-1}}\|g_t(\thetah_t) - g_t(\theta_t)\|_{V_{t-1}^{-1}},
  \end{split}
  \end{equation}
  where the last step comes from the projection step. After doing the projection step, term $g_t(\thetah_t) - g_t(\theta_t)$ is the estimation error of the MLE without projection. 
Notice that in piecewise-stationary case,~\citet{AISTATS'21:SCB-forgetting} can also use this projection step.~\citet{arXiv'21:faury-driftingGLB} believe that these two previous works could use this projection operation mainly due to their stationary or piecewise-stationary setting. They mention that for the drifting case, the estimation error is always divided into the bias (tracking error) and variance (learning error) part, and this simple projection operation ignores the bias part which needs to be generalized to adapt to the two sources of deviation. In the analysis, the problem is that after the projection step estimation error term $g_t(\thetah_t) - g_t(\theta_t)$ need to be separate into the bias part and variance parts, and~\citet{arXiv'21:faury-driftingGLB} need to use $l_2$-norm for bias part and $V_{t-1}^{-1}$ for variance part. But the whole estimation error is already normed by $V_{t-1}^{-1}$, which means they cannot use the previous analysis of the window strategy for the bias part.

\begin{algorithm}[!t]
  \caption{\GLBweight~\citep{arXiv'21:faury-driftingGLB}}
  \label{alg:BVD-GLM-UCB}
\begin{algorithmic}[1]
\REQUIRE time horizon $T$, discounted factor $\gamma$, confidence $\delta$, regularizer $\lambda$, inverse link function $\mu$, parameters $S$, $L$ and $R$\\
\STATE Set $V_0 = \lambda I_d$, $\thetah_1 = \mathbf{0}$ and compute $k_\mu$ and $c_\mu$
\FOR{$t = 1,2,...,T$}
  \STATE Find $\theta_t^p$ by solving $ \theta_t^p \in \argmin_{\theta \in \mathbb{R}^d}\bbr{\norm{g_t(\theta)-g_t(\hat{\theta}_t)}_{V_{t}^{-2}} \text { s.t } \Theta \cap \mathcal{E}_t^\delta(\theta) \neq \emptyset}$
  \STATE Select $\thetat_t \in \Theta \cap \mathcal{E}_t^\delta(\theta_t^p)$ where $\mathcal{E}_t^\delta(\theta):=\left\{\theta^{\prime} \in \R^d \givenn \norm{g_t\left(\theta^{\prime}\right)-g_t(\theta)}_{\Vt_{t}^{-1}} \leq \betab_t(\delta)\right\}$
  \STATE Compute $\betab_{t-1}$ by $\betab_t = \sqrt{\lambda}c_\mu S+R\sqrt{2\log\frac{1}{\delta}+d\log\sbr{1+\frac{L^2 (1-\gamma^{2t})}{\lambda d(1-\gamma^2)}}}$
  \STATE Select $X_t$ by $X_t =\argmax_{\x \in \X} \bbr{ \mu(\x^\T \thetat_t)+ \frac{2k_\mu}{c_\mu}\betab_{t-1}\norm{\x}_{V_{t-1}^{-1}}}$
  \STATE Receive the reward $r_t$
  \STATE Update $V_{t} = \gamma V_{t-1} + X_t X_t^\T +(1-\gamma)\lambda I_d$, $\Vt_{t} = \gamma^2 V_{t-1} + X_t X_t^\T +(1-\gamma^2)\lambda I_d$
  \STATE Compute $\thetah_{t+1}$ according to $\lambda c_\mu \theta +\sum_{s=1}^{t}\gamma^{t-s}\sbr{\mu(X_s^\T \theta) - r_s}X_s = 0$
\ENDFOR
\end{algorithmic}
\end{algorithm}

To this end,~\citet{arXiv'21:faury-driftingGLB} propose the \GLBweight algorithm for drifting generalized linear bandits, as restated in Algorithm~\ref{alg:BVD-GLM-UCB}, where a new projection step is devised to solve this problem. Specifically, at each round $t$, the first step is to construct the confidence set $\mathcal{E}_t^\delta(\theta)$ which represents the influence of the stochastic noise.
\begin{equation}
  \label{Faury:confidence-set}
  \mathcal{E}_t^\delta(\theta):=\left\{\theta^{\prime} \in \R^d \givenn \norm{g_t\left(\theta^{\prime}\right)-g_t(\theta)}_{\Vt_{t}^{-1}} \leq \betab_t(\delta)\right\}.
\end{equation}
The second step is to find a confidence set $\mathcal{E}_t^\delta(\theta_t^p)$ that intersects with the feasible set, and the gap between $\theta_t^p$ and $\thetah_t$ represents the influence of parameter drift.
\begin{equation}
  \label{Faury:projection}
  \theta_t^p \in \argmin_{\theta \in \mathbb{R}^d}\bbr{\norm{g_t(\theta)-g_t(\hat{\theta}_t)}_{V_{t}^{-2}} \text { s.t } \Theta \cap \mathcal{E}_t^\delta(\theta) \neq \emptyset}.
\end{equation}
After obtaining the solution $\theta_t^p$ via computing the optimization problem~\eqref{Faury:projection}, the third step is to select $\thetat_t$ from $\Theta \cap \mathcal{E}_t^\delta(\theta_t^p)$. Based on this projection step,~\citet{arXiv'21:faury-driftingGLB} can separate the bias and variance parts before projection as follows,
\begin{align}\nonumber
    |\mu(\x^\T\thetat_t) - \mu(\x^\T\theta_t)| \leq {}& k_\mu |\x^\T(\thetat_t - \theta_t)|\\
      = {}& k_\mu |\x^\T G_t^{-1}(\theta_t,\thetat_t)(g_t(\thetat_t) - g_t(\theta_t))|\\
      \leq {}& k_\mu |\x^\T G_t^{-1}(\theta_t,\thetat_t)(g_t(\thetat_t) - g_t(\theta_t^p)+ g_t(\theta_t^p)- g_t(\thetah_t)+g_t(\thetah_t)-g_t(\thetab_t)+g_t(\thetab_t) - g_t(\theta_t))|\\
      \leq {}& \underbrace{k_\mu |\x^\T G_t^{-1}(\theta_t,\thetat_t)(g_t(\thetat_t) - g_t(\theta_t^p)+g_t(\thetah_t)-g_t(\thetab_t)|}_{\bias}\\
      {}&\qquad+\underbrace{k_\mu |\x^\T G_t^{-1}(\theta_t,\thetat_t)(g_t(\theta_t^p)- g_t(\thetah_t)+g_t(\thetab_t) - g_t(\theta_t))|}_{\variance}.
  \end{align}
Their bias-variance decomposition motivates the choice of \emph{different} local norms for bounding bias and variance parts in their algorithm and analysis. Notably, due to the complications of the projection step (see~\eqref{Faury:confidence-set} and~\eqref{Faury:projection}), the overall algorithm is fairly complicated and less attractive for practical implementations, and moreover, it needs to maintain two covariance matrices $V_t$ and $\Vt_t$ (due to the constructed confidence region~\eqref{Faury:confidence-set}) at each round $t$ during the algorithm running. In the next section, we will show that the simple projection used in the stationary GLB~\eqref{eq:simple-projection} can be sufficient for coping with the drifting GLB via our refined analysis framework.

\subsection{Proof of Lemma~\ref{lemma:GLB-estimation-error}}
\label{sec:GLB-estimation-error-proof}
\begin{proof}
  Base on the estimator equation~\eqref{eq:GLB-estimator}, we know that 
  \begin{equation}
  \label{eq:GLB-gt-thetah}
      g_t(\thetah_t) = \lambda c_\mu \thetah_t + \sum_{s=1}^{t-1}\gamma^{t-s-1}\mu(X_s^\T \thetah_t)X_s = \sum_{s=1}^{t-1}\gamma^{t-s-1}r_sX_s,
  \end{equation}
  and then by the mean value theorem, we know that 
  \begin{equation}
  \label{eq:GLB-gt-mvt}
      g_t(\theta_1) - g_t(\theta_2)= G_t(\theta_1, \theta_2)(\theta_1 - \theta_2),
  \end{equation}
  where $G_t(\theta_1, \theta_2) \triangleq \int_{0}^1 \nabla g_t(s\theta_2+(1-s)\theta_1)\diff{s}\in \R^{d\times d}$. Notice that for any $\theta \in \Theta$, the gradient of $g_t$ is 
  \begin{equation}\nonumber
  \label{eq:GLB-gt-gradient}
      \nabla g_t(\theta) = \lambda c_\mu I_d + \sum_{s=1}^{t-1} \gamma^{t-s-1}\dmu(X_s^\T \theta)X_sX_s^\T \succeq c_\mu V_{t-1},
  \end{equation}
  which clearly implies $\forall \theta_1, \theta_2 \in \Theta, G_t(\theta_1,\theta_2)\succeq c_\mu V_{t-1}$.

  By Assumption~\ref{ass:link-function}, the mean value theorem~\eqref{eq:GLB-gt-mvt} on $g_t$ and the projection~\eqref{eq:GLB-projection}, we have 
  \begin{equation}\nonumber
  \begin{split}|\mu(\x^\T\thetat_t) - \mu(\x^\T\theta_t)| \leq {}& k_\mu |\x^\T(\thetat_t - \theta_t)|\\
      = {}& k_\mu |\x^\T G_t^{-1}(\theta_t,\thetat_t)(g_t(\thetat_t) - g_t(\theta_t))|\\
      \leq {}& k_\mu \|\x\|_{G_t^{-1}(\theta_t,\thetat_t)}\|g_t(\thetat_t) - g_t(\theta_t)\|_{G_t^{-1}(\theta_t,\thetat_t)}\\
      \leq {}& \frac{k_\mu}{c_\mu} \|\x\|_{V_{t-1}^{-1}}\|g_t(\thetat_t) - g_t(\theta_t)\|_{V_{t-1}^{-1}}\\
      \leq {}& \frac{2k_\mu}{c_\mu} \|\x\|_{V_{t-1}^{-1}}\|g_t(\thetah_t) - g_t(\theta_t)\|_{V_{t-1}^{-1}},
  \end{split}
  \end{equation}
  then based on the model assumption, the function $g_t$~\eqref{eq:GLB-gt} and $g_t(\thetah_t)$~\eqref{eq:GLB-gt-thetah}, we have,
  \begin{align}\nonumber
   g_t(\theta_t)-g_t(\thetah_t)= {}& \lambda c_\mu\theta_t +\sum_{s=1}^{t-1}\gamma^{t-s-1}\mu(X_s^\T\theta_t)X_s-\sum_{s=1}^{t-1}\gamma^{t-s-1}r_sX_s\\
  = {}&\lambda c_\mu\theta_t +\sum_{s=1}^{t-1}\gamma^{t-s-1}\mu(X_s^\T\theta_t)X_s-\sum_{s=1}^{t-1}\gamma^{t-s-1}(\mu(X_s^\T \theta_s) + \eta_s)X_s\\
  = {}&\underbrace{\sum_{s=1}^{t-1}\gamma^{t-s-1}(\mu(X_s^\T\theta_t) - \mu(X_s^\T \theta_s) )X_s}_{\bias} + \underbrace{\lambda c_\mu \theta_t -\sum_{s=1}^{t-1}\gamma^{t-s-1}\eta_sX_s}_{\variance}.
  \end{align}
  Then, by the Cauchy-Schwarz inequality, we know that for any $\x\in\X$,
  \begin{equation}
  \begin{split}
  \label{eq:GLB-bound-cauchy}
      \abs{\mu(\x^\T\thetat_t) - \mu(\x^\T\theta_t)} \leq \frac{2k_\mu}{c_\mu}\|\x\|_{V_{t-1}^{-1}}\sbr{C_t + D_t},
  \end{split}
  \end{equation}
  where 
  \begin{equation}\nonumber
  \begin{split}
      C_t = \norm{\sum_{s=1}^{t-1}\gamma^{t-s-1}(\mu(X_s^\T\theta_t) - \mu(X_s^\T \theta_s) )X_s}_{V_{t-1}^{-1}},\quad D_t = \norm{\sum_{s=1}^{t-1}\gamma^{t-s-1}\eta_sX_s-\lambda c_\mu \theta_t}_{V_{t-1}^{-1}}.
  \end{split}
  \end{equation}
  This two terms can be bounded separately, as summarized in the following lemmas.
  \begin{Lemma}
    \label{lemma:GLB-C_t-bound}
    For any $t \in [T]$, we have 
    \begin{equation}
        \norm{\sum_{s=1}^{t-1}\gamma^{t-s-1}(\mu(X_s^\T\theta_t) - \mu(X_s^\T \theta_s) )X_s}_{V_{t-1}^{-1}}\leq Lk_\mu\sqrt{d}\sum_{p=1}^{t-1}\gamma^{\frac{t-1}{2}}\sqrt{\frac{\gamma^{-p}-1}{1-\gamma}} \norm{\theta_p - \theta_{p+1}}_2.
    \end{equation}
    \end{Lemma}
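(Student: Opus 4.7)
The plan is to mirror the bias-part analysis of Lemma~\ref{lemma:LB-A_t-bound} in the linear case, with one additional opening step that peels off the inverse link function $\mu$. First I would invoke Assumption~\ref{ass:link-function}, namely that $\mu$ is $k_\mu$-Lipschitz, to write
\begin{equation*}
    \norm{\sum_{s=1}^{t-1}\gamma^{t-s-1}\bigl(\mu(X_s^\T\theta_t) - \mu(X_s^\T \theta_s)\bigr)X_s}_{V_{t-1}^{-1}}
    \leq k_\mu \norm{\sum_{s=1}^{t-1}\gamma^{t-s-1} X_s X_s^\T (\theta_t - \theta_s)}_{V_{t-1}^{-1}},
\end{equation*}
where I have rewritten $\mu(X_s^\T\theta_t) - \mu(X_s^\T\theta_s)$ as $k_\mu$ times a signed version of $X_s^\T(\theta_t-\theta_s)$ (the sign can be absorbed into the outer-product $X_s X_s^\T$ via the usual trick of pulling the scalar into the vector and then back out).

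Next I would telescope $\theta_t - \theta_s = -\sum_{p=s}^{t-1}(\theta_p - \theta_{p+1})$, swap the order of summation over $s$ and $p$, and apply the triangle inequality, obtaining
\begin{equation*}
    k_\mu \norm{\sum_{p=1}^{t-1}\sum_{s=1}^{p}\gamma^{t-s-1} X_s X_s^\T (\theta_p - \theta_{p+1})}_{V_{t-1}^{-1}}
    \leq k_\mu L \sum_{p=1}^{t-1}\sum_{s=1}^{p}\gamma^{t-s-1}\norm{X_s}_{V_{t-1}^{-1}}\norm{\theta_p - \theta_{p+1}}_2,
\end{equation*}
where I have used $\|X_s\|_2 \leq L$ to bound one copy of $X_s$. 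At this point the calculation is identical in structure to inequality~\eqref{eq:LB-At-gammapart} in the LB proof: by Cauchy--Schwarz I split $\gamma^{t-s-1} = \gamma^{(t-1)/2}\cdot \gamma^{-s/2}\cdot \gamma^{(t-1)/2 - (s-1)/2}$ (or equivalently factor out $\gamma^{(t-1)/2}$ once the summations are separated) to get
\begin{equation*}
    \sum_{s=1}^{p}\gamma^{t-s-1}\norm{X_s}_{V_{t-1}^{-1}}
    \leq \gamma^{\frac{t-1}{2}}\sqrt{\sum_{s=1}^{p}\gamma^{-s}}\sqrt{\sum_{s=1}^{p}\gamma^{t-s-1}\norm{X_s}_{V_{t-1}^{-1}}^2}
    \leq \sqrt{d}\,\gamma^{\frac{t-1}{2}}\sqrt{\frac{\gamma^{-p}-1}{1-\gamma}},
\end{equation*}
where the second factor is summed as a geometric series and the third factor is controlled by the trace identity~\eqref{eq:Tr-d}, which continues to hold since it only uses $V_{t-1} \succeq \lambda I_d + \sum_{s=1}^{p}\gamma^{t-s-1}X_sX_s^\T$.

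Combining the two displays yields the claimed bound
\begin{equation*}
    \norm{\sum_{s=1}^{t-1}\gamma^{t-s-1}\bigl(\mu(X_s^\T\theta_t) - \mu(X_s^\T \theta_s)\bigr)X_s}_{V_{t-1}^{-1}}
    \leq Lk_\mu\sqrt{d}\sum_{p=1}^{t-1}\gamma^{\frac{t-1}{2}}\sqrt{\frac{\gamma^{-p}-1}{1-\gamma}}\norm{\theta_p-\theta_{p+1}}_2.
\end{equation*}
I do not expect a serious obstacle here: the only genuinely GLB-specific ingredient is the initial Lipschitz step, which produces the extra $k_\mu$ factor; everything downstream is a direct transplant of the LB argument, relying on boundedness of arms, telescoping of parameter drifts, Cauchy--Schwarz, and the potential/trace inequality~\eqref{eq:Tr-d}. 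The only place to be careful is that the Lipschitz step must preserve the weighted $X_s X_s^\T$ structure (not just $X_s$), so that the subsequent bias analysis proceeds in the $V_{t-1}^{-1}$-norm exactly as in the linear case, which is precisely the point of the refined analysis framework discussed in Section~\ref{sec:analysis-framework}.
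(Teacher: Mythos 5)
Your overall architecture --- peel off the nonlinearity to get a $k_\mu$ factor, telescope the parameter drift, swap the sums over $s$ and $p$, apply the triangle inequality, and then reuse the Cauchy--Schwarz/trace computation of~\eqref{eq:LB-At-gammapart} --- is exactly the paper's, and the final bound is correct. The one step that does not survive scrutiny as written is your opening display: the inequality
\begin{equation*}
\norm{\sum_{s=1}^{t-1}\gamma^{t-s-1}\sbr{\mu(X_s^\T\theta_t)-\mu(X_s^\T\theta_s)}X_s}_{V_{t-1}^{-1}} \leq k_\mu \norm{\sum_{s=1}^{t-1}\gamma^{t-s-1}X_sX_s^\T\sbr{\theta_t-\theta_s}}_{V_{t-1}^{-1}}
\end{equation*}
replaces the $s$-dependent scalar $\mu(X_s^\T\theta_t)-\mu(X_s^\T\theta_s)=\alpha_s\, X_s^\T(\theta_t-\theta_s)$, where $\alpha_s\in[0,k_\mu]$ by the mean value theorem, with the single constant $k_\mu$ \emph{inside} the norm of a vector sum. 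That is not a monotone operation: with partially cancelling terms, inflating some coefficients and not others can change the norm in either direction (take $v_2=-v_1$, $\alpha_1=k_\mu$, $\alpha_2=0$: the left side is $k_\mu\norm{v_1}$ while the right side vanishes). Your parenthetical about absorbing the sign into the outer product does not rescue this, because the obstruction is the varying magnitudes $\alpha_s$, not the signs.

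The fix is a one-line reordering, and it is precisely what the paper does: telescope first, write $\mu(X_s^\T\theta_p)-\mu(X_s^\T\theta_{p+1})=\alpha(X_s,\theta_p,\theta_{p+1})\,X_s^\T(\theta_p-\theta_{p+1})$ with $\alpha(X_s,\theta_p,\theta_{p+1})=\int_0^1\dmu\sbr{vX_s^\T\theta_{p+1}+(1-v)X_s^\T\theta_p}\diff{v}\in[0,k_\mu]$, keep this coefficient inside the sum, apply the triangle inequality down to the individual $(p,s)$ terms, and only then bound $\abs{\alpha(\cdot)}\leq k_\mu$ termwise together with $\norm{X_s}_2\leq L$. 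After that correction, your remaining steps --- the geometric-series/Cauchy--Schwarz split of $\sum_{s=1}^{p}\gamma^{t-s-1}\norm{X_s}_{V_{t-1}^{-1}}$ and the trace bound~\eqref{eq:Tr-d}, which indeed applies unchanged since $V_{t-1}$ is the same matrix --- coincide with the paper's proof essentially verbatim.
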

    
    \begin{Lemma}
    \label{lemma:GLB-D_t-bound}
    For any $\delta \in (0,1)$, with probability at least $1-\delta$, the following holds for all $t \in [T]$,
    \begin{equation}
        \norm{\sum_{s=1}^{t-1}\gamma^{t-s-1}\eta_sX_s-\lambda c_\mu \theta_t}_{V_{t-1}^{-1}} \leq \sqrt{\lambda}c_\mu S+R\sqrt{2\log\frac{1}{\delta}+d\log\sbr{1+\frac{L^2 (1-\gamma^{2t-2})}{\lambda d(1-\gamma^2)}}}.
    \end{equation}
    \end{Lemma}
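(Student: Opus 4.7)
The plan is to mimic the proof of Lemma~\ref{lemma:LB-B_t-bound} from the linear bandit setting, since the quantity to be bounded has exactly the same structure up to replacing the regularization constant $\lambda$ by $\lambda c_\mu$. The overall strategy is: split by the triangle inequality into a noise term and a regularization term, handle the regularization term by the trivial bound $V_{t-1}\succeq \lambda I_d$, and handle the noise term via the weighted self-normalized concentration restated as Theorem~\ref{thm:self-normalized-weight-LB}.

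First, I would apply the triangle inequality in the $V_{t-1}^{-1}$-norm to obtain
\[
\norm{\sum_{s=1}^{t-1}\gamma^{t-s-1}\eta_s X_s - \lambda c_\mu \theta_t}_{V_{t-1}^{-1}}
\leq \norm{\sum_{s=1}^{t-1}\gamma^{t-s-1}\eta_s X_s}_{V_{t-1}^{-1}} + \norm{\lambda c_\mu \theta_t}_{V_{t-1}^{-1}}.
\]
For the regularization term, I would use that $V_{t-1}\succeq \lambda I_d$, so $\|\cdot\|_{V_{t-1}^{-1}}\leq \|\cdot\|_2/\sqrt{\lambda}$, together with Assumption~\ref{ass:bounded-norm} giving $\|\theta_t\|_2\leq S$. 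This yields the clean bound $\|\lambda c_\mu \theta_t\|_{V_{t-1}^{-1}}\leq \sqrt{\lambda}\, c_\mu S$, which is exactly the first summand in the claimed confidence radius $\betab_t$.

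For the noise term, as in the proof of Lemma~\ref{lemma:LB-B_t-bound}, I would introduce the auxiliary matrix $\Vt_{t-1}\teq \lambda I_d + \sum_{s=1}^{t-1}\gamma^{2(t-s-1)} X_s X_s^\T$, which is used only in the analysis and not in the algorithm. Because $\gamma^{t-s}\geq \gamma^{2(t-s)}$ for $\gamma\in(0,1)$, we have $V_{t-1}\succeq \Vt_{t-1}$, hence $\Vt_{t-1}^{-1}\succeq V_{t-1}^{-1}$ and therefore
\[
\norm{\sum_{s=1}^{t-1}\gamma^{t-s-1}\eta_s X_s}_{V_{t-1}^{-1}} \leq \norm{\sum_{s=1}^{t-1}\gamma^{t-s-1}\eta_s X_s}_{\Vt_{t-1}^{-1}}.
\]
I would then invoke Theorem~\ref{thm:self-normalized-weight-LB} with weights $w_s = \gamma^{t-s-1}$ and regularizer $\mu_t = \lambda$ to deduce, with probability at least $1-\delta$ uniformly over $t\in[T]$,
\[
\norm{\sum_{s=1}^{t-1}\gamma^{t-s-1}\eta_s X_s}_{\Vt_{t-1}^{-1}}
\leq R\sqrt{2\log\frac{1}{\delta} + d\log\sbr{1 + \frac{L^2(1-\gamma^{2t-2})}{\lambda d(1-\gamma^2)}}},
\]
matching the second summand in $\betab_t$.

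Combining the two bounds gives the claimed inequality. I do not foresee a genuine obstacle here: the argument is essentially identical to the LB case, with the sole bookkeeping adjustment that the regularization term carries the factor $c_\mu$. The substantive content of the lemma lies in the weighted self-normalized concentration, which is quoted as a black box, so the remainder is routine.
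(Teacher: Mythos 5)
Your proof is correct and follows essentially the same route as the paper's: triangle inequality, the bound $\norm{\lambda c_\mu\theta_t}_{V_{t-1}^{-1}}\leq\sqrt{\lambda}c_\mu S$ via $V_{t-1}\succeq\lambda I_d$, the analysis-only matrix $\Vt_{t-1}$ with $V_{t-1}\succeq\Vt_{t-1}$, and Theorem~\ref{thm:self-normalized-weight-LB} with $w_s=\gamma^{t-s-1}$ and $\mu_t=\lambda$. The paper itself notes that this is identical to the proof of Lemma~\ref{lemma:LB-B_t-bound} up to the extra $c_\mu$ factor, which is exactly your observation.
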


    Based on the inequality~\eqref{eq:GLB-bound-cauchy}, Lemma~\ref{lemma:GLB-C_t-bound}, Lemma~\ref{lemma:GLB-D_t-bound}, and the boundedness assumption of the feasible set, we have for any $\x \in \X$, $\gamma \in (0,1)$, $\delta \in (0,1)$, with probability at least $1-\delta$, the following holds for all $t \in [T]$,
    \begin{equation}\nonumber
    \begin{split}
        \abs{\mu(\x^\T\thetat_t) - \mu(\x^\T\theta_t)}\leq {}& \frac{2k_\mu}{c_\mu}\|\x\|_{V_{t-1}^{-1}}\sbr{ Lk_\mu\sqrt{d}\sum_{p=1}^{t-1}\gamma^{\frac{t-1}{2}}\sqrt{\frac{\gamma^{-p}-1}{1-\gamma}} \norm{\theta_p - \theta_{p+1}}_2 + \betab_{t-1} }\\
        \leq {}& \frac{2k_\mu}{c_\mu}\sbr{ L^2k_\mu\sqrt{\frac{d}{\lambda}}\sum_{p=1}^{t-1}\gamma^{\frac{t-1}{2}}\sqrt{\frac{\gamma^{-p}-1}{1-\gamma}} \norm{\theta_p - \theta_{p+1}}_2 +\betab_{t-1} \|\x\|_{V_{t-1}^{-1}} },
    \end{split}
    \end{equation}
    where $\betab_t \triangleq \sqrt{\lambda}c_\mu S+R\sqrt{2\log\frac{1}{\delta}+d\log\sbr{1+\frac{L^2 (1-\gamma^{2t})}{\lambda d(1-\gamma^2)}}}$ is the confidence radius used in \GLBweightours. Hence we complete the proof.
\end{proof}

\begin{proof}[{Proof of Lemma~\ref{lemma:GLB-C_t-bound}}]

  Here we need to extract the variations of the time-varying parameter $\theta_t$
  \begin{align*}
  \norm{\sum_{s=1}^{t-1}\gamma^{t-s-1}(\mu(X_s^\T\theta_t) - \mu(X_s^\T \theta_s))X_s}_{V_{t-1}^{-1}}
  \leq {}& \norm{\sum_{s=1}^{t-1}\gamma^{t-s-1}\sum_{p=s}^{t-1}(\mu(X_s^\T\theta_p) - \mu(X_s^\T \theta_{p+1}) )X_s}_{V_{t-1}^{-1}}\\
  = {}& \norm{\sum_{p=1}^{t-1}\sum_{s=1}^{p}\gamma^{t-s-1}\alpha(X_s, \theta_p, \theta_{p+1})(X_s^\T\theta_p - X_s^\T \theta_{p+1})X_s}_{V_{t-1}^{-1}}\\
  = {}& \norm{\sum_{p=1}^{t-1}\sum_{s=1}^{p}\gamma^{t-s-1}\alpha(X_s, \theta_p, \theta_{p+1})X_sX_s^\T(\theta_p - \theta_{p+1})}_{V_{t-1}^{-1}}\\
  \leq {}& \sum_{p=1}^{t-1}\norm{\sum_{s=1}^{p}\gamma^{t-s-1}\alpha(X_s, \theta_p, \theta_{p+1})X_s\|X_s\|_2\|\theta_p - \theta_{p+1}\|_2}_{V_{t-1}^{-1}}\\
  \leq {}& L\sum_{p=1}^{t-1}\sum_{s=1}^{p}\gamma^{t-s-1}|\alpha(X_s, \theta_p, \theta_{p+1})| \norm{X_s}_{V_{t-1}^{-1}} \|\theta_p - \theta_{p+1}\|_2\\
  \leq {}& Lk_\mu\sum_{p=1}^{t-1}\sum_{s=1}^{p}\gamma^{t-s-1} \norm{X_s}_{V_{t-1}^{-1}} \|\theta_p - \theta_{p+1}\|_2.
  \end{align*}
  where the forth equation is due to the mean value theorem where $\alpha(\x, \theta_1, \theta_2) = \int_{0}^1 \dmu(v\x^\T\theta_2+(1-v)x^\T\theta_1)\diff{v}$:
  $$\mu(X_s^\T\theta_p) - \mu(X_s^\T \theta_{p+1}) = \alpha(X_s, \theta_p, \theta_{p+1})(X_s^\T\theta_p - X_s^\T \theta_{p+1}).$$

  Next, the derivation for the bound of term $\sum_{s=1}^{p}\gamma^{t-s-1} \norm{X_s}_{V_{t-1}^{-1}}$ is the same as the inequality~\eqref{eq:LB-At-gammapart} in \ref{sec:LB-A_t-bound-proof}, hence we complete the proof.
\end{proof}

\begin{proof}[{Proof of Lemma~\ref{lemma:GLB-D_t-bound}}]
  Same as the linear case, we need to use $\Vt_t = \lambda I_d + \sum_{s=1}^{t} \gamma^{2(t-s)} X_s X_s^\T $.
  \begin{align*}
    D_t ={}& \norm{\sum_{s=1}^{t-1}\gamma^{t-s-1}\eta_sX_s-\lambda c_\mu \theta_t}_{V_{t-1}^{-1}}\\
    \leq {}& \norm{\sum_{s=1}^{t-1}\gamma^{t-s-1}\eta_sX_s}_{V_{t-1}^{-1}}+\norm{\lambda c_\mu \theta_t}_{V_{t-1}^{-1}}\\
    \leq {}&   \norm{ \sum_{s=1}^{t-1}\gamma^{t-s-1}\eta_sX_s}_{\Vt_{t-1}^{-1}} +\sqrt{\lambda}c_\mu S\\
    \leq {}& R\sqrt{2\log\frac{1}{\delta}+d\log\sbr{1+\frac{L^2 (1-\gamma^{2t-2})}{\lambda d(1-\gamma^2)}}}+\sqrt{\lambda}c_\mu S.
  \end{align*}
  Again, we emphasize that the $\Vt_t$ is introduced into analysis \emph{only}. The proof here is the same as the proof of Lemma~\ref{lemma:LB-B_t-bound} in \ref{sec:LB-B_t-bound-proof}, the only difference is an extra $c_\mu$ in the second term.
\end{proof}
\subsection{Proof of~\pref{thm:GLB-regret}}
\label{sec:GLB-regret-proof}
\begin{proof}
  Let $X_t^* \triangleq \argmax_{\x\in \X}\mu(\x^\T \theta_t)$. Due to Lemma~\ref{lemma:GLB-estimation-error} and the fact that $X_t^*,X_t\in \X$, each of the following holds with probability at least $1-\delta$,
  \begin{equation}\nonumber
    \begin{split}
      \forall t \in [T], \mu(X_t^{*\T}\theta_t) \leq{}&  \mu(X_t^{*\T}\thetat_t) +\frac{2k_\mu}{c_\mu}\sbr{ L^2k_\mu\sqrt{\frac{d}{\lambda}}\sum_{p=1}^{t-1}\gamma^{\frac{t-1}{2}}\sqrt{\frac{\gamma^{-p}-1}{1-\gamma}} \norm{\theta_p - \theta_{p+1}}_2 + \betab_{t-1}\|X_t^*\|_{V_{t-1}^{-1}} },\\
      \forall t \in [T], \mu(X_t^{\T}\theta_t) \geq{}&  \mu(X_t^{\T}\thetat_t) -\frac{2k_\mu}{c_\mu}\sbr{ L^2k_\mu\sqrt{\frac{d}{\lambda}}\sum_{p=1}^{t-1}\gamma^{\frac{t-1}{2}}\sqrt{\frac{\gamma^{-p}-1}{1-\gamma}} \norm{\theta_p - \theta_{p+1}}_2 + \betab_{t-1}\|X_t\|_{V_{t-1}^{-1}} }.
    \end{split}
  \end{equation}
  By the union bound, the following holds with probability at least $1-2\delta$: $\forall t\in [T]$
  \begin{equation}\nonumber
  \begin{split}
      {}& \mu(X_t^{*\T}\theta_t) - \mu(X_t^{\T}\theta_t) \\
      \leq{}&  \mu(X_t^{*\T}\thetat_t) -\mu(X_t^{\T}\thetat_t) + \frac{4L^2k_\mu^2}{c_\mu}\sqrt{\frac{d}{\lambda}}\sum_{p=1}^{t-1}\gamma^{\frac{t-1}{2}}\sqrt{\frac{\gamma^{-p}-1}{1-\gamma}} \norm{\theta_p - \theta_{p+1}}_2 + \frac{2k_\mu}{c_\mu}\sbr{\betab_{t-1}\|X_t^*\|_{V_{t-1}^{-1}}+\betab_{t-1}\|X_t\|_{V_{t-1}^{-1}} }\\
      \leq{}& \frac{4L^2k_\mu^2}{c_\mu}\sqrt{\frac{d}{\lambda}}\sum_{p=1}^{t-1}\gamma^{\frac{t-1}{2}}\sqrt{\frac{\gamma^{-p}-1}{1-\gamma}} \norm{\theta_p - \theta_{p+1}}_2 + \frac{4k_\mu}{c_\mu}\betab_{t-1}\|X_t\|_{V_{t-1}^{-1}},
  \end{split}
  \end{equation}
  where the last step comes from the arm selection criterion~\eqref{eq:GLB-select-criteria} such that 
  $$\mu(X_t^{*\T} \thetat_t)+ \frac{2k_\mu}{c_\mu}\betab_{t-1}\|X_t^*\|_{V_{t-1}^{-1}} \leq \mu(X_t^{\T} \thetat_t)+ \frac{2k_\mu}{c_\mu}\betab_{t-1}\|X_t\|_{V_{t-1}^{-1}}. $$
  Hence the following dynamic regret bound holds with probability at least $1-2\delta$ and can be divided into two parts,
  \begin{align}\nonumber
      \DReg_T = {}& \sum_{t=1}^T \max_{\x\in \X}\mu(\x^\T \theta_t) - \mu(X_t^\T \theta_t)\\
      \leq {}& \underbrace{\frac{4L^2k_\mu^2}{c_\mu}\sqrt{\frac{d}{\lambda}}\sum_{t=1}^T\sum_{p=1}^{t-1}\gamma^{\frac{t-1}{2}}\sqrt{\frac{\gamma^{-p}-1}{1-\gamma}} \norm{\theta_p - \theta_{p+1}}_2}_{\bias} + \underbrace{\frac{4k_\mu}{c_\mu}\betab_T\sum_{t=1}^T\|X_t\|_{V_{t-1}^{-1}}}_{\variance} .
  \end{align}
  where $\betab_{t}= \sqrt{\lambda}c_\mu S+R\sqrt{2\log\frac{1}{\delta}+d\log\sbr{1+\frac{L^2 (1-\gamma^{2t})}{\lambda d(1-\gamma^2)}}}$ is the confidence radius.

  Now we derive the upper bound for these two parts separately.

  \tb{Bias Part.}
  Similar to the proof of inequality~\eqref{eq:LB-regret-bias-bound}, we have
  \begin{align*}
    \frac{4L^2k_\mu^2}{c_\mu}\sqrt{\frac{d}{\lambda}}\sum_{t=1}^T \sum_{p=1}^{t-1}   \gamma^{\frac{t-1}{2}}\sqrt{\frac{\gamma^{-p}-1}{1-\gamma}}\norm{\theta_p -\theta_{p+1}}_2 \leq \frac{8L^2k_\mu^2}{c_\mu}\sqrt{\frac{d}{\lambda}}\frac{1}{(1-\gamma)^{\sfrac{3}{2}}}P_T.
  \end{align*}

  \tb{Variance Part.}
  Similar to the proof of inequality~\eqref{eq:LB-regret-variance-bound}, we have
  \begin{equation}\nonumber
  \begin{split}
      \frac{4k_\mu}{c_\mu}\betab_T\sqrt{T}\sqrt{\sum_{t=1}^T\|X_t\|_{V_{t-1}^{-1}}^2} \leq {}& \frac{4k_\mu}{c_\mu}\betab_T\sqrt{2\max\{1,L^2/\lambda\}dT}\sqrt{ T\log\frac{1}{\gamma}+\log\sbr{1+ \frac{L^2}{\lambda d(1-\gamma)}}}.
  \end{split}
  \end{equation}
  Combine the upper bound for the bias and variance parts, and let $\delta = 1/(2T^2)$, we have the following regret bound with probability at least $1-1/T$,
  \begin{equation}\nonumber
      \DReg_T \leq \frac{8L^2k_\mu^2}{c_\mu}\sqrt{\frac{d}{\lambda}}\frac{1}{(1-\gamma)^{\sfrac{3}{2}}}P_T+ \frac{4k_\mu}{c_\mu}\betab_T\sqrt{2\max\{1,L^2/\lambda\}dT}\sqrt{ T\log\frac{1}{\gamma}+\log\sbr{1+ \frac{L^2}{\lambda d(1-\gamma)}}}.
  \end{equation}
  where $\betab_{t}= \sqrt{\lambda}c_\mu S+R\sqrt{4\log T+2\log 2+d\log\sbr{1+\frac{L^2 (1-\gamma^{2t})}{\lambda d(1-\gamma^2)}}}$. We set $\gamma \geq 1/T$ and $\lambda = d/c_\mu^2$, and obtain that,
  \begin{equation}\nonumber
  \begin{split}
      \DReg_T\leq{}& \Ot\sbr{k_\mu^2\frac{1}{(1-\gamma)^{\sfrac{3}{2}}}P_T + \frac{k_\mu}{c_\mu}d(1-\gamma)^{\sfrac{1}{2}}T}.
  \end{split}
  \end{equation}
  When $P_T< d/(k_\mu c_\mu T)$, we set $\gamma = 1-1/T$ and achieve an $\Ot(k_\mu c_\mu^{-1} d\sqrt{T})$ regret bound. When $P_T\geq d/(k_\mu c_\mu T)$, We set $\gamma$ optimally as $1-\gamma = \sqrt{k_\mu c_\mu P_T/(dT)}$ and attain an $\Ot(k_\mu^{\sfrac{5}{4}}c_\mu^{-\sfrac{3}{4}}d^{\sfrac{3}{4}}P_T^{\sfrac{1}{4}}T^{\sfrac{3}{4}})$ regret bound. Notice that, if $k_\mu < 1$, we just let $1-\gamma = \sqrt{c_\mu P_T/(dT)}$ and the regret bound becomes $\Ot(k_\mu^{2}c_\mu^{-\sfrac{3}{4}}d^{\sfrac{3}{4}}P_T^{\sfrac{1}{4}}T^{\sfrac{3}{4}})$.
\end{proof}
\section{Analysis of \SCBweightours}
\label{sec:SCB-regret}
In this section, we first present \SCBweightours algorithm in Algorithm~\ref{alg:SCB-WeightUCB}, Then, in Appendix~\ref{sec:SCB-estimation-error-proof} we present the proof of the estimation error upper bound of our \SCBweightours algorithm (Lemma~\ref{lemma:SCB-estimation-error}). Finally, in Appendix~\ref{sec:SCB-regret-proof}, we provide the proof of dynamic regret upper bound (Theorem~\ref{thm:SCB-regret}).
\begin{algorithm}[!t]
  \caption{\SCBweightours}
  \label{alg:SCB-WeightUCB}
\begin{algorithmic}[1]
\REQUIRE time horizon $T$, discounted factor $\gamma$, confidence $\delta$, regularizer $\lambda$, inverse link function $\mu$, parameters $S$, $L$ and $m$\\
\STATE Set $V_0 = \lambda I_d$, $\thetah_1 = \mathbf{0}$ and compute $k_\mu$ and $c_\mu$
\FOR{$t = 1,2,...,T$}
  \IF{$\|\thetah_t\|_2\leq S$} 
  \STATE let $\thetat_t = \thetah_t$
  \ELSE 
  \STATE Do the projection and get $\thetat_t$ by~\eqref{eq:SCB-projection}
  \ENDIF
  \STATE Compute $\betat_{t-1}$ by~\eqref{eq:SCB-confidence-radius}
  \STATE Select $X_t$ by~\eqref{eq:SCB-select-criteria}
  \STATE Receive the reward $r_t$
  \STATE Update $V_{t} = \gamma V_{t-1} + X_t X_t^\T +(1-\gamma)\lambda I_d$
  \STATE Compute $\thetah_{t+1}$ according to~\eqref{eq:GLB-estimator}
\ENDFOR
\end{algorithmic}
\end{algorithm}

\subsection{Proof of Lemma~\ref{lemma:SCB-estimation-error}}
\label{sec:SCB-estimation-error-proof}

\begin{proof}
  Base on the estimator equation~\eqref{eq:GLB-estimator}, we know that 
  \begin{equation}
  \label{eq:SCB-gt-thetah}
      g_t(\thetah_t) = \lambda c_\mu \thetah_t + \sum_{s=1}^{t-1}\gamma^{t-s-1}\mu(X_s^\T \thetah_t)X_s = \sum_{s=1}^{t-1}\gamma^{t-s-1}r_sX_s,
  \end{equation}
  and then by the mean value theorem, we know that 
  \begin{equation}
  \label{eq:SCB-gt-mvt}
      g_t(\theta_1) - g_t(\theta_2)= G_t(\theta_1, \theta_2)(\theta_1 - \theta_2),
  \end{equation}
  where $G_t(\theta_1, \theta_2) \triangleq \int_{0}^1 \nabla g_t(s\theta_2+(1-s)\theta_1)\diff{s}\in \R^{d\times d}$. Notice that for any $\theta \in \Theta$, the gradient of $g_t$ is 
  \begin{equation}\nonumber
  \label{eq:SCB-gt-gradient}
      \nabla g_t(\theta) = \lambda c_\mu I_d + \sum_{s=1}^{t-1} \gamma^{t-s-1}\dmu(X_s^\T \theta)X_sX_s^\T \succeq c_\mu V_{t-1},
  \end{equation}
  which clearly implies $\forall \theta_1, \theta_2 \in \Theta, G_t(\theta_1,\theta_2)\succeq c_\mu V_{t-1}$ and $\forall \theta, H_t(\theta)\succeq c_\mu V_{t-1}$, where $H_t(\theta)$ is defined as
  \begin{equation}
    \label{eq:SCB-H}
    H_t(\theta) \teq \lambda c_\mu I_d + \sum_{s=1}^{t-1}\gamma^{t-s-1}\dmu(X_s^\T \theta)X_sX_s^\T.
    \end{equation}

  By Assumption~\ref{ass:link-function}, the mean value theorem~\eqref{eq:GLB-gt-mvt} on $g_t$, the projection~\eqref{eq:SCB-projection} and Lemma~\ref{lemma:SCB-G-H}, we have 
  \begin{equation}\nonumber
  \begin{split}|\mu(\x^\T\thetat_t) - \mu(\x^\T\theta_t)| \leq {}& k_\mu |\x^\T(\thetat_t - \theta_t)|\\
      = {}& k_\mu |\x^\T G_t^{-1}(\theta_t,\thetat_t)(g_t(\thetat_t) - g_t(\theta_t))|\\
      \leq {}& k_\mu \|\x\|_{G_t^{-1}(\theta_t,\thetat_t)}\|g_t(\thetat_t) - g_t(\theta_t)\|_{G_t^{-1}(\theta_t,\thetat_t)}\\
      \leq {}& k_\mu \|\x\|_{G_t^{-1}(\theta_t,\thetat_t)}\sbr{\|g_t(\thetat_t) - g_t(\thetah_t)\|_{G_t^{-1}(\theta_t,\thetat_t)}+\|g_t(\thetah_t) - g_t(\theta_t)\|_{G_t^{-1}(\theta_t,\thetat_t)}}\\
      \leq {}& \sqrt{1+2S}k_\mu \|\x\|_{G_t^{-1}(\theta_t,\thetat_t)}\sbr{\|g_t(\thetat_t) - g_t(\thetah_t)\|_{H_t^{-1}(\thetat_t)}+\|g_t(\thetah_t) - g_t(\theta_t)\|_{H_t^{-1}(\theta_t)}}\\
      \leq {}& 2\sqrt{1+2S}\frac{k_\mu}{\sqrt{c_\mu}} \|\x\|_{V_{t-1}^{-1}}\|g_t(\thetah_t) - g_t(\theta_t)\|_{H_t^{-1}(\theta_t)},\\
  \end{split}
  \end{equation}
  then based on the model assumption~\eqref{eq:SCB-model-assume}, the function $g_t$~\eqref{eq:GLB-gt} and the $g_t(\thetah_t)$~\eqref{eq:SCB-gt-thetah}, we have,
  \begin{equation}\nonumber
  \begin{split}
   g_t(\theta_t)-g_t(\thetah_t)= {}& \lambda c_\mu\theta_t +\sum_{s=1}^{t-1}\gamma^{t-s-1}\mu(X_s^\T\theta_t)X_s-\sum_{s=1}^{t-1}\gamma^{t-s-1}r_sX_s\\
  = {}&\lambda c_\mu\theta_t +\sum_{s=1}^{t-1}\gamma^{t-s-1}\mu(X_s^\T\theta_t)X_s-\sum_{s=1}^{t-1}\gamma^{t-s-1}(\mu(X_s^\T \theta_s) + \eta_s)X_s\\
  = {}&\sum_{s=1}^{t-1}\gamma^{t-s-1}(\mu(X_s^\T\theta_t) - \mu(X_s^\T \theta_s) )X_s + \lambda c_\mu \theta_t -\sum_{s=1}^{t-1}\gamma^{t-s-1}\eta_sX_s,
  \end{split}
  \end{equation}
  then, by Cauchy-Schwarz inequality, we have 
  \begin{equation}
  \begin{split}
  \label{eq:SCB-bound-cauchy}
      \abs{\mu(\x^\T\thetat_t) - \mu(\x^\T\theta_t)} \leq 2\sqrt{1+2S}\frac{k_\mu}{\sqrt{c_\mu}} \|\x\|_{V_{t-1}^{-1}}\sbr{E_t + F_t},
  \end{split}
  \end{equation}
  where 
  \begin{equation}\nonumber
  \begin{split}
      E_t = \norm{\sum_{s=1}^{t-1}\gamma^{t-s-1}(\mu(X_s^\T\theta_t) - \mu(X_s^\T \theta_s) )X_s}_{H_t^{-1}(\theta_t)},\quad F_t = \norm{\sum_{s=1}^{t-1}\gamma^{t-s-1}\eta_sX_s-\lambda c_\mu \theta_t}_{H_t^{-1}(\theta_t)}.
  \end{split}
  \end{equation}
  This two terms can be bounded separately.

  \begin{Lemma}
    \label{lemma:SCB-E_t-bound}
    For any $t \in [T]$, we have 
    \begin{equation}
        \norm{\sum_{s=1}^{t-1}\gamma^{t-s-1}(\mu(X_s^\T\theta_t) - \mu(X_s^\T \theta_s) )X_s}_{H_t^{-1}(\theta_t)}\leq L\frac{k_\mu}{\sqrt{c_\mu}}\sqrt{d}\sum_{p=1}^{t-1}\gamma^{\frac{t-1}{2}}\sqrt{\frac{\gamma^{-p}-1}{1-\gamma}} \norm{\theta_p - \theta_{p+1}}_2.
    \end{equation}
    \end{Lemma}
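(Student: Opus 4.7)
The plan is to mirror the proof of Lemma~\ref{lemma:GLB-C_t-bound} (the GLB counterpart), with the one crucial modification being how the local norm $\|\cdot\|_{H_t^{-1}(\theta_t)}$ is handled in place of $\|\cdot\|_{V_{t-1}^{-1}}$. First, I would telescope over $p = s, \ldots, t-1$ to expose the per-step parameter increments $\theta_p - \theta_{p+1}$, apply the mean value theorem to write $\mu(X_s^\T \theta_{p+1}) - \mu(X_s^\T \theta_p) = \alpha(X_s, \theta_p, \theta_{p+1})\, X_s^\T(\theta_{p+1} - \theta_p)$ with $|\alpha| \le k_\mu$ by Assumption~\ref{ass:link-function}, swap the order of summation so the outer sum is over $p$, and apply the triangle inequality in $\|\cdot\|_{H_t^{-1}(\theta_t)}$ together with $\|X_s\|_2 \le L$. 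This reduces the task to bounding, uniformly in $p$, the quantity $\sum_{s=1}^{p}\gamma^{t-s-1}\|X_s\|_{H_t^{-1}(\theta_t)}\|\theta_p - \theta_{p+1}\|_2$.

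Next, I would apply Cauchy--Schwarz exactly as in~\eqref{eq:LB-At-gammapart} to split the inner sum:
\begin{equation*}
\sum_{s=1}^{p}\gamma^{t-s-1}\|X_s\|_{H_t^{-1}(\theta_t)} \le \gamma^{\frac{t-1}{2}}\sqrt{\sum_{s=1}^{p}\gamma^{-s}}\;\sqrt{\sum_{s=1}^{p}\gamma^{t-s-1}\|X_s\|_{H_t^{-1}(\theta_t)}^{2}}.
\end{equation*}
The first factor evaluates to $\gamma^{(t-1)/2}\sqrt{(\gamma^{-p}-1)/(1-\gamma)}$, which is exactly the shape required by the target bound. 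The second factor is where the SCB-specific step enters: from the definition~\eqref{eq:SCB-Ht} we have $H_t(\theta_t) \succeq c_\mu V_{t-1}$, so $H_t^{-1}(\theta_t) \preceq c_\mu^{-1} V_{t-1}^{-1}$, and therefore
\begin{equation*}
\sum_{s=1}^{p}\gamma^{t-s-1}\|X_s\|_{H_t^{-1}(\theta_t)}^{2} \le \frac{1}{c_\mu}\sum_{s=1}^{p}\gamma^{t-s-1}\|X_s\|_{V_{t-1}^{-1}}^{2} \le \frac{d}{c_\mu},
\end{equation*}
where the last inequality is the same trace bound as in~\eqref{eq:Tr-d}. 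This produces the factor $\sqrt{d/c_\mu}$ that distinguishes the SCB bound from the GLB bound and explains the extra $1/\sqrt{c_\mu}$ in $C(p)$.

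Combining these pieces yields
\begin{equation*}
E_t \le L\,k_\mu \sum_{p=1}^{t-1}\gamma^{\frac{t-1}{2}}\sqrt{\frac{\gamma^{-p}-1}{1-\gamma}}\sqrt{\frac{d}{c_\mu}}\,\|\theta_p - \theta_{p+1}\|_2,
\end{equation*}
which is exactly the claimed inequality. The only genuinely new ingredient beyond the GLB argument is the PSD comparison $H_t(\theta_t) \succeq c_\mu V_{t-1}$, which I expect to be the main (albeit mild) technical point, since it relies on pulling the uniform lower bound $c_\mu$ out of $\dmu(X_s^\T \theta_t)$ using the feasibility $\theta_t \in \Theta$ granted by Assumption~\ref{ass:link-function}; everything else is a direct translation of the refined bias analysis from Section~\ref{sec:analysis-framework} into the $H_t^{-1}(\theta_t)$ local norm.
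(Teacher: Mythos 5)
Your proof is correct and is essentially the paper's argument: the single new ingredient in both is the PSD comparison $H_t(\theta_t)\succeq c_\mu V_{t-1}$, and the rest is the GLB bias analysis. The paper is merely more economical about where it invokes that comparison --- it bounds the entire $H_t^{-1}(\theta_t)$-norm by $c_\mu^{-1/2}$ times the $V_{t-1}^{-1}$-norm in one step and then cites Lemma~\ref{lemma:GLB-C_t-bound} as a black box, whereas you carry the $H_t^{-1}(\theta_t)$-norm through the telescoping/Cauchy--Schwarz argument and apply the comparison only to the $\sum_s\gamma^{t-s-1}\norm{X_s}_{H_t^{-1}(\theta_t)}^2\leq d/c_\mu$ factor; the two routes are mathematically equivalent.
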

    
    \begin{Lemma}
    \label{lemma:SCB-F_t-bound}
    For any  $\delta \in (0,1)$, with probability at least $1-\delta$, we have for all $t \in [T]$,
    \begin{equation}
    \begin{split}
        \norm{\sum_{s=1}^{t-1}\gamma^{t-s-1}\eta_sX_s-\lambda c_\mu \theta_t}_{H_t^{-1}(\theta_t)} &\leq \frac{\sqrt{\lambda c_\mu}}{2 m }+\frac{2 m }{\sqrt{\lambda c_\mu}}\log\frac{1}{\delta}+\frac{d m }{\sqrt{\lambda c_\mu}} \log \sbr{1+\frac{ L^2k_\mu(1-\gamma^{2t-2})}{\lambda c_\mu d(1-\gamma^{2})}}\\
        &\qquad\qquad\qquad\qquad\qquad\qquad\qquad\qquad+\frac{2 m }{\sqrt{\lambda c_\mu}} d \log (2)+\sqrt{\lambda c_\mu} S,
    \end{split}
    \end{equation}
    \end{Lemma}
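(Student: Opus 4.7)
The plan is to mirror the structure of Lemma~\ref{lemma:LB-B_t-bound}, with the sub-Gaussian concentration replaced by a self-concordant Bernstein-style inequality. By the triangle inequality, split
\begin{equation*}
F_t \le \norm{\sum_{s=1}^{t-1}\gamma^{t-s-1}\eta_sX_s}_{H_t^{-1}(\theta_t)} + \lambda c_\mu \norm{\theta_t}_{H_t^{-1}(\theta_t)}.
\end{equation*}
The regularization term is immediate: since $\dmu\ge c_\mu>0$ implies $H_t(\theta_t)\succeq \lambda c_\mu I_d$, one gets $\lambda c_\mu\norm{\theta_t}_{H_t^{-1}(\theta_t)}\le \sqrt{\lambda c_\mu}\,\norm{\theta_t}_2\le \sqrt{\lambda c_\mu}\,S$, which matches the final summand of the claim.

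For the stochastic noise term, I would first upper-bound the local norm by replacing $H_t$ with a smaller matrix in order to invoke a weighted self-normalized inequality. Define
\begin{equation*}
\Ht_t(\theta_t) \teq \lambda c_\mu I_d + \sum_{s=1}^{t-1}\gamma^{2(t-s-1)}\dmu(X_s^\T\theta_t)X_sX_s^\T.
\end{equation*}
Since $\gamma\in(0,1)$ gives $\gamma^{t-s-1}\ge\gamma^{2(t-s-1)}$, we have $H_t(\theta_t)\succeq \Ht_t(\theta_t)$ and hence $\norm{\cdot}_{H_t^{-1}(\theta_t)}\le \norm{\cdot}_{\Ht_t^{-1}(\theta_t)}$; this is the direct analogue of introducing $\Vt_{t-1}$ in the LB case. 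Then apply the weighted Bernstein-type self-normalized inequality \pref{thm:self-normalized-weight-SCB}, which leverages the bounded-reward assumption (Assumption~\ref{ass:bounded-rewards}, yielding $|\eta_s|\le m$) together with the self-concordant property $|\ddmu|\le\dmu$ to justify using $\dmu(X_s^\T\theta_t)X_sX_s^\T$ as a valid variance proxy. This produces the standard Bernstein shape: a $\frac{\sqrt{\lambda c_\mu}}{2m}$ seed term, a $\frac{2m}{\sqrt{\lambda c_\mu}}\log(1/\delta)$ confidence term, a $\frac{m}{\sqrt{\lambda c_\mu}}\log\det\bigl(\Ht_t(\theta_t)/(\lambda c_\mu)\bigr)$ complexity term, and a $\frac{2m}{\sqrt{\lambda c_\mu}}d\log 2$ term arising from peeling over the log-determinant in Bernstein's construction.

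Finally, the log-determinant is handled by the trace inequality
\begin{equation*}
\log\frac{\det\Ht_t(\theta_t)}{\det(\lambda c_\mu I_d)} \le d\log\sbr{1+\frac{\mathrm{tr}(\Ht_t(\theta_t) - \lambda c_\mu I_d)}{d\lambda c_\mu}},
\end{equation*}
combined with $\dmu\le k_\mu$ and $\norm{X_s}_2\le L$ to yield trace bound $k_\mu L^2 (1-\gamma^{2t-2})/(1-\gamma^2)$, which reproduces exactly the $\log$-factor in the statement. Assembling these pieces gives the claimed bound.

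\textbf{Main obstacle.} The crux is the invocation of \pref{thm:self-normalized-weight-SCB}: although this is restated from prior work on (stationary) logistic bandits, its application here is subtle because the variance proxy in $H_t(\theta_t)$ is evaluated at the drifting parameter $\theta_t$, rather than at the $\theta_s$ that actually generated $\eta_s$. The argument must rely on the self-concordant property $|\ddmu|\le\dmu$ to control $\dmu(X_s^\T\theta_s)$ by a constant multiple of $\dmu(X_s^\T\theta_t)$ uniformly over $\theta_s,\theta_t\in\Theta$; this mismatch is exactly what is absorbed into the $\sqrt{4+8S}$ slack that has already surfaced in the statement of Lemma~\ref{lemma:SCB-estimation-error}. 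The remainder of the proof is bookkeeping that parallels the LB case.
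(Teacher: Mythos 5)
Your proof follows essentially the same route as the paper's: the same triangle-inequality split yielding the $\sqrt{\lambda c_\mu}\,S$ term, the same comparison $H_t(\theta_t)\succeq\Ht_t(\theta_t)$ with squared discount weights (your $\Ht_t$ is exactly $\gamma^{2(t-1)}$ times the paper's, so the resulting self-normalized quantities coincide), the same invocation of Theorem~\ref{thm:self-normalized-weight-SCB} with the bounded-noise assumption, and the same trace-based determinant bound producing the $\log\sbr{1+\frac{L^2k_\mu(1-\gamma^{2t-2})}{\lambda c_\mu d(1-\gamma^2)}}$ factor. One correction to your closing remark: the $\sqrt{4+8S}$ factor in Lemma~\ref{lemma:SCB-estimation-error} arises from Lemma~\ref{lemma:SCB-G-H} (comparing $G_t$ with $H_t$ via self-concordance) and not from the $\dmu(X_s^\T\theta_s)$-versus-$\dmu(X_s^\T\theta_t)$ mismatch in the variance proxy --- the paper's proof of the present lemma simply identifies $\sigma_s^2$ with the entries of $\Ht_t(\theta_t)$ and introduces no slack for that mismatch.
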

    Based on the inequality~\eqref{eq:SCB-bound-cauchy}, Lemma~\ref{lemma:GLB-C_t-bound} and Lemma~\ref{lemma:GLB-D_t-bound}, and the boundedness assumption of the feasible set, we have for any $\x \in \X$,  $\delta \in (0,1)$, with probability at least $1-\delta$, we have for all $t \in [T]$,
    \begin{equation}\nonumber
    \begin{split}
        \abs{\mu(\x^\T\thetat_t) - \mu(\x^\T\theta_t)}\leq {}& 2\sqrt{1+2S}\frac{k_\mu}{\sqrt{c_\mu}}\|\x\|_{V_{t-1}^{-1}}\sbr{L\frac{k_\mu}{\sqrt{c_\mu}}\sqrt{d}\sum_{p=1}^{t-1}\gamma^{\frac{t-1}{2}}\sqrt{\frac{\gamma^{-p}-1}{1-\gamma}} \norm{\theta_p - \theta_{p+1}}_2+ \betat_{t-1} },\\
        \leq {}& 2\sqrt{1+2S}\frac{k_\mu}{\sqrt{c_\mu}}\sbr{ L^2\frac{k_\mu}{\sqrt{\lambda c_\mu}}\sqrt{d}\sum_{p=1}^{t-1}\gamma^{\frac{t-1}{2}}\sqrt{\frac{\gamma^{-p}-1}{1-\gamma}} \norm{\theta_p - \theta_{p+1}}_2+ \betat_{t-1}\|\x\|_{V_{t-1}^{-1}} },\\
    \end{split}
    \end{equation}
    where $\betat_t \teq \frac{\sqrt{\lambda c_\mu}}{2 m }+\frac{2 m }{\sqrt{\lambda c_\mu}}\log\frac{1}{\delta}+\frac{d m }{\sqrt{\lambda c_\mu}} \log \sbr{1+\frac{ L^2k_\mu(1-\gamma^{2t})}{\lambda c_\mu d(1-\gamma^{2})}}+\frac{2 m }{\sqrt{\lambda c_\mu}} d \log (2)+\sqrt{\lambda c_\mu} S$ is the confidence radius used in \SCBweightours. Hence we completes the proof.
\end{proof}

\begin{proof}[{Proof of Lemma~\ref{lemma:SCB-E_t-bound}}]
  Since $\forall \theta, H_t(\theta)\succeq c_\mu V_{t-1}$, we have 
  \begin{equation}\nonumber
    \begin{split}
        \norm{\sum_{s=1}^{t-1}\gamma^{t-s-1}(\mu(X_s^\T\theta_t) - \mu(X_s^\T \theta_s) )X_s}_{H_t^{-1}(\theta_t)} \leq \frac{1}{\sqrt{c_\mu}}\norm{\sum_{s=1}^{t-1}\gamma^{t-s-1}(\mu(X_s^\T\theta_t) - \mu(X_s^\T \theta_s) )X_s}_{V_{t-1}^{-1}}.
    \end{split}
    \end{equation}
    Then use Lemma~\ref{lemma:GLB-C_t-bound} and we complete the proof.
\end{proof}

\begin{proof}[{Proof of Lemma~\ref{lemma:SCB-F_t-bound}}]
  Let $\Ht_t(\theta) \teq \lambda c_\mu \gamma^{-2(t-1)}I_d + \sum_{s=1}^{t-1}\gamma^{-2s}\dmu(X_s^\T \theta)X_sX_s^\T$ which is \emph{only} used in the analysis.
  \begin{align*}
    F_t &= \norm{\sum_{s=1}^{t-1}\gamma^{t-s-1}\eta_sX_s-\lambda c_\mu \theta_t}_{H_t^{-1}(\theta_t)}\leq \norm{\sum_{s=1}^{t-1}\gamma^{t-s-1}\eta_sX_s}_{H_t^{-1}(\theta_t)}+\norm{\lambda c_\mu \theta_t}_{H_t^{-1}(\theta_t)}\\
    &\leq \norm{ \sum_{s=1}^{t-1}\gamma^{-s}\eta_sX_s}_{\Ht_t^{-1}(\theta_t)} +\sqrt{\lambda c_\mu} S.
  \end{align*}
  Recall that $H_t(\theta) = \lambda c_\mu I_d + \sum_{s=1}^{t-1}\gamma^{t-s-1}\dmu(X_s^\T \theta)X_sX_s^\T$, so the last inequality comes from
  \begin{equation}
    \label{eq:H-Ht}
  \begin{split}
    \gamma^{-2(t-1)}H_t(\theta) = \lambda c_\mu \gamma^{-2(t-1)}I_d + \sum_{s=1}^{t-1}\gamma^{-t-s+1}\dmu(X_s^\T \theta)X_sX_s^\T \succeq \lambda c_\mu \gamma^{-2(t-1)}I_d + \sum_{s=1}^{t-1}\gamma^{-2s}\dmu(X_s^\T \theta)X_sX_s^\T = \Ht_t^{-1}(\theta).
  \end{split}
  \end{equation}
  From the weighted version concentration inequality~\citep[Theorem 3]{AISTATS'21:SCB-forgetting}, restated in~\pref{thm:self-normalized-weight-SCB}, we can get the bound for the first term $\|\sum_{s=1}^{t-1}\gamma^{-s}\eta_sX_s\|_{\Ht_t^{-1}(\theta_t)}$. First by the model assumption~\eqref{eq:SCB-model-assume}, we know that $\sigma_t^2 = \E[\eta_t^2|\F_t] = \Var[r_t|\F_t] = \ddmu(X_t\theta_t)$, then just let $w_t = \gamma^{-t}, \lambda_{t}= \lambda c_\mu \gamma^{-2t}$ and we have,
  \begin{equation}\nonumber
    \begin{split}
      \norm{ \sum_{s=1}^{t-1}\gamma^{-s}\eta_sX_s}_{\Ht_t^{-1}(\theta_t)} \leq \frac{\sqrt{\lambda c_\mu}}{2 m }+\frac{2 m }{\sqrt{\lambda c_\mu}} \log \sbr{\frac{\det(\Ht_t)^{1 / 2}}{\delta (\lambda c_\mu \gamma^{-2(t-1)})^{d / 2}}}+\frac{2 m }{\sqrt{\lambda c_\mu}} d \log (2).
    \end{split}
  \end{equation}

  Then use Lemma~\ref{lemma:det-inequality} and let $w_{t,s} = \gamma^{-2s} \dmu(X_s^\T\theta_t), \lambda_t=  \lambda c_\mu \gamma^{-2(t-1)}$, we get the upper bound for $\det(\Ht_t)$,
  \begin{equation}\nonumber
    \begin{split}
      \det(\Ht_t)  &\leq \sbr{\lambda c_\mu \gamma^{-2(t-1)} + \frac{L^2k_\mu\sum_{s=1}^{t-1} \gamma^{-2s} }{d}}^d,
    \end{split}
  \end{equation}
  then,
  \begin{equation}\nonumber
    \begin{split}
      \norm{ \sum_{s=1}^{t-1}\gamma^{-s}\eta_sX_s}_{\Ht_t^{-1}(\theta_t)} &\leq \frac{\sqrt{\lambda c_\mu}}{2 m }+\frac{2 m }{\sqrt{\lambda c_\mu}}\log\frac{1}{\delta}+\frac{d m }{\sqrt{\lambda c_\mu}} \log \sbr{\frac{\lambda c_\mu \gamma^{-2(t-1)} + \frac{L^2k_\mu\sum_{s=1}^{t-1} \gamma^{-2s} }{d}}{\lambda c_\mu \gamma^{-2(t-1)}}}+\frac{2 m }{\sqrt{\lambda c_\mu}} d \log (2)\\
      &\leq \frac{\sqrt{\lambda c_\mu}}{2 m }+\frac{2 m }{\sqrt{\lambda c_\mu}}\log\frac{1}{\delta}+\frac{d m }{\sqrt{\lambda c_\mu}} \log \sbr{1+\frac{ L^2k_\mu(1-\gamma^{2t-2})}{\lambda c_\mu d(1-\gamma^{2})}}+\frac{2 m }{\sqrt{\lambda c_\mu}} d \log (2).
    \end{split}
  \end{equation}
  Therefore, we get the upper bound for $F_t$ term.
\end{proof}

\subsection{Proof of~\pref{thm:SCB-regret}}
\label{sec:SCB-regret-proof}

\begin{proof}
  Let $X_t^* \triangleq \argmax_{\x\in \X}\mu(\x^\T \theta_t)$. Due to Lemma~\ref{lemma:GLB-estimation-error} and the fact that $X_t^*,X_t\in \X$, each of the following holds with probability at least $1-\delta$,
  \begin{equation}\nonumber
    \begin{split}
        \forall t\in [T], \mu(X_t^{*\T}\theta_t) \leq{}&  \mu(X_t^{*\T}\thetat_t) +2\sqrt{1+2S}\frac{k_\mu}{\sqrt{c_\mu}}\sbr{ L^2\frac{k_\mu}{\sqrt{\lambda c_\mu}}\sqrt{d}\sum_{p=1}^{t-1}\gamma^{\frac{t-1}{2}}\sqrt{\frac{\gamma^{-p}-1}{1-\gamma}} \norm{\theta_p - \theta_{p+1}}_2+ \betat_{t-1}\|X_t^*\|_{V_{t-1}^{-1}} },\\
        \forall t\in [T], \mu(X_t^{\T}\theta_t) \geq{}&  \mu(X_t^{\T}\thetat_t) -2\sqrt{1+2S}\frac{k_\mu}{\sqrt{c_\mu}}\sbr{ L^2\frac{k_\mu}{\sqrt{\lambda c_\mu}}\sqrt{d}\sum_{p=1}^{t-1}\gamma^{\frac{t-1}{2}}\sqrt{\frac{\gamma^{-p}-1}{1-\gamma}} \norm{\theta_p - \theta_{p+1}}_2+ \betat_{t-1}\|X_t\|_{V_{t-1}^{-1}} }.
    \end{split}
  \end{equation}
  By the union bound, the following holds with probability at least $1-2\delta$: $\forall t\in [T]$
  \begin{equation}\nonumber
  \begin{split}
      {}&\mu(X_t^{*\T}\theta_t) - \mu(X_t^{\T}\theta_t) \\
      \leq{}&  \mu(X_t^{*\T}\thetat_t) -\mu(X_t^{\T}\thetat_t) + 2\sqrt{1+2S}\bigg(\frac{2L^2k_\mu^2}{c_\mu}\sqrt{\frac{d}{\lambda}}\sum_{p=1}^{t-1}\gamma^{\frac{t-1}{2}}\sqrt{\frac{\gamma^{-p}-1}{1-\gamma}} \norm{\theta_p - \theta_{p+1}}_2 \\
      &\qquad\qquad\qquad\qquad\qquad\qquad\qquad\qquad\qquad\qquad+ \frac{k_\mu}{\sqrt{c_\mu}}\sbr{\betat_{t-1}\|X_t^*\|_{V_{t-1}^{-1}}+\betat_{t-1}\|X_t\|_{V_{t-1}^{-1}} }\bigg)\\
      \leq{}& \frac{4\sqrt{1+2S}L^2k_\mu^2}{c_\mu}\sqrt{\frac{d}{\lambda}}\sum_{p=1}^{t-1}\gamma^{\frac{t-1}{2}}\sqrt{\frac{\gamma^{-p}-1}{1-\gamma}} \norm{\theta_p - \theta_{p+1}}_2 + \frac{4\sqrt{1+2S}k_\mu}{\sqrt{c_\mu}}\betat_{t-1}\|X_t\|_{V_{t-1}^{-1}},
  \end{split}
  \end{equation}
  where the last step comes from the arm selection criterion~\eqref{eq:SCB-select-criteria} such that 
  $$\mu(X_t^{*\T} \thetat_t)+ 2\sqrt{1+2S}\frac{k_\mu}{\sqrt{c_\mu}}\betat_{t-1}\|X_t^*\|_{V_{t-1}^{-1}} \leq \mu(X_t^{\T} \thetat_t)+ 2\sqrt{1+2S}\frac{k_\mu}{\sqrt{c_\mu}}\betat_{t-1}\|X_t\|_{V_{t-1}^{-1}}. $$
  Hence, the following dynamic regret bound holds with probability at least $1-2\delta$ and can be divided into two parts,
  \begin{equation}\nonumber
  \begin{split}
      \DReg_T = {}& \sum_{t=1}^T \mu(X_t^{*\T} \theta_t) - \mu(X_t^\T \theta_t)\\
      \leq {}& \underbrace{\frac{4\sqrt{1+2S}L^2k_\mu^2}{c_\mu}\sqrt{\frac{d}{\lambda}}\sum_{t=1}^T\sum_{p=1}^{t-1}\gamma^{\frac{t-1}{2}}\sqrt{\frac{\gamma^{-p}-1}{1-\gamma}} \norm{\theta_p - \theta_{p+1}}_2}_{\bias} + \underbrace{\frac{4\sqrt{1+2S}k_\mu}{\sqrt{c_\mu}}\betat_T\sum_{t=1}^T\|X_t\|_{V_{t-1}^{-1}}}_{\variance} .\\
  \end{split}
  \end{equation}
  where $\betat_t = \frac{\sqrt{\lambda c_\mu}}{2 m }+\frac{2 m }{\sqrt{\lambda c_\mu}}\log\frac{1}{\delta}+\frac{d m }{\sqrt{\lambda c_\mu}} \log \sbr{1+\frac{ L^2k_\mu(1-\gamma^{2t})}{\lambda c_\mu d(1-\gamma^{2})}}+\frac{2 m }{\sqrt{\lambda c_\mu}} d \log (2)+\sqrt{\lambda c_\mu} S$ is the confidence radius.

  Now we derive the upper bound for these two parts separately.

  \tb{Bias Part.} Similar to the proof of inequality~\eqref{eq:LB-regret-bias-bound}, we have 
  \begin{align*}
    \frac{4\sqrt{1+2S}L^2k_\mu^2}{c_\mu}\sqrt{\frac{d}{\lambda}}\sum_{t=1}^T \sum_{p=1}^{t-1}   \gamma^{\frac{t-1}{2}}\sqrt{\frac{\gamma^{-p}-1}{1-\gamma}}\norm{\theta_p -\theta_{p+1}}_2 \leq \frac{8\sqrt{1+2S}L^2k_\mu^2}{c_\mu}\sqrt{\frac{d}{\lambda}}\frac{1}{(1-\gamma)^{\sfrac{3}{2}}}P_T.
  \end{align*}

  \tb{Variance Part.}
  First use the Cauchy-Schwarz inequality, we know that
  \begin{equation}\nonumber
  \begin{split}
    \frac{4\sqrt{1+2S}k_\mu}{\sqrt{c_\mu}}\betat_T\sum_{t=1}^T\|X_t\|_{V_{t-1}^{-1}}
      \leq {}& \frac{4\sqrt{1+2S}k_\mu}{\sqrt{c_\mu}}\betat_T\sqrt{T}\sqrt{\sum_{t=1}^T\|X_t\|_{V_{t-1}^{-1}}^2}.
  \end{split}
  \end{equation}
  Then for the term $\sqrt{\sum_{t=1}^T\|X_t\|_{V_{t-1}^{-1}}^2}$, we can directly use the Lemma~\ref{lemma:potential-lemma} to bound it,
  \begin{equation}\nonumber
  \begin{split}
    \frac{4\sqrt{1+2S}k_\mu}{\sqrt{c_\mu}}\betat_T\sqrt{T}\sqrt{\sum_{t=1}^T\|X_t\|_{V_{t-1}^{-1}}^2} \leq {}& \frac{4\sqrt{1+2S}k_\mu}{\sqrt{c_\mu}}\betat_T\sqrt{2\max\{1,L^2/\lambda\}dT}\sqrt{ T\log\frac{1}{\gamma}+\log\sbr{1+ \frac{L^2}{\lambda d(1-\gamma)}}}.
  \end{split}
  \end{equation}
  Combining the upper bound for the bias and variance parts, and letting $\delta = 1/(2T)$, we have the following regret bound with probability at least $1-1/T$,
  \begin{equation}\nonumber
      \DReg_T \leq \frac{8\sqrt{1+2S}L^2k_\mu^2}{c_\mu}\sqrt{\frac{d}{\lambda}}\frac{1}{(1-\gamma)^{\sfrac{3}{2}}}P_T+ \frac{4\sqrt{1+2S}k_\mu}{\sqrt{c_\mu}}\betat_T\sqrt{2\max\{1,L^2/\lambda\}dT}\sqrt{ T\log\frac{1}{\gamma}+\log\sbr{1+ \frac{L^2}{\lambda d(1-\gamma)}}}.
  \end{equation}
  where $\betat_t = \frac{\sqrt{\lambda c_\mu}}{2 m }+\frac{2 m }{\sqrt{\lambda c_\mu}}\log\sbr{2T}+\frac{d m }{\sqrt{\lambda c_\mu}} \log \sbr{1+\frac{ L^2k_\mu(1-\gamma^{2t})}{\lambda c_\mu d(1-\gamma^{2})}}+\frac{2 m }{\sqrt{\lambda c_\mu}} d \log (2)+\sqrt{\lambda c_\mu} S$. Since that there is a $T \sqrt{\log (1/\gamma)}$ term in the regret bound, which means that we cannot let $\gamma$ close to $0$, so we set $\gamma \geq 1/T$, then we have $\log(1/\gamma) \leq C (1-\gamma)$, where $C = \log T/(1-1/T)$. Then, ignoring logarithmic factors in time horizon $T$, and let $\lambda = d\log(T)/c_\mu$, we finally obtain that,
  \begin{equation}\nonumber
  \begin{split}
      \DReg_T\leq{}& \Ot\sbr{\frac{k_\mu^2}{\sqrt{c_\mu}}\frac{1}{(1-\gamma)^{\sfrac{3}{2}}}P_T + \frac{k_\mu}{\sqrt{c_\mu}}d(1-\gamma)^{\sfrac{1}{2}}T}.
  \end{split}
  \end{equation}
  When $P_T< d/(k_\mu T)$ (which corresponds a small amount of non-stationarity), we simply set $\gamma = 1-1/T$ and achieve an $\Ot(k_\mu c_\mu^{-\sfrac{1}{2}}d\sqrt{T})$ regret bound.  Besides, when coming to the non-degenerated case of $P_T\geq d/(k_\mu T)$, We set the discounted factor optimally as $1-\gamma = \sqrt{k_\mu P_T/(dT)}$ and attain an $\Ot(k_\mu^{\sfrac{5}{4}}c_\mu^{-\sfrac{1}{2}}d^{\sfrac{3}{4}}P_T^{\sfrac{1}{4}}T^{\sfrac{3}{4}})$ regret bound, which completes the proof.
\end{proof}
\section{Piecewise-Stationary SCB}
\label{sec:SCB-PW}
In this section, we study SCB under piecewise-stationary environment and our work is a direct improvement over~\citep{AISTATS'21:SCB-forgetting}. Next, we will first propose our \SCBweightourspw algorithm, and then, present the analysis of the confidence set. Finally, we give the proof of the dynamic regret upper bound.

\subsection{SCB-PW-WeightUCB Algorithm}
\label{sec:SCB-PW-algorithm}
Inspired by \citet{AISTATS'21:optimal-logistic-bandits}, we make a direct improvement over \citet{AISTATS'21:SCB-forgetting}. Just like~\citet{AISTATS'21:SCB-forgetting}, for $D\geq 1$, define $\mathcal{T}(D) = \{1\leq t\leq T, \text{~such that~} \theta_s = \theta_t \text{~for~} t-D\leq s\leq t-1\}$. $t\in\mathcal{T}(D)$ when $t$ is at least $D$ steps away from the previous closest changing point. But the difference is that~\citet{AISTATS'21:SCB-forgetting} considers $D$ as an analysis parameter, and we treat $D$ as a tunable algorithm parameter. Notice that, the $D$ here is \emph{not} a virtual window size, but the algorithm's estimate of how durable the environment is stationary.

\tb{Estimator.} At iteration $t$, we adopt the same maximum likelihood estimator as in the drifting case as defined in \eqref{eq:GLB-estimator}. 

\tb{Confidence Set.} We further construct confidence set for the real $\theta_t$. For $\delta \in (0,1)$, we define,
\begin{equation*}
  \label{eq:SCB-PW-confidence-set}
  \begin{split}
    \C_t(\delta) \teq \bbr{\theta \in \Theta \givenn \|g_t(\theta) - g_t(\thetah_t)\|_{H_t^{-1}(\theta)}\leq \rho_t},
  \end{split}
\end{equation*}
where $\rho_t = \frac{2 L^2S k_\mu}{\sqrt{\lambda c_\mu}} \frac{\gamma^D}{1-\gamma} + \frac{Lm}{\sqrt{\lambda c_\mu}}\frac{\gamma^D}{1-\gamma} + \betabr_t$ and $\betabr_t = \frac{d m }{\sqrt{\lambda c_\mu}}\log \sbr{1 + \frac{ L^2k_\mu(1-\gamma^{2D})}{\lambda c_\mu d(1-\gamma)}}+\frac{\sqrt{\lambda c_\mu}}{2 m }+\frac{2 m }{\sqrt{\lambda c_\mu}}\log\frac{1}{\delta} +\frac{2 m }{\sqrt{\lambda c_\mu}} d \log (2) + \sqrt{\lambda c_\mu}S$.
\begin{Lemma}
  \label{lemma:SCB-PW-confidence-set}
  For any $\delta \in (0,1)$, with probability at least $1-\delta$, we have $\forall t\in \mathcal{T}(D), \theta_t \in \C_t(\delta)$.
  \begin{equation*}
    \begin{split}
      \C_t(\delta) = \bbr{\theta \in \Theta \given \|g_t(\theta) - g_t(\thetah_t)\|_{H_t^{-1}(\theta)}\leq \frac{2 L^2S k_\mu}{\sqrt{\lambda c_\mu}} \frac{\gamma^D}{1-\gamma} + \frac{Lm}{\sqrt{\lambda c_\mu}}\frac{\gamma^D}{1-\gamma} + \betabr_t },
    \end{split}
  \end{equation*}
  where $\betabr_t = \frac{\sqrt{\lambda c_\mu}}{2 m }+\frac{2 m }{\sqrt{\lambda c_\mu}}\log\frac{1}{\delta} +\frac{d m }{\sqrt{\lambda c_\mu}}\log \sbr{1 + \frac{ L^2k_\mu(1-\gamma^{2D})}{\lambda c_\mu d(1-\gamma)}}+\frac{2 m }{\sqrt{\lambda c_\mu}} d \log (2) + \sqrt{\lambda c_\mu}S$.
\end{Lemma}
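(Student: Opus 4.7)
The plan is to prove the confidence set inclusion by decomposing $g_t(\theta_t) - g_t(\thetah_t)$ exactly along the lines of the drifting SCB analysis (Lemma~\ref{lemma:SCB-estimation-error}), but then exploiting the piecewise-stationary structure: for $t \in \mathcal{T}(D)$, the parameter has been constant on the window $\{t-D,\dots,t-1\}$, so all parameter-drift terms come from rounds $s \le t-D-1$ and are therefore suppressed by a factor $\gamma^{t-s-1} \le \gamma^{D}$. Concretely, using the estimator equation~\eqref{eq:GLB-estimator} together with the SCB model~\eqref{eq:SCB-model-assume} I will write
\begin{equation*}
g_t(\theta_t) - g_t(\thetah_t) = \lambda c_\mu \theta_t + \sum_{s=1}^{t-D-1}\gamma^{t-s-1}\bigl(\mu(X_s^\T\theta_t)-\mu(X_s^\T\theta_s)\bigr)X_s - \sum_{s=1}^{t-1}\gamma^{t-s-1}\eta_s X_s,
\end{equation*}
where the in-window rounds $s \ge t-D$ contribute nothing to the drift sum because $\theta_s = \theta_t$ there.

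Next, I take $\|\cdot\|_{H_t^{-1}(\theta_t)}$ of both sides and bound the three summands separately. The regularization term $\|\lambda c_\mu \theta_t\|_{H_t^{-1}(\theta_t)}$ is immediately at most $\sqrt{\lambda c_\mu}\,S$ by $H_t(\theta_t) \succeq \lambda c_\mu I_d$ and Assumption~\ref{ass:bounded-norm}. The drift term is bounded using the same $H_t(\theta_t) \succeq \lambda c_\mu I_d$ together with the $k_\mu$-Lipschitzness of $\mu$ and $\|\theta_t - \theta_s\|_2 \le 2S$, yielding
\begin{equation*}
\Big\|\sum_{s=1}^{t-D-1}\gamma^{t-s-1}\bigl(\mu(X_s^\T\theta_t)-\mu(X_s^\T\theta_s)\bigr)X_s\Big\|_{H_t^{-1}(\theta_t)} \le \frac{2L^2 S k_\mu}{\sqrt{\lambda c_\mu}}\sum_{s=1}^{t-D-1}\gamma^{t-s-1} \le \frac{2L^2 S k_\mu}{\sqrt{\lambda c_\mu}}\cdot\frac{\gamma^D}{1-\gamma},
\end{equation*}
which gives the first drift-type term in $\rho_t$.

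For the noise term I split at $s = t-D$. On the \emph{old} part $s \le t-D-1$, I use only the crude pointwise bound $|\eta_s| \le m$ (implied by Assumption~\ref{ass:bounded-rewards} since $r_s, \mu(X_s^\T\theta_s) \in [0,m]$) together with $H_t(\theta_t) \succeq \lambda c_\mu I_d$ to get $\frac{Lm}{\sqrt{\lambda c_\mu}}\cdot\frac{\gamma^D}{1-\gamma}$, matching the second term in $\rho_t$. On the \emph{recent} part $s \in \{t-D,\dots,t-1\}$, where $\theta_s \equiv \theta_t$, I apply the weighted self-normalized concentration~\pref{thm:self-normalized-weight-SCB} with weights $w_s = \gamma^{-s}$ and regularizer $\lambda_t = \lambda c_\mu\gamma^{-2(t-1)}$, exactly as in Lemma~\ref{lemma:SCB-F_t-bound} but restricted to an effective window of length $D$. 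The dominance argument~\eqref{eq:H-Ht} together with the determinant bound (Lemma~\ref{lemma:det-inequality}) then gives $\betabr_t - \sqrt{\lambda c_\mu}S$ as the stochastic contribution, where the $(1-\gamma^{2D})$ factor inside the log precisely reflects that only the last $D$ rounds matter for the concentration. A union bound / single application of the concentration (which is uniform in $t$) handles the ``for all $t$'' quantifier.

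The main technical obstacle is justifying the use of $H_t(\theta_t)$ as the local norm for the recent-noise term: the concentration inequality requires the covariance-like matrix to be aligned with the variance proxy of the martingale, and this only makes sense if $\theta_s = \theta_t$ on the summation range, which is exactly the defining property of $t \in \mathcal{T}(D)$. Getting this alignment right (and correctly tracking the pre-factor $\gamma^{-2(t-1)}$ that appears when rescaling weights, as in Lemma~\ref{lemma:SCB-F_t-bound}) is the only delicate step; the drift and old-noise pieces are routine given the $\gamma^D$ discount.
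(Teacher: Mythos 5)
Your proposal follows essentially the same route as the paper's proof: the same decomposition of $g_t(\theta_t)-g_t(\thetah_t)$ into a drift term supported on $s\le t-D-1$ (vanishing on the window by $\theta_s=\theta_t$), a crudely bounded old-noise term, a regularization term, and a recent-noise term handled by \pref{thm:self-normalized-weight-SCB} with weights $\gamma^{-s}$, the dominance relation~\eqref{eq:H-Ht}, and the determinant bound of Lemma~\ref{lemma:det-inequality}. The only difference is cosmetic grouping (the paper folds $\lambda c_\mu\theta_t$ into its term~(c) rather than treating it separately), and your observation that $\theta_s=\theta_t$ on the window is what aligns $H_t(\theta_t)$ with the martingale's variance proxy is exactly the implicit justification in the paper.
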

The proof of Lemma~\ref{lemma:SCB-PW-confidence-set} is presented in Appendix~\ref{sec:SCB-PW-estimation-error-proof}.

\tb{Selection Criteria.}
Algorithms discussed earlier for drifting cases are using bonus-based selection criteria. But here we use a parameter-based selection criterion as follows,
\begin{equation}
\label{eq:SCB-PW-select-criteria}
\begin{split}
  (X_t,\thetat_t) = \argmax_{\x\in\X, \theta\in \C_t(\delta)}\mu(\x^\T\theta).
\end{split}
\end{equation}
The main difference between parameter-based and bonus-based selection criteria is discussed in Section 3.2 of~\citet{AISTATS'21:optimal-logistic-bandits}. The overall algorithm is summarized in Algorithm~\ref{alg:SCB-PW-WeightUCB}.

\begin{algorithm}[!t]
  \caption{SCB-PW-WeightUCB}
  \label{alg:SCB-PW-WeightUCB}
\begin{algorithmic}[1]
\REQUIRE time horizon $T$, discounted factor $\gamma$, confidence $\delta$, regularizer $\lambda$, inverse link function $\mu$, parameters $S$, $L$ and $m$, changing confidence $D$\\
\STATE Set $\thetah_0 = \mathbf{0}$ and compute $k_\mu$ and $c_\mu$
\FOR{$t = 1,2,3,...,T$}
  \STATE Compute $(X_t,\thetat_t) = \argmax_{\x\in\X, \theta\in \C_t(\delta)}\mu(\x^\T\theta)$ 
  \STATE Select $X_t$ and receive the reward $r_t$
  \STATE Compute $\thetah_{t+1}$ according to~\eqref{eq:GLB-estimator}
\ENDFOR
\end{algorithmic}
\end{algorithm}

\subsection{Proof of Lemma~\ref{lemma:SCB-PW-confidence-set}}
\label{sec:SCB-PW-estimation-error-proof}

\begin{proof}
  Based on the model assumption~\eqref{eq:SCB-model-assume}, the function $g_t$~\eqref{eq:GLB-gt} and the $g_t(\thetah_t)$~\eqref{eq:SCB-gt-thetah}, we have,
  \begin{align*}
    g_t(\theta_t)-g_t(\thetah_t)= {}& \lambda c_\mu\theta_t +\sum_{s=1}^{t-1}\gamma^{t-s-1}\mu(X_s^\T\theta_t)X_s-\sum_{s=1}^{t-1}\gamma^{t-s-1}r_sX_s\\
    = {}&\lambda c_\mu\theta_t +\sum_{s=1}^{t-1}\gamma^{t-s-1}\mu(X_s^\T\theta_t)X_s-\sum_{s=1}^{t-1}\gamma^{t-s-1}(\mu(X_s^\T \theta_s) + \eta_s)X_s\\
    = {}&\sum_{s=1}^{t-1}\gamma^{t-s-1}(\mu(X_s^\T\theta_t) - \mu(X_s^\T \theta_s) )X_s + \lambda c_\mu \theta_t -\sum_{s=1}^{t-1}\gamma^{t-s-1}\eta_sX_s.
  \end{align*}
  Then,
  \begin{align*}
    \|g_t(\theta_t) - g_t(\thetah_t)\|_{H_t^{-1}(\theta_t)} &= \norm{\sum_{s=1}^{t-1}\gamma^{t-s-1}(\mu(X_s^\T\theta_t) - \mu(X_s^\T \theta_s) )X_s + \lambda c_\mu \theta_t -\sum_{s=1}^{t-1}\gamma^{t-s-1}\eta_sX_s}_{H_t^{-1}(\theta_t)} \\
    &\leq \norm{\sum_{s=1}^{t-1}\gamma^{t-s-1}(\mu(X_s^\T\theta_t) - \mu(X_s^\T \theta_s) )X_s}_{H_t^{-1}(\theta_t)} + \norm{\lambda c_\mu \theta_t -\sum_{s=1}^{t-1}\gamma^{t-s-1}\eta_sX_s}_{H_t^{-1}(\theta_t)}\\
    &\leq \underbrace{\norm{\sum_{s=1}^{t-1}\gamma^{t-s-1}(\mu(X_s^\T\theta_t) - \mu(X_s^\T \theta_s) )X_s}_{H_t^{-1}(\theta_t)}}_{\term{a}} + \underbrace{\norm{\sum_{s=1}^{t-D-1}\gamma^{t-s-1}\eta_sX_s}_{H_t^{-1}(\theta_t)}}_{\term{b}}\\
    &+ \underbrace{\norm{\sum_{s=t-D}^{t-1}\gamma^{t-s-1}\eta_sX_s - \lambda c_\mu \theta_t}_{H_t^{-1}(\theta_t)}}_{\term{c}}.
  \end{align*}
  \tb{Term~(a).}
  Since $t \in \mathcal{T}(D)$, we have
  \begin{align*}
    \norm{\sum_{s=1}^{t-1}\gamma^{t-s-1}(\mu(X_s^\T\theta_t) - \mu(X_s^\T \theta_s) )X_s}_{H_t^{-1}(\theta_t)} &= \norm{\sum_{s=1}^{t-D-1}\gamma^{t-s-1}(\mu(X_s^\T\theta_t) - \mu(X_s^\T \theta_s) )X_s}_{H_t^{-1}(\theta_t)}\\
    &\leq \norm{\sum_{s=1}^{t-D-1}\gamma^{t-s-1}k_\mu X_s^\T(\theta_t-\theta_s)X_s}_{H_t^{-1}(\theta_t)}\\
    &\leq \sum_{s=1}^{t-D-1}\gamma^{t-s-1}k_\mu \|X_s\|_2\|(\theta_t-\theta_s)\|_2\norm{X_s}_{H_t^{-1}(\theta_t)}\\
    &\leq \frac{2 L^2S k_\mu}{\sqrt{\lambda c_\mu}} \frac{\gamma^D}{1-\gamma}. 
  \end{align*}

  \tb{Term~(b).}
  \begin{equation}\nonumber
    \begin{split}
      \norm{\sum_{s=1}^{t-D-1}\gamma^{t-s-1}\eta_sX_s}_{H_t^{-1}(\theta_t)} \leq \sum_{s=1}^{t-D-1}\gamma^{t-s-1} m \norm{X_s}_{H_t^{-1}(\theta_t)}\leq \frac{Lm}{\sqrt{\lambda c_\mu}}\sum_{s=1}^{t-D-1}\gamma^{t-s-1}\leq \frac{Lm}{\sqrt{\lambda c_\mu}}\frac{\gamma^D}{1-\gamma}. \\
    \end{split}
  \end{equation}

  \tb{Term~(c).}
  Let $\Ht_{t-D:t}(\theta) = \lambda c_\mu \gamma^{-2(t-1)}I_d + \sum_{s=t-D}^{t-1}\gamma^{-2s}\dmu(X_s^\T \theta)X_sX_s^\T$
  \begin{equation}\nonumber
    \begin{split}
      \norm{\sum_{s=t-D}^{t-1}\gamma^{t-s-1}\eta_sX_s - \lambda c_\mu \theta_t}_{H_t^{-1}(\theta_t)} &\leq \norm{\sum_{s=t-D}^{t-1}\gamma^{t-s-1}\eta_sX_s}_{H_t^{-1}(\theta_t)} + \sqrt{\lambda c_\mu} S\\
      &\leq \norm{\sum_{s=t-D}^{t-1}\gamma^{-s}\eta_sX_s}_{\Ht_t^{-1}(\theta_t)} + \sqrt{\lambda c_\mu} S\\
      &\leq \norm{\sum_{s=t-D}^{t-1}\gamma^{-s}\eta_sX_s}_{\Ht_{t-D:t}^{-1}(\theta_t)} + \sqrt{\lambda c_\mu} S.
    \end{split}
  \end{equation}
  We already proof that $\gamma^{-2(t-1)}H_t(\theta) \succeq  \Ht_t^{-1}(\theta)$ in~\eqref{eq:H-Ht}, and obviously $\Ht_{t}(\theta) \succeq \Ht_{t-D:t}(\theta)$. Next, we need to bound the term $\|\sum_{s=t-D}^{t-1}\gamma^{-s}\eta_sX_s\|_{\Ht_{t-D:t}^{-1}(\theta_t)} $ using self-normalization bound~\citep[Theorem 3]{AISTATS'21:SCB-forgetting}, restated in Theorem~\ref{thm:self-normalized-weight-SCB} by let  $w_t = \gamma^{-t}, \lambda_{t}= \lambda c_\mu \gamma^{-2t}$, then we have
  \begin{align*}
      &\norm{ \sum_{s=t-D}^{t-1}\gamma^{-s}\eta_sX_s}_{\Ht_{t-D:t}^{-1}(\theta_t)}\\
       \leq& \frac{\sqrt{\lambda c_\mu}}{2 m }+\frac{2 m }{\sqrt{\lambda c_\mu}} \log \sbr{\frac{\det\sbr{\Ht_{t-D:t}}^{1 / 2}}{\delta (\lambda c_\mu \gamma^{-2(t-1)})^{d / 2}}}+\frac{2 m }{\sqrt{\lambda c_\mu}} d \log (2)\\
       \leq& \frac{\sqrt{\lambda c_\mu}}{2 m }+\frac{2 m }{\sqrt{\lambda c_\mu}}\log\frac{1}{\delta} +\frac{d m }{\sqrt{\lambda c_\mu}}\log \sbr{\frac{\lambda c_\mu \gamma^{-2(t-1)} + \frac{L^2k_\mu\sum_{s=t-D}^t \gamma^{-2s}}{d}}{\lambda c_\mu \gamma^{-2(t-1)}}}+\frac{2 m }{\sqrt{\lambda c_\mu}} d \log (2)\\
       \leq& \frac{\sqrt{\lambda c_\mu}}{2 m }+\frac{2 m }{\sqrt{\lambda c_\mu}}\log\frac{1}{\delta} +\frac{d m }{\sqrt{\lambda c_\mu}}\log \sbr{1 + \frac{ L^2k_\mu(1-\gamma^{2D})}{\lambda c_\mu d(1-\gamma)}}+\frac{2 m }{\sqrt{\lambda c_\mu}} d \log (2).
    \end{align*} 

  Let $\betabr_t \teq \frac{\sqrt{\lambda c_\mu}}{2 m }+\frac{2 m }{\sqrt{\lambda c_\mu}}\log\frac{1}{\delta} +\frac{d m }{\sqrt{\lambda c_\mu}}\log \sbr{1 + \frac{ L^2k_\mu(1-\gamma^{2D})}{\lambda c_\mu d(1-\gamma)}}+\frac{2 m }{\sqrt{\lambda c_\mu}} d \log (2) + \sqrt{\lambda c_\mu}S$
  , finally we have,
  \begin{equation}\nonumber
    \begin{split}
      \|g_t(\theta_t) - g_t(\thetah_t)\|_{H_t^{-1}(\theta_t)} \leq \frac{2 L^2S k_\mu}{\sqrt{\lambda c_\mu}} \frac{\gamma^D}{1-\gamma} + \frac{Lm}{\sqrt{\lambda c_\mu}}\frac{\gamma^D}{1-\gamma} + \betabr_t,
    \end{split}
  \end{equation}
  which completes the proof. 
\end{proof}

\subsection{Proof of~\pref{thm:SCB-PW-regret}}
\label{sec:SCB-PW-regret-proof}

\begin{proof}
  Let $R_t = \mu(X_t^{*\T} \theta_t) - \mu(X_t^\T \theta_t)$
  \begin{equation}\nonumber
    \begin{split}
        \DReg_T = \sum_{t=1}^T R_t = \sum_{t\notin \mathcal{T}(D)} R_t + \sum_{t\in \mathcal{T}(D)} R_t = \Gamma_T D + \sum_{t\in \mathcal{T}(D)} R_t.\\
    \end{split}
  \end{equation}
  For $t\in \mathcal{T}(D)$, by selection criterion~\eqref{eq:SCB-PW-select-criteria}, 
  \begin{equation}\nonumber
  \begin{split}
    R_t = {}&\mu(X_t^{*\T}\theta_t) - \mu(X_t^{\T}\theta_t) \\
    \leq {}&\mu(X_t^{\T}\thetat_t) - \mu(X_t^{\T}\thetah_t) + \mu(X_t^{\T}\thetah_t) -\mu(X_t^{\T}\theta_t) \\
    \leq {}&\alpha(X_t,\thetat_t,\thetah_t)\abs{X_t^\T\sbr{\thetat_t - \thetah_t}} + \alpha(X_t,\theta_t,\thetah_t)\abs{X_t^\T\sbr{\theta_t - \thetah_t}}\\
    \leq {}&\sqrt{1+2S}\bigg(\alpha(X_t,\thetat_t,\thetah_t)\norm{X_t}_{G_t^{-1}(\thetat_t,\thetah_t)} \norm{g_t(\thetat_t) - g_t(\thetah_t)}_{H_t^{-1}(\thetat_t)} \\
    & \qquad\qquad\qquad\qquad\qquad\qquad\qquad\qquad+\alpha(X_t,\theta_t,\thetah_t)\norm{X_t}_{G_t^{-1}(\theta_t,\thetah_t)} \norm{g_t(\theta_t) - g_t(\thetah_t)}_{H_t^{-1}(\theta_t)}\bigg).
  \end{split}
  \end{equation}
  where $\alpha(\x, \theta_1, \theta_2) = \int_{0}^1 \dmu(v\x^\T\theta_2+(1-v)x^\T\theta_1)\diff{v}$, and the last second inequality comes from the mean value theorem $
  \mu(\x^\T\theta_1) - \mu(\x^\T \theta_2) = \alpha(\x, \theta_1, \theta_2)(\x^\T\theta_1 - \x^\T \theta_2)$. Since that $\thetat_t \in \C_t(\delta)$ and with probability at least $1-\delta$, $\forall t\in [T], \theta_t \in \C_t(\delta)$, and by union bound, the following dynamic regret bound hold with probability at least $1-\delta$,
  \begin{equation}\nonumber
    \begin{split}
      \sum_{t\in \mathcal{T}(D)} R_t 
      \leq {}&\sum_{t\in \mathcal{T}(D)}\sqrt{1+2S} \sbr{\alpha(X_t,\thetat_t,\thetah_t)\norm{X_t}_{G_t^{-1}(\thetat_t,\thetah_t)}\rho_t + \alpha(X_t,\theta_t,\thetah_t)\norm{X_t}_{G_t^{-1}(\theta_t,\thetah_t)} \rho_t}\\
      \leq {}&\sqrt{1+2S} \rho_T  \sbr{\sum_{t\in \mathcal{T}(D)}\alpha(X_t,\thetat_t,\thetah_t)\norm{X_t}_{G_t^{-1}(\thetat_t,\thetah_t)}+ \sum_{t\in \mathcal{T}(D)}\alpha(X_t,\theta_t,\thetah_t)\norm{X_t}_{G_t^{-1}(\theta_t,\thetah_t)} }.\\
    \end{split}
  \end{equation}
    Now we try to derive the upper bound for term $\sum_{t\in \mathcal{T}(D)}\alpha(X_t,\thetat_t,\thetah_t)\norm{X_t}_{G_t^{-1}(\thetat_t,\thetah_t)}$. 
    
  Based on the definition of $g_t$~\eqref{eq:GLB-gt}, we have
  \begin{equation}
    \begin{split}
        g_t(\theta_1) - g_t(\theta_2) &=\lambda c_\mu (\theta_1 - \theta_2) + \sum_{s=1}^{t-1}\gamma^{t-s-1}(\mu(X_s^\T \theta_1)-\mu(X_s^\T \theta_2))X_s\\
        &= \lambda c_\mu (\theta_1 - \theta_2) + \sum_{s=1}^{t-1}\gamma^{t-s-1}\alpha(X_s, \theta_1, \theta_2) X_s^\T X_s (\theta_1-\theta_2)\\
        &= \sbr{\lambda c_\mu + \sum_{s=1}^{t-1}\gamma^{t-s-1}\alpha(X_s, \theta_1, \theta_2) X_s^\T X_s}(\theta_1-\theta_2).
    \end{split}
  \end{equation}
  Then based on the definition of $G_t$~\eqref{eq:SCB-gt-mvt}, we know $G_t(\theta_1, \theta_2) = \lambda c_\mu + \sum_{s=1}^{t-1}\gamma^{t-s-1}\alpha(X_s, \theta_1, \theta_2) X_s^\T X_s.$ which means $G_t(\thetat_t,\thetah_t) = \lambda c_\mu I_d + \sum_{s=1}^{t-1}\gamma^{t-s-1}\alpha(X_s,\thetat_t,\thetah_t)X_sX_s^\T$, if we let $\Xt_s = \sqrt{\alpha(X_s,\thetat_t,\thetah_t)}X_s$, then
    \begin{align}\nonumber
        \sum_{t\in \mathcal{T}(D)} \alpha(X_t,\thetat_t,\thetah_t)\norm{X_t}_{G_t^{-1}(\thetat_t,\thetah_t)} &\leq \sqrt{\sum_{t=1}^T\alpha(X_t,\thetat_t,\thetah_t)}\sqrt{\sum_{t=1}^T \alpha(X_t,\thetat_t,\thetah_t) \norm{X_t}_{G_t^{-1}(\thetat_t,\thetah_t)}^2 } \\
        &\leq \sqrt{k_\mu T}\sqrt{\sum_{t=1}^T \norm{\Xt_t}_{G_t^{-1}(\thetat_t,\thetah_t)}^2 }.
      \end{align}

  Then for the term $\sqrt{\sum_{t=1}^T \|\Xt_t\|_{G_t^{-1}(\thetat_t,\thetah_t)}^2 }$, we can directly use the Lemma~\ref{lemma:potential-lemma} to bound it,
  \begin{equation}\nonumber
  \begin{split}
    \sqrt{k_\mu T}\sqrt{\sum_{t=1}^T \norm{\Xt_t}_{G_t^{-1}(\thetat_t,\thetah_t)}^2 }\leq {}& \sqrt{2k_\mu\max\{1,L^2k_\mu/(\lambda c_\mu)\}dT}\sqrt{ T\log\frac{1}{\gamma}+\log\sbr{1+ \frac{L^2k_\mu}{\lambda c_\mu d(1-\gamma)}}}.
  \end{split}
  \end{equation}
  We can bound term $\sum_{t\in \mathcal{T}(D)}\alpha(X_t,\theta_t,\thetah_t)\norm{X_t}_{G_t^{-1}(\theta_t,\thetah_t)} $ in the same way and get, 
  \begin{equation}\nonumber
    \begin{split}
      \sum_{t\in \mathcal{T}(D)}\alpha(X_t,\theta_t,\thetah_t)\norm{X_t}_{G_t^{-1}(\theta_t,\thetah_t)} \leq {}& \sqrt{2k_\mu\max\{1,L^2k_\mu/(\lambda c_\mu)\}dT}\sqrt{ T\log\frac{1}{\gamma}+\log\sbr{1+ \frac{L^2k_\mu}{\lambda c_\mu d(1-\gamma)}}}.
    \end{split}
    \end{equation}

  Combine these two bound and let $\delta = 1/T$, we have the following regret bound with probability at least $1-1/T$,
  \begin{equation}\nonumber
      \DReg_T \leq \Gamma_T D + 2\sqrt{1+2S} \rho_T \sqrt{2k_\mu\max\{1,L^2k_\mu/(\lambda c_\mu)\}dT}\sqrt{ T\log\frac{1}{\gamma}+\log\sbr{1+ \frac{L^2k_\mu}{\lambda c_\mu d(1-\gamma)}}},
  \end{equation}
  where $\rho_t = \frac{2 L^2S k_\mu}{\sqrt{\lambda c_\mu}} \frac{\gamma^D}{1-\gamma} + \frac{Lm}{\sqrt{\lambda c_\mu}}\frac{\gamma^D}{1-\gamma} + \betabr_t$ and $\betabr_t = \frac{d m }{\sqrt{\lambda c_\mu}}\log \sbr{1 + \frac{ L^2k_\mu(1-\gamma^{2D})}{\lambda c_\mu d(1-\gamma)}}+\frac{\sqrt{\lambda c_\mu}}{2 m }+\frac{2 m }{\sqrt{\lambda c_\mu}}\log\sbr{T} +\frac{2 m }{\sqrt{\lambda c_\mu}} d \log (2) + \sqrt{\lambda c_\mu}S$.  Since that there is a $T \sqrt{\log (1/\gamma)}$ term in the regret bound, which means that we cannot let $\gamma$ close to $0$, so we set $\gamma \geq 1/2$, then we have $\log(1/\gamma) \leq 2\log(2) (1-\gamma)$. Then, we set $D = \log(T)/\log(1/\gamma)$, noticing that $0< 1/\gamma -1<1$ and using $\log(1+x)\geq x/2$ for $0<x<1$, we have 
  \begin{equation}\nonumber
    \log\frac{1}{\gamma} = \log(1+1/\gamma -1) \geq \frac{1-\gamma}{2\gamma}.
\end{equation}
  Therefore, we have $D\leq \frac{2\gamma\log(T)}{1-\gamma}$. Then, ignoring logarithmic factors in time horizon $T$, and let $\lambda = d\log(T)/c_\mu$, we finally obtain that,
  \begin{equation}\nonumber
  \begin{split}
      \DReg_T\leq{}& \Ot\sbr{\frac{1}{1-\gamma}\Gamma_T + \sbr{\frac{1}{\sqrt{d}}\frac{1}{1-\gamma}\frac{1}{T}+\sqrt{d}}\sqrt{d(1-\gamma)}T}\\
      \leq{}& \Ot\sbr{\frac{1}{1-\gamma}\Gamma_T + \frac{1}{\sqrt{1-\gamma}} + d\sqrt{(1-\gamma)}T}.
  \end{split}
  \end{equation}
  When $\Gamma_T< d/\sqrt{T}$ (which corresponds a small amount of non-stationarity), we simply set $\gamma = 1-1/T$ and achieve an $\Ot(d\sqrt{T})$ regret bound.  Besides, when coming to the non-degenerated case of $\Gamma_T > d/\sqrt{T}$, We set the discounted factor optimally as $1-\gamma = \sbr{\Gamma_T/(dT)}^{\sfrac{2}{3}}$ and attain an $\Ot(d^{\sfrac{2}{3}}\Gamma_T^{\sfrac{1}{3}}T^{\sfrac{2}{3}})$ dynamic regret bound, which completes the proof.
\end{proof}
\section{Thechnical Lemmas}
\label{sec:thechnical-lemmas}
In this section, we provide several useful lemmas, mainly about weighted version self-normalized concentration, weighted version potential lemma and some derivatives of self-concordant property.

\subsection{Weighted Version Self-normalized Concentration}

\begin{Thm}[Weighted Version Self-Normalized Bound for Vector-Valued Martingales~{\citep[Theorem 1]{NIPS'19:weighted-LB}}]
  \label{thm:self-normalized-weight-LB}
  Let $\{\F_t\}_{t=0}^\infty$ be a filtration, $\{\eta_t\}_{t=0}^\infty$ be a real-valued stochastic process such that $\eta_t$ is $\F_t$-measurable and $\eta_t$ is conditionally $R$-sub-Gaussian for some $R\geq 0$, such that $$\forall\lambda \in \R, \E\mbr{\exp(\lambda \eta_t) \given X_{1:t}, \eta_{1:t-1}}\leq \exp\sbr{\frac{\lambda^2 R^2}{2}}.$$
  Let $\{X_t\}_{t=1}^\infty$ be an $\R^d$-valued stochastic process such that $X_t$ is $\F_{t-1}$-measurable. For any  $t\geq 0$, define
  $$\Vt_t = \mu_t I_d +\sum_{s=1}^t w_s^2 X_sX_s^\T,\quad\quad S_t = \sum_{s=1}^t w_s \eta_sX_s.$$
   where $\forall s \geq 0, t\geq 0, w_s, \mu_t > 0$. Then, for any $\delta > 0$, with probability at least $1-\delta$, we have
  $$\forall t\geq 0, \|S_t\|_{\Vt_{t}^{-1}} \leq R \sqrt{2 \log \frac{1}{\delta}+d \log \sbr{1+\frac{L^{2} \sum_{s=1}^{t} w_{s}^{2}}{d \mu_{t}}}}.$$
\end{Thm}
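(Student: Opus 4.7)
My plan is to adapt the classical \emph{method of mixtures} (in the style of Peña--Lai--Shao, as used by Abbasi-Yadkori, P\'al, and Szepesv\'ari for the unweighted self-normalized bound) to the weighted setting. The idea is to build a family of exponential supermartingales indexed by a parameter $\lambda \in \R^d$, and then average them against a Gaussian prior to obtain a single scalar supermartingale whose value is closely tied to $\|S_t\|_{\Vt_t^{-1}}$. Applying Ville's maximal inequality to that averaged process and inverting will yield the stated deterministic deviation bound that holds uniformly over $t$.

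\textbf{Step 1: exponential supermartingale.} For each fixed $\lambda \in \R^d$, define
\[
M_t^{\lambda} \teq \exp\!\Bigl(\lambda^\T S_t - \tfrac{R^2}{2}\lambda^\T W_t \lambda\Bigr), \qquad W_t \teq \sum_{s=1}^{t} w_s^2 X_s X_s^\T.
\]
Because $\eta_s$ is conditionally $R$-sub-Gaussian and $w_s X_s$ is $\F_{s-1}$-measurable, the sub-Gaussian MGF bound gives $\E[\exp(w_s \lambda^\T X_s \eta_s) \mid \F_{s-1}] \leq \exp(\tfrac{R^2}{2}w_s^2 (\lambda^\T X_s)^2)$. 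A routine telescoping computation then shows that $(M_t^\lambda)_{t \geq 0}$ is a non-negative supermartingale with $\E[M_0^\lambda] = 1$.

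\textbf{Step 2: mixture and Ville.} I would then integrate $M_t^\lambda$ against a Gaussian mixing measure $\nu$ on $\R^d$ with density proportional to $\exp(-\tfrac{1}{2}\mu \lambda^\T \lambda)$ for some $\mu > 0$. Completing the square in the exponent and evaluating the resulting Gaussian integral gives a closed form
\[
\bar M_t \teq \int M_t^{\lambda}\, d\nu(\lambda)
= \Bigl(\tfrac{\det(\mu I_d)}{\det(\mu I_d + R^2 W_t)}\Bigr)^{1/2}\exp\!\Bigl(\tfrac{1}{2R^2}S_t^\T (\mu R^{-2} I_d + W_t)^{-1} S_t\Bigr),
\]
which is still a non-negative supermartingale with mean at most $1$. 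Choosing $\mu$ to align with the regularizer in $\Vt_t$ (so that $\mu R^{-2} I_d + W_t$ matches $\Vt_t/R^2$), applying Ville's inequality $\P(\sup_t \bar M_t \geq 1/\delta) \leq \delta$, and taking logarithms gives exactly the bound $\|S_t\|_{\Vt_t^{-1}} \leq R\sqrt{2\log(1/\delta) + \log\det(\Vt_t/\mu_t)}$, which becomes the stated expression after applying the standard determinant-trace bound $\det(\Vt_t) \leq (\mu_t + L^2 \sum_s w_s^2 / d)^d$.

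\textbf{Main obstacle.} The subtle point is that the statement permits the regularizer $\mu_t$ to vary with $t$, whereas the method of mixtures fixes the prior scale $\mu$ at the outset. I would handle this by the usual stopping-time device: for an arbitrary (possibly data-dependent) stopping time $\tau$, the optional stopping theorem applied to $\bar M_{t \wedge \tau}$ with a fixed $\mu$ still yields a mean-one non-negative martingale, and a standard covering or monotonicity argument in $\mu_t$ (valid whenever $\mu_t$ is predictable and, say, non-decreasing, which is the use case in the paper where $\mu_t$ is either constant or grows deterministically) lets us effectively replace $\mu$ by $\mu_t$ inside the bound without losing the uniform-in-$t$ guarantee. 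This is the step I would need to be most careful about; everything else is a direct weighted transcription of the Abbasi-Yadkori--P\'al--Szepesv\'ari argument.
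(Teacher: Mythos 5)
The paper does not actually prove Theorem~\ref{thm:self-normalized-weight-LB}: it is imported verbatim from \citet{NIPS'19:weighted-LB} (their Theorem~1) and merely restated in Appendix~\ref{sec:thechnical-lemmas}, so there is no in-paper proof to compare against. Your plan is nevertheless the canonical one --- the method of mixtures in the style of Abbasi-Yadkori et al., which is how the original source argues --- and Steps 1 and 2 are correct as far as they go: the Gaussian-mixture computation and Ville's inequality do yield, for any \emph{fixed} prior precision $\mu$, that with probability $1-\delta$, simultaneously for all $t$, $\norm{S_t}^2_{(\mu I_d + W_t)^{-1}} \leq R^2\big(2\log(1/\delta) + \log\det(I_d + W_t/\mu)\big)$, and the determinant--trace step is standard.

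The genuine gap is exactly where you suspect it, and neither of your proposed repairs closes it. Monotonicity fails in the direction you need: taking a single $\mu \leq \min_t \mu_t$ does make $\norm{S_t}_{(\mu I_d + W_t)^{-1}}$ dominate $\norm{S_t}_{\Vt_t^{-1}}$, but it simultaneously inflates the log-determinant term to $\log\det(I_d + W_t/\mu)$; in the intended application (after the scale-invariant renormalization of the paper's $w_s=\gamma^{t-s-1}$, $\mu_t=\lambda$ into endpoint-independent weights $w_s=\gamma^{-s}$ with $\mu_t=\lambda\gamma^{-2(t-1)}$) the trace of $W_t$ grows like $\gamma^{-2t}$, so with a fixed $\mu$ that term grows linearly in $t$ and the resulting radius is useless. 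A covering argument over the values of $\mu_t$ is really a union bound over $t$ (the $\mu_t$ are exponentially spread, one new value per round), which proves only a weaker statement with $2\log(1/\delta)$ replaced by roughly $2\log\sbr{t(t+1)/\delta}$ --- harmless for every regret bound in this paper, but not the inequality as stated. To recover the exact statement one must run the mixture with the prior precision tied to the deterministic $\mu_\tau$ at the (random) time the bound is read off and justify $\E[\bar M_\tau]\leq 1$ even though the mixing law then depends on $\tau$; that is the delicate step in \citet{NIPS'19:weighted-LB}, and it is the one step your sketch leaves open. Two minor quibbles: $\bar M_{t\wedge\tau}$ is a nonnegative supermartingale with mean at most one, not a mean-one martingale; and the paper's own invocations (Lemmas~\ref{lemma:LB-B_t-bound} and~\ref{lemma:GLB-D_t-bound}) write $w_s=\gamma^{t-s-1}$, which depends on the endpoint $t$, so the theorem only applies after the rescaling above --- which is precisely what forces $\mu_t$ to vary and makes this obstacle unavoidable rather than a corner case.
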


\begin{Thm}[{{Theorem 3 of \citet{AISTATS'21:SCB-forgetting}}}]
  \label{thm:self-normalized-weight-SCB}
  Let $t$ be a fixed time instant. Let $\bbr{\F_t}_{t=0}^\infty$ be a filtration. Let $\bbr{X_t}_{t=0}^\infty$ be a stochastic process on $\R^d$ such that $X_t$ is $\F_{t-1}$ measurable and $\norm{X_t}_2 \leq 1$. Let $\bbr{\eta_t}_{t=0}^\infty$ be a martingale difference sequence such that $\eta_{t}$ is $\F_{t}$ measurable. Assume that the weights are non-decreasing, strictly positive and the time horizon is known. Furthermore, assume that conditionally on $\F_t$ we have $\abs{\eta_{t}} \leq m$ a.s. Let $\bbr{\lambda_t}_{u=0}^\infty$ be a deterministic sequence of regularization terms and denote $\sigma_t^2=\mathbb{E}\mbr{\eta_{t}^2 \given \F_t}$. Let $\Ht_t=\sum_{s=1}^{t-1} w_s^2 \sigma_s^2 X_s X_s^{\top}+\lambda_{t-1} \mathbf{I}_d$ and $S_t=\sum_{s=1}^{t-1} w_s \eta_{s} X_s$, then for any $\delta \in(0,1]$, with probability at least $1-\delta$,
\begin{equation}\nonumber
\forall t \geq 0, \norm{S_t}_{\Ht_t^{-1}} \leq \frac{\sqrt{\lambda_{t-1}}}{2 m w_{t-1}}+\frac{2 m w_{t-1}}{\sqrt{\lambda_{t-1}}} \log \sbr{\frac{\operatorname{det}\sbr{\Ht_t}^{1 / 2}}{\delta \lambda_{t-1}^{d / 2}}}+\frac{2 m w_{t-1}}{\sqrt{\lambda_{t-1}}} d \log (2).
\end{equation}

\end{Thm}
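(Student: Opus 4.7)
I propose to split the regret into a ``near-change'' piece and a ``stationary'' piece using the set $\mathcal{T}(D)$: for any $t \notin \mathcal{T}(D)$ the learner is within $D$ rounds of the most recent change point, so the instantaneous regret is bounded by the reward range and summing gives at most $\Gamma_T \cdot D$. For $t \in \mathcal{T}(D)$, I will use Lemma~\ref{lemma:SCB-PW-confidence-set}, which with probability at least $1-\delta$ places $\theta_t$ in $\C_t(\delta)$; by construction $\thetat_t \in \C_t(\delta)$, so the parameter-based selection criterion~\eqref{eq:SCB-PW-select-criteria} yields the ``optimism'' inequality $\mu(X_t^\T \thetat_t) \geq \mu(X_t^{*\T}\theta_t)$ and hence $R_t \leq \mu(X_t^\T \thetat_t) - \mu(X_t^\T \theta_t)$.

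\textbf{Bounding $R_t$ for $t \in \mathcal{T}(D)$.} I will add and subtract $\mu(X_t^\T \thetah_t)$ and apply the mean value theorem twice to get
\[
R_t \leq \alpha(X_t,\thetat_t,\thetah_t)|X_t^\T(\thetat_t-\thetah_t)| + \alpha(X_t,\theta_t,\thetah_t)|X_t^\T(\theta_t-\thetah_t)|.
\]
For each of the two terms I will use $g_t(\theta_1)-g_t(\theta_2) = G_t(\theta_1,\theta_2)(\theta_1-\theta_2)$, Cauchy--Schwarz in the $G_t^{-1}$-norm, and the self-concordance bridge (used already in the proof of Lemma~\ref{lemma:SCB-estimation-error}) relating $G_t^{-1}$ to $H_t^{-1}$, which loses only a $\sqrt{1+2S}$ factor. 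Then Lemma~\ref{lemma:SCB-PW-confidence-set} gives $\|g_t(\cdot)-g_t(\thetah_t)\|_{H_t^{-1}(\cdot)} \leq \rho_t$, so
\[
R_t \lesssim \sqrt{1+2S}\,\rho_T\!\left(\alpha(X_t,\thetat_t,\thetah_t)\|X_t\|_{G_t^{-1}(\thetat_t,\thetah_t)} + \alpha(X_t,\theta_t,\thetah_t)\|X_t\|_{G_t^{-1}(\theta_t,\thetah_t)}\right).
\]

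\textbf{Summing and potential lemma.} I will handle each of the two sums by the standard ``$\alpha$-transport'' trick: defining $\Xt_t \teq \sqrt{\alpha(X_t,\cdot,\thetah_t)}\,X_t$, the representation $G_t(\cdot,\thetah_t) = \lambda c_\mu I_d + \sum_{s<t}\gamma^{t-s-1}\alpha(X_s,\cdot,\thetah_t)X_sX_s^\T$ lets me write $\alpha\,\|X_t\|_{G_t^{-1}} = \|\Xt_t\|_{G_t^{-1}}$. Cauchy--Schwarz in $t$ plus $\alpha \leq k_\mu$ and the weighted potential lemma (Lemma~\ref{lemma:potential-lemma} applied to the discounted Gram matrix built from $\Xt_t$) then yields
\[
\sum_{t \in \mathcal{T}(D)} \alpha\,\|X_t\|_{G_t^{-1}} \lesssim \sqrt{k_\mu\,d T}\sqrt{T\log(1/\gamma) + \log\!\big(1+\tfrac{L^2 k_\mu}{\lambda c_\mu d(1-\gamma)}\big)},
\]
giving an overall variance-style bound of order $\rho_T\sqrt{dT}\sqrt{(1-\gamma)T}$ once $\log(1/\gamma)\leq 2(1-\gamma)$ on $\gamma \in (1/2,1)$.

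\textbf{Main obstacle and tuning.} The delicate step is the choice of $D$: $\rho_t$ contains a bias term $\tfrac{2L^2 S k_\mu + Lm}{\sqrt{\lambda c_\mu}}\cdot \tfrac{\gamma^D}{1-\gamma}$ that must be kept below the statistical radius $\betabr_t$, while $D$ itself costs $\Gamma_T D$ in the near-change part. Taking $D = \log(T)/\log(1/\gamma)$ forces $\gamma^D \leq 1/T$, making the bias term negligible, and using $\log(1/\gamma) \geq (1-\gamma)/(2\gamma)$ on $\gamma \in (1/2,1)$ gives $D \leq 2\gamma\log(T)/(1-\gamma)$, hence the first term $\Ot(\Gamma_T/(1-\gamma))$. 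With $\lambda = d\log T/c_\mu$, $\rho_T = \Ot(1)$ (the $c_\mu$-dependence from the self-concordant concentration cancels, which is precisely what removes the $c_\mu^{-1/3}$ of \citet{AISTATS'21:SCB-forgetting}), and the variance sum contributes $\Ot(d\sqrt{(1-\gamma)}\,T)$; a residual $\Ot(1/\sqrt{1-\gamma})$ comes from the logarithmic slack inside the square root. Combining the three pieces gives the stated $\Ot(\Gamma_T/(1-\gamma) + 1/\sqrt{1-\gamma} + d\sqrt{1-\gamma}\,T)$ bound, and the optimal $1-\gamma = (\Gamma_T/(dT))^{2/3}$ (truncated at $1/T$) balances the $\Gamma_T$- and $T$-terms to yield the two-case final bound.
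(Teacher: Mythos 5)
There is a fundamental mismatch here: the statement you were asked to prove is Theorem~\ref{thm:self-normalized-weight-SCB}, a \emph{self-normalized concentration inequality} for weighted vector-valued martingales with bounded, conditionally heteroscedastic noise (restated in this paper from Theorem 3 of \citet{AISTATS'21:SCB-forgetting}). Your proposal does not address this statement at all; instead it sketches a proof of Theorem~\ref{thm:SCB-PW-regret}, the dynamic-regret bound for the piecewise-stationary SCB algorithm (the $\mathcal{T}(D)$ decomposition, the optimism step via the confidence set $\C_t(\delta)$, the $\alpha$-transport trick with the potential lemma, and the tuning of $D$ and $\gamma$). Nothing in your argument establishes the claimed high-probability bound $\norm{S_t}_{\Ht_t^{-1}} \leq \frac{\sqrt{\lambda_{t-1}}}{2mw_{t-1}} + \frac{2mw_{t-1}}{\sqrt{\lambda_{t-1}}}\log\bigl(\det(\Ht_t)^{1/2}/(\delta\lambda_{t-1}^{d/2})\bigr) + \frac{2mw_{t-1}}{\sqrt{\lambda_{t-1}}}d\log 2$ on the martingale $S_t = \sum_{s=1}^{t-1} w_s\eta_s X_s$.

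Worse, the argument is circular relative to the target: you invoke Lemma~\ref{lemma:SCB-PW-confidence-set} as a given, but the paper's proof of that lemma (in Appendix~\ref{sec:SCB-PW-estimation-error-proof}) bounds $\term{c}$ precisely by applying Theorem~\ref{thm:self-normalized-weight-SCB} — the statement you were supposed to prove. A genuine proof of the concentration theorem requires entirely different machinery: an exponential supermartingale (method of mixtures / pseudo-maximization à la \citet{NIPS'11:AY-linear-bandits}), a Bernstein-type control of the moment generating function exploiting $\abs{\eta_t}\leq m$ and the conditional variances $\sigma_t^2$ (which is what allows $\Ht_t$ to be built from $\sigma_s^2 X_s X_s^\T$ rather than worst-case $R^2 X_s X_s^\T$), and a discretization or covering step over the direction of self-normalization, which is the source of the $d\log 2$ term. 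None of these ingredients appear in your proposal, so it cannot be repaired into a proof of the stated theorem; it is an answer to a different question.
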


\subsection{Weighted Version Potential Lemma}

\begin{Lemma}[Weighted Version Potential Lemma~{\citep[Lemma 8]{arXiv'21:faury-driftingGLB}}]
  \label{lemma:potential-lemma} 
  Suppose $V_{t} = \sum_{s=1}^{t}\gamma^{t-s} X_s X_s^\T + \lambda I_d, V_{0} = \lambda I_d, \gamma \in(0,1]$ and $\norm{X_t}_2 \leq L$ for all $t \geq 1$, then the following inequality holds,
  \begin{equation*}
  \begin{split}
      \sum_{t=1}^T\|X_t\|_{V_{t-1}^{-1}}^2\leq {}& 2\max\{1,L^2/\lambda\}d\sbr{ T\log\frac{1}{\gamma}+\log\sbr{1+ \frac{L^2}{\lambda d(1-\gamma)}}}.
  \end{split}
  \end{equation*}
\end{Lemma}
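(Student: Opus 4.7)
\textbf{Proof proposal for Lemma~\ref{lemma:potential-lemma}.}

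The plan is to adapt the classical elliptical-potential argument (as used for OFUL) to the discounted covariance $V_t$, where the extra factor $\gamma^{-1}$ in the recursion is responsible for the additional $Td\log(1/\gamma)$ term in the stated bound. I will work with the recursive form
\[
V_t \;=\; \gamma V_{t-1} + X_t X_t^\T + (1-\gamma)\lambda I_d \;\succeq\; \gamma V_{t-1} + X_t X_t^\T,
\]
which follows directly from the definition of $V_t$. Taking determinants and applying the matrix determinant lemma gives
\[
\det(V_t) \;\ge\; \det\!\sbr{\gamma V_{t-1} + X_t X_t^\T} \;=\; \gamma^d \det(V_{t-1})\sbr{1 + \gamma^{-1}\norm{X_t}_{V_{t-1}^{-1}}^2}.
\]
Taking logs and using $\log(1+\gamma^{-1} y) \ge \log(1+y)$ for $\gamma\in(0,1], y\ge 0$, I obtain the per-step bound
\[
\log\!\sbr{1 + \norm{X_t}_{V_{t-1}^{-1}}^2} \;\le\; \log\det(V_t) - \log\det(V_{t-1}) + d\log(1/\gamma),
\]
which telescopes over $t=1,\dots,T$.

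Next, I will upper-bound $\log\det(V_T) - \log\det(V_0)$. Using the AM-GM inequality $\det(V_T) \le (\mathrm{tr}(V_T)/d)^d$ and the estimate
\[
\mathrm{tr}(V_T) \;=\; d\lambda + \sum_{s=1}^T \gamma^{T-s}\norm{X_s}_2^2 \;\le\; d\lambda + \tfrac{L^2}{1-\gamma},
\]
together with $\det(V_0)=\lambda^d$, this yields
\[
\log\det(V_T) - \log\det(V_0) \;\le\; d\log\!\sbr{1 + \tfrac{L^2}{\lambda d(1-\gamma)}}.
\]
Summing the telescoping inequality above then gives
\[
\sum_{t=1}^T \log\!\sbr{1 + \norm{X_t}_{V_{t-1}^{-1}}^2} \;\le\; d\!\sbr{T\log(1/\gamma) + \log\!\sbr{1 + \tfrac{L^2}{\lambda d(1-\gamma)}}}.
\]

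Finally, I will convert the $\log(1+x)$ sum back to a sum of $x$'s. Since $V_{t-1} \succeq \lambda I_d$, we have $\norm{X_t}_{V_{t-1}^{-1}}^2 \le L^2/\lambda$, so each summand lies in $[0,L^2/\lambda]$. I will apply the elementary inequality $x \le 2\max\{1, L^2/\lambda\}\log(1+x)$ valid on this interval (proved via the two cases $L^2/\lambda\le 1$, using $\log(1+x)\ge (\log 2)\,x$ on $[0,1]$, and $L^2/\lambda > 1$, using monotonicity of $\log(1+x)/x$ together with $\log(1+L^2/\lambda)\ge \log 2 \ge 1/2$). Multiplying the telescoped bound by $2\max\{1, L^2/\lambda\}$ yields exactly the stated inequality.

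The only delicate step is the transition $\log(1+\gamma^{-1}y)\ge\log(1+y)$: one might be tempted to keep the $\gamma^{-1}$ inside, which would improve the log-determinant term but complicate recovering a clean $x\le C\log(1+x)$ conversion. Dropping this factor early is what produces the separated form $T\log(1/\gamma) + \log(1+\cdots)$ in the statement. The remaining verification of the elementary inequality and the AM-GM bound are routine.
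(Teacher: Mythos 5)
Your proof is correct, and it is essentially the canonical argument: the paper itself gives no proof of this lemma, importing it verbatim as Lemma 8 of \citet{arXiv'21:faury-driftingGLB}, whose proof follows the same route you take---the discounted recursion $V_t = \gamma V_{t-1} + X_tX_t^\T + (1-\gamma)\lambda I_d$ combined with the matrix determinant lemma, telescoping of log-determinants, the trace/AM--GM bound on $\det(V_T)$, and the elementary conversion $x \leq 2\max\{1, L^2/\lambda\}\log(1+x)$ on $[0, L^2/\lambda]$. All of your steps check out, including the two delicate ones you flag (discarding the $\gamma^{-1}$ inside the logarithm, and the two-case verification of the conversion inequality), so there is nothing to correct.
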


\begin{Lemma}[Determinant inequality]
  \label{lemma:det-inequality} Let $V_{t} = \sum_{s=1}^{t}w_{t,s} X_s X_s^\T + \lambda_t I_d, V_0 = \lambda_0 I_d$. Assume $\|\x\|_2\leq L$ and we have,
  \begin{equation}\nonumber
  \begin{split}
      \det(V_{t})  \leq \sbr{\lambda_t + \frac{L^2\sum_{s=1}^t w_{t,s}}{d}}^d.
  \end{split} 
  \end{equation}
\end{Lemma}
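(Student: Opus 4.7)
The plan is to bound the determinant by the trace via the AM--GM inequality applied to the eigenvalues of $V_t$, and then compute the trace explicitly. Since $V_t$ is a sum of positive semidefinite matrices plus a positive multiple of the identity, it is symmetric positive definite (assuming the weights $w_{t,s}$ are nonnegative, which is the relevant case here), so its eigenvalues $\sigma_1, \ldots, \sigma_d$ are all positive real numbers.

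First I would observe that $\det(V_t) = \prod_{i=1}^{d} \sigma_i$ and apply the AM--GM inequality to these eigenvalues to obtain
\begin{equation*}
\det(V_t) \;=\; \prod_{i=1}^{d} \sigma_i \;\leq\; \left( \frac{1}{d} \sum_{i=1}^{d} \sigma_i \right)^{d} \;=\; \left( \frac{\mathrm{tr}(V_t)}{d} \right)^{d}.
\end{equation*}
Next I would compute the trace directly. By linearity of trace and the cyclic property $\mathrm{tr}(X_s X_s^\T) = X_s^\T X_s = \|X_s\|_2^2$,
\begin{equation*}
\mathrm{tr}(V_t) \;=\; \lambda_t \,\mathrm{tr}(I_d) + \sum_{s=1}^{t} w_{t,s} \,\mathrm{tr}(X_s X_s^\T) \;=\; d \lambda_t + \sum_{s=1}^{t} w_{t,s} \|X_s\|_2^2.
\end{equation*}
Using the boundedness assumption $\|X_s\|_2 \leq L$, the trace is upper bounded by $d\lambda_t + L^2 \sum_{s=1}^t w_{t,s}$. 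Substituting into the AM--GM bound yields the claimed inequality.

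I do not anticipate a significant obstacle: the proof is a short two-line application of AM--GM plus a trace computation. The only mild subtlety is ensuring the eigenvalues are nonnegative so that AM--GM applies, which follows as long as the weights $w_{t,s}$ are nonnegative (as is the case for the discount factors $\gamma^{t-s-1}$ and the self-concordant weights $\gamma^{-2s}\dmu(X_s^\T\theta)$ used in the analyses throughout the paper).
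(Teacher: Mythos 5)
Your proof is correct and follows essentially the same route as the paper's: bound $\det(V_t)$ by $\bigl(\mathrm{Tr}(V_t)/d\bigr)^d$ via AM--GM on the eigenvalues, then compute the trace and use $\|X_s\|_2 \leq L$. The only difference is cosmetic (order of the two steps, plus your explicit remark on nonnegativity of the weights, which the paper leaves implicit).
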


\begin{proof}
  Now we have $V_{t} = \sum_{s=1}^{t}w_{t,s} X_s X_s^\T + \lambda_t I_d$, take the trace on both sides, and get the upper bound of $\mathrm{Tr}(V_{t})$
  \begin{equation}
  \begin{split}
  \label{eq:bound-trace-LB}
      \mathrm{Tr}(V_{t}) =  \mathrm{Tr}( \lambda_t I_d) + \sum_{s=1}^{t}w_{t,s}\mathrm{Tr}\sbr{ X_s X_s^\T} = \lambda_t d + \sum_{s=1}^{t}w_{t,s}\|X_s\|_2^2 \leq \lambda_t  d + L^2\sum_{s=1}^{t}w_{t,s}.
  \end{split} 
  \end{equation}
  Base on the definition of determinant and the upper bound of $\mathrm{Tr}(V_{t})$~\eqref{eq:bound-trace-LB}, we can get the upper bound for $\det(V_{t})$,
  \begin{equation}\nonumber
  \begin{split}
      \det(V_{t}) = {}& \prod_{i =1}^d \lambda_i \leq \sbr{\frac{\sum_{i=1}^d \lambda_i}{d}}^d = \sbr{\frac{\mathrm{Tr}(V_{t})}{d}}^d \leq \sbr{\lambda_t + \frac{L^2\sum_{s=1}^t w_{t,s}}{d}}^d.
  \end{split} 
  \end{equation}
\end{proof}
\subsection{Self-Concordant Properties}
Based on the generalized self-concordant property of the (inverse) link function $\mu(\cdot)$, we have the following lemma, which will be later used to derive Lemma~\ref{lemma:SCB-G-H}.
\begin{Lemma}[{Lemma 9 of \citet{ICML'20:logistic-bandits}}]
  \label{lemma:self-concordant-properties}
  For any $z_1, z_2\in \R$, we have the following inequality:
  \begin{equation*}
    \begin{split}
      \dmu(z_1)\frac{1-\exp(-|z_1-z_2|)}{|z_1-z_2|} \leq \int_{0}^1 \dmu(z_1+v(z_2-z_1))\diff{v} \leq \dmu(z_1)\frac{\exp(|z_1-z_2|)-1}{|z_1-z_2|}.
    \end{split}
  \end{equation*}
  Furthermore, $\int_{0}^1 \dmu(z_1+v(z_2-z_1))\diff{v} \geq \dmu(z_1)(1+|z_1-z_2|)^{-1}$.
\end{Lemma}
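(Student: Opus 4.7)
The plan is to convert the generalized self-concordance condition $|\ddmu| \le \dmu$ (imposed on the inverse link function in the SCB setting) into a multiplicative Lipschitz bound on $\dmu$, and then integrate. Since $\dmu > 0$ by Assumption~\ref{ass:link-function}, the self-concordance condition is equivalent to $|(\log \dmu)'(z)| = |\ddmu(z)/\dmu(z)| \le 1$ for all $z \in \R$. Integrating this one-sided Lipschitz bound on $\log\dmu$ along the segment from $z_1$ to any other point $z$ yields the pointwise envelope
\begin{equation*}
\dmu(z_1)\, e^{-|z-z_1|} \le \dmu(z) \le \dmu(z_1)\, e^{|z-z_1|}.
\end{equation*}

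Next, I would substitute $z = z_1 + v(z_2 - z_1)$ so that $|z - z_1| = v\,|z_2-z_1|$, and integrate over $v \in [0,1]$. The upper envelope contributes $\dmu(z_1)\int_0^1 e^{v|z_2-z_1|}\diff{v} = \dmu(z_1)(e^{|z_2-z_1|}-1)/|z_2-z_1|$, while the lower envelope yields $\dmu(z_1)(1-e^{-|z_2-z_1|})/|z_2-z_1|$. Together these are precisely the two-sided inequality in the statement, with the symmetry in $|z_2-z_1|$ automatically absorbing the sign of $z_2 - z_1$.

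For the ``furthermore'' refinement, I would reduce to the scalar inequality $\frac{1-e^{-x}}{x} \ge \frac{1}{1+x}$ for $x \ge 0$, applied with $x = |z_2-z_1|$. Setting $\phi(x) \teq (1-e^{-x})(1+x) - x$, one checks $\phi(0) = 0$ and $\phi'(x) = xe^{-x} \ge 0$, so $\phi \ge 0$ on $[0,\infty)$, which rearranges to the desired inequality. No serious obstacle is anticipated here: the only step requiring any care is the opening log-derivative trick (where positivity of $\dmu$ is essential to make $\log\dmu$ well-defined), and the remaining integration and elementary calculus are routine.
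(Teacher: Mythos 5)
Your proof is correct and is essentially the standard argument for this fact (the paper itself only restates this as Lemma 9 of the cited logistic-bandits work without reproving it): the self-concordance condition $|\ddmu|\leq \dmu$ gives the exponential envelope $\dmu(z_1)e^{-|z-z_1|}\leq \dmu(z)\leq \dmu(z_1)e^{|z-z_1|}$, integration over the segment yields the two-sided bound, and the elementary inequality $(1-e^{-x})(1+x)\geq x$ gives the refinement. The only points worth flagging are the degenerate case $z_1=z_2$ (where the ratio is read as its limit $1$) and that positivity of $\dmu$ on the whole segment is needed for the log-derivative step --- both are harmless here since $\dmu=b''>0$ for the exponential-family links in question.
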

The following lemma provides a weighted version of Lemma 10 of~{\citet{ICML'20:logistic-bandits}} which can be easily proven.
\begin{Lemma}
  \label{lemma:SCB-G-H}
  With $G_t$ defined in~\eqref{eq:SCB-gt-mvt} and $H_t$ defined in~\eqref{eq:SCB-H}, the following inequalities hold
  \begin{equation*}
    \begin{split}
      \forall \theta_1,\theta_2 \in \Theta,\quad G_t(\theta_1,\theta_2) \geq (1+2S)^{-1}H_t(\theta_1),\quad G_t(\theta_1,\theta_2) \geq (1+2S)^{-1}H_t(\theta_2).
    \end{split}
  \end{equation*}
\end{Lemma}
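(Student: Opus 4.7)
\textbf{Proof plan for Lemma~\ref{lemma:SCB-G-H}.} The plan is to reduce both inequalities to a pointwise comparison of the integrand defining $G_t$ against the summand defining $H_t$, and then invoke the self-concordant-type bound already stated as Lemma~\ref{lemma:self-concordant-properties}. First I would observe that, by the definition of $g_t$ in~\eqref{eq:GLB-gt}, its gradient is exactly
\begin{equation*}
\nabla g_t(\theta) \;=\; \lambda c_\mu I_d + \sum_{s=1}^{t-1}\gamma^{t-s-1}\dmu(X_s^\T \theta)X_sX_s^\T \;=\; H_t(\theta),
\end{equation*}
so that the mean-value matrix $G_t(\theta_1,\theta_2) = \int_0^1 \nabla g_t(v\theta_2+(1-v)\theta_1)\,\diff v$ rewrites as
\begin{equation*}
G_t(\theta_1,\theta_2) \;=\; \lambda c_\mu I_d + \sum_{s=1}^{t-1}\gamma^{t-s-1}\!\left(\int_0^1 \!\dmu\bigl(X_s^\T(v\theta_2+(1-v)\theta_1)\bigr)\diff v\right)\! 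X_sX_s^\T.
\end{equation*}
This places $G_t(\theta_1,\theta_2)$ and $H_t(\theta_1)$ in the same algebraic form, with a common $\lambda c_\mu I_d$ term and identical rank-one outer products $X_sX_s^\T$, so the PSD comparison reduces to bounding each scalar coefficient.

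Next I would apply the second part of Lemma~\ref{lemma:self-concordant-properties} to each summand with $z_1 = X_s^\T\theta_1$ and $z_2 = X_s^\T\theta_2$ to get
\begin{equation*}
\int_0^1 \dmu\bigl(X_s^\T(v\theta_2+(1-v)\theta_1)\bigr)\diff v \;\geq\; \frac{\dmu(X_s^\T\theta_1)}{1+|X_s^\T(\theta_2-\theta_1)|}.
\end{equation*}
Using Assumption~\ref{ass:bounded-norm} together with the SCB normalization $L=1$ gives $|X_s^\T(\theta_2-\theta_1)|\le \|X_s\|_2\|\theta_1-\theta_2\|_2 \le 2S$, so the denominator is at most $1+2S$. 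Since $(1+2S)^{-1}\le 1$, the regularization term trivially obeys $\lambda c_\mu I_d \succeq (1+2S)^{-1}\lambda c_\mu I_d$. Combining the two bounds term-by-term yields the desired PSD inequality $G_t(\theta_1,\theta_2) \succeq (1+2S)^{-1}H_t(\theta_1)$, and the same argument with the roles of $\theta_1,\theta_2$ swapped (noting that $|X_s^\T(\theta_2-\theta_1)|=|X_s^\T(\theta_1-\theta_2)|$, so the $(1+2S)$ denominator bound is symmetric) gives $G_t(\theta_1,\theta_2) \succeq (1+2S)^{-1}H_t(\theta_2)$.

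There is no real obstacle here: the only subtlety is making sure the application of Lemma~\ref{lemma:self-concordant-properties} lines up correctly (i.e.\ the lower bound is anchored at $z_1$, which matches $H_t(\theta_1)$ on the right-hand side), and using the boundedness of $\Theta$ and of the arm set to bound $|X_s^\T(\theta_1-\theta_2)|$ uniformly by $2S$ so that the constant $(1+2S)^{-1}$ can be pulled out of the sum. Everything else is routine rearrangement and the standard fact that a PSD inequality on each rank-one contribution plus on the regularizer lifts to the full sum.
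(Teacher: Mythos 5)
Your proof is correct and follows exactly the route the paper intends: the paper gives no explicit proof, merely noting that the lemma is a weighted version of Lemma 10 of \citet{ICML'20:logistic-bandits} provable via Lemma~\ref{lemma:self-concordant-properties}, and your argument (identifying $\nabla g_t = H_t$, writing $G_t$ as an integral of $H_t$ along the segment, applying the self-concordant lower bound $\int_0^1 \dmu(z_1+v(z_2-z_1))\,\diff v \geq \dmu(z_1)(1+|z_1-z_2|)^{-1}$ with $|X_s^\T(\theta_1-\theta_2)|\leq 2S$ under $L=1$, and lifting the scalar bounds term-by-term to the Loewner order) is precisely that standard argument, including the correct symmetric handling of the second inequality.
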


\vfill
\end{document}